\definecolor{mydarkblue}{rgb}{0,0.08,0.45}
\renewcommand{\subset}{\subseteq}
\newcommand{\red}[1]{\textcolor{red}{#1}}
\newcommand{\blue}[1]{\textcolor{blue}{#1}}
\DeclareMathOperator*{\argmax}{arg\,max}
\DeclareMathOperator*{\unif}{Unif}
\DeclareMathOperator*{\var}{var}
\newcommand{\beq}{\begin{equation}}
\newcommand{\eeq}{\end{equation}}
\newcommand{\beqs}{\begin{equation*}}
\newcommand{\eeqs}{\end{equation*}}
\newcommand{\pistar}{\pi^\star}
\renewcommand{\AA}{\mathcal{A}}
\newcommand{\AF}{\mathfrak{A}}
\newcommand{\OO}{\mathcal{O}}
\newcommand{\BB}{\mathcal{B}}
\newcommand{\LL}{\mathcal{L}}
\newcommand{\R}{\mathbb{R}}
\newcommand{\EE}{\mathcal{E}}
\newcommand{\UU}{\mathcal{U}}
\newcommand{\MM}{\mathcal{M}}
\newcommand{\N}{\mathbb{N}}
\newcommand{\regret}{\operatorname{Reg}}
\newcommand{\E}{\mathbb{E}}
\newcommand{\XX}{\mathcal{X}}
\newcommand{\YY}{\mathcal{Y}}
\newcommand{\x}{\mathbf{x}}
\newcommand{\TT}{\mathcal{T}}
\newcommand{\1}{\mathbf{1}}
\newcommand{\<}{\left<}
\renewcommand{\>}{\right>}
\newcommand{\abs}[1]{\ensuremath{| #1 |}}
\newcommand{\floor}[1]{\lfloor #1 \rfloor}
\newcommand{\ceil}[1]{\lceil #1 \rceil}
\newcommand*\diff{\mathop{}\!\mathrm{d}}
\newcommand{\pop}{\textsf{POP}}
\newcommand{\yup}{y_{\text{up}}}
\newcommand{\ydown}{y_{\text{down}}}
\definecolor{officegreen}{rgb}{0.0, 0.5, 0.0}
\newcommand{\constantCT}{4}
\newcommand{\constantCO}{8}
\newcommand{\constantCTnew}{8}
\newcommand{\constantCOnew}{8}
\newcommand{\taub}{\pmb{\tau}}
\newcommand{\epsilontildek}{\min \left\{  2,  {  2 c   X \log (X^2 A K H /\delta ) \over n_k(x, a) }\right\}}
\newcommand{\defineHighProbT}{\left\{   \forall k, h, x, a, \tau_h, \ \sum_{y', x'} O_\star(y' | x') \left( T_\star (x' | x, a) - \hat T_k(x' | x, a) \right) \alpha^{\pi_*}_{h + 1, \tau_h'}(x') \leq   \sqrt{C_T   H^3 \log (YXA HK  /\delta ) \over n_k(x, a) } \right\}}
\newcommand{\defineHighProbTab}{\left\{ \forall k \in [K], x \in \XX, \ \| O_\star(\cdot | x) - \hat O_k(\cdot | x)  \|_1 \leq \sqrt{ C_O Y \log ( Y X K H /\delta ) \over n_k(x)}  \right\}}
\newcommand{\defineHighProbC}{\left\{   \forall k, x, a, x', \  \hat T_k(x' | x, a) - T_\star(x' | x, a) = { T_\star(x' | x, a) \over 2 c } + { 2c \log(X^2A KH /\delta ) \over n_k(x, a)}    \right\}}
\newcommand{\precConst}{C}
\newcommand{\precConstValue}{21}
\newcommand{\setshort}{HOMDP\xspace}
\newcommand{\setshorts}{HOMDPs\xspace}
\newcommand{\setting}{Hindsight Observable Markov Decision Process\xspace}
\newcommand{\fnalg}{HOP-V\xspace}
\newcommand{\fnalglong}{Hindsight OPtimism with Version spaces\xspace}
\newcommand{\tabalg}{HOP-B\xspace}
\newcommand{\tabalglong}{Hindsight OPtimism with Bonus\xspace}
\newcommand{\algcomment}[1]{\textcolor{blue}{\footnotesize{\texttt{\textbf{//
          #1}}}}}
\newcommand{\otil}{\widetilde{\OO}}
\newcommand{\scalemath}[2]{\scalebox{#1}{$\displaystyle #2$}}
\theoremstyle{plain}
\newtheorem{theorem}{Theorem}[section]
\newtheorem{proposition}[theorem]{Proposition}
\newtheorem{lemma}[theorem]{Lemma}
\newtheorem{corollary}[theorem]{Corollary}
\theoremstyle{definition}
\newtheorem{definition}[theorem]{Definition}
\newtheorem{assumption}[theorem]{Assumption}
\theoremstyle{remark}
\icmltitlerunning{Learning in POMDPs is Sample-Efficient with Hindsight Observability}
\begin{document}

\twocolumn[
\icmltitle{Learning in POMDPs is Sample-Efficient with Hindsight Observability}

\icmlsetsymbol{equal}{*}

\begin{icmlauthorlist}
\icmlauthor{Jonathan N. Lee}{stan,goog}
\icmlauthor{Alekh Agarwal}{goog}
\icmlauthor{Christoph Dann}{goog}
\icmlauthor{Tong Zhang}{goog,hkust}
\end{icmlauthorlist}

\icmlaffiliation{stan}{Stanford University}
\icmlaffiliation{goog}{Google Research}
\icmlaffiliation{hkust}{HKUST}

\icmlcorrespondingauthor{Jonathan N. Lee}{jnl@stanford.edu}

\icmlkeywords{Machine Learning, ICML}

\vskip 0.3in
]
\theoremstyle{definition}
\newtheorem{example}{Example}

\printAffiliationsAndNotice{}  %

\begin{abstract}
POMDPs capture a broad class of decision making problems, but hardness results suggest that learning is intractable even in simple settings due to the inherent partial observability.
However, in many realistic problems, more information is either revealed or can be computed during some point of the learning process.
Motivated by diverse applications ranging from robotics to data center scheduling, we formulate a \setting (\setshort) as a POMDP where the latent states are revealed to the learner in hindsight and only during training.  
We introduce new algorithms for the tabular and function approximation settings that are provably sample-efficient with hindsight observability, even in POMDPs that would otherwise be statistically intractable. We give a lower bound showing that the tabular algorithm is optimal in its dependence on latent state and observation cardinalities.
\end{abstract}

\section{Introduction}
\label{sec:intro}

Sequential decision making settings where the learning agent only receives incomplete observations of its environmental state are typical in diverse practical scenarios, such as control of physical systems~\citep{thrun2000probabilistic}, dialogue and recommendation systems~\citep{young2013pomdp,shani2005mdp}, and decision making in educational or clinical settings~\citep{ayer2012or}. Typically studied within the framework of a Partially Observable Markov Decision Process (POMDP), classical literature on such problems provides hardness results on sample and computationally efficient learning, even in simple settings with small action, observation and state spaces, in stark contrast to the MDP setting where the state is fully observable. Fueled by this gap, there is a body of literature that characterizes observability conditions when the sequence of observations reveals enough information about the latent states to permit sample-efficient learning.
In this paper, we ask if the motivating practical applications sometimes allow a more informative sensing of the underlying state for some part of the learning process. We formulate a novel learning setting called a \setting (\setshort), and provide learning algorithms that are significantly more sample-efficient than 
those for general POMDPs.

For motivation, let us consider robotic control, where we want our robot to sense its state using a relatively cheap camera sensor upon deployment. However, during training, it is common to allow a more expensive sensing of the state, using simulators, higher-fidelity cameras, lidars, or even full-fledged motion capture setups~\citep{pinto2017asymmetric,pan2017agile,chen2020learning}. 
In a completely different style of scenarios, \citet{sinclair2022hindsight} discuss problems such as scheduling in a data center, where the unknown lifetime of a job creates partial observability of the state when the job is scheduled. This partial observability is resolved when the job actually concludes. While the two examples are very different, they share a similarity. The learner needs a decision making policy to act based on partial observations alone, due either to resource/sensor constraints upon deployment or to fundamental lack of information at the time of decision. However, the underlying state eventually gets revealed, either intrinsically, or due to extrinsic measurements during the training process. We refer to this eventual observation of the latent environment state as \emph{hindsight observability}, 
and study learning settings where the learner acts based on partial state observations, but observes the true latent states eventually upon the conclusion of the trajectory.

We start by noting that learning in a \setshort remains considerably challenging in comparison with MDPs, as the learner's policy needs to depend on observations during deployment (e.g. robotics, scheduling) and sometimes even during training (e.g. scheduling).
Hence we cannot use MDP learning techniques directly. At the same time, the \setshort model eliminates the identifiability or observability conditions that are crucial to success in POMDP learning, since the hindsight observation of the latent state allows us to associate latent states and corresponding observations, albeit with a delay. This makes the \setshort an intermediate step between the complexity of MDPs and POMDPs, which is practically prevalent 
as our earlier examples suggest.%

\paragraph{Our contributions} In addition to formalizing the \setshort framework, and showing its broad applicability across diverse settings, such as sim-to-real robotics, high-frequency control, meta-learning and scheduling problems in Appendix~\ref{app:more_applications}, we make the following key contributions:

\begin{enumerate}[nosep,leftmargin=12pt]
\item When the latent states and observations are both finite, we provide an algorithm \tabalg, which finds an $\epsilon$-optimal policy using at most $\otil\left(\frac{XYH^5 + XAH^4}{\epsilon^2}\right)$ trajectories, where the \setshort contains $X$ latent states, $Y$ observations, $A$ actions, and the horizon is $H$. In contrast with standard POMDP learning results, there is no observability-related parameter in our bound.
\item We show an $\Omega(\frac{XY}{\epsilon^2})$ lower bound, meaning that \tabalg scales optimally with latent states and observations.
\item We develop a general algorithm, \fnalg, which allows function approximation for both latent states and observations, and allows representation learning in the latent state space. The sample complexity of \fnalg depends on the statistical complexity of function classes used to learn latent state transitions and emissions, along with a rank parameter of the latent state transitions.  Again, there are no observability conditions in contrast with standard POMDP results. %
\end{enumerate}

\section{Related Work}

There has been significant progress in understanding the sample efficiency of reinforcement learning in the fully observable setting of MDPs. For tabular MDPs (finite states and actions), upper and lower bounds for sample complexity and regret are well known~\citep{auer2008near,dann2015sample,osband2016lower,azar2017minimax,dann2019policy, zanette2019tighter}. Similar results have been established for MDPs that satisfy certain structural conditions, enabling function approximation~\citep{jiang2017contextual, sun2019model,jin2020provably,agarwal2020flambe,du2021bilinear,jin2021bellman,foster2021statistical,agarwal2022model}.

Relative to MDPs, the sample complexity of reinforcement learning in POMDPs is less understood. Classical hardness results suggest learning in POMDPs can be both computationally and statistical intractable even for simple settings~\citep{krishnamurthy2016pac}. This hardness has spurred researchers to identify conditions under which sample efficient learning is still possible in POMDPs. Block MDPs~\citep{krishnamurthy2016pac,du2019provably} and decodable MDPs~\citep{efroni2022provable} are special classes of POMDPs in which the current observation (or last few observations) can exactly decode the current latent state. Several works study more general observability conditions beyond decodability~\citep{azizzadenesheli2016reinforcement,guo2016pac,jin2020sample,golowich2022planning,liu2022partially,liu2022optimistic,uehara2022computationally}. %
Sample complexity bounds under these conditions often depend crucially on parameters that quantify the degree of observability.
\citet{liu2022optimistic,zhan2022pac,zhong2022posterior} provide similar conditions for general predictive state representations (PSRs).
Although aimed at the same objective of learning policies for partially observable settings, our work uses hindsight observability to circumvent any additional parameters or assumptions on the emission function.

Empirically, a number of works successfully leverage latent state information during training to improve sample efficiency. \citet{pinto2017asymmetric,baisero2021unbiased} study \textit{asymmetric} actor-critic algorithms where the critic uses the latent state while the actor uses  observations, allowing the learned policy to later interact with only observations. \citet{pan2017agile,chen2020learning,warrington2021robust} use distillation-based approaches where they train an expert policy on latent states and then later imitate it with an observation-based policy. Similar settings also appear as privileged information~\citep{kamienny2020privileged} or resource-constrained RL~\citep{regatti2021offline}.
However, these prior works do not address sample complexity and exploration.

 Motivated by resource allocation, \citet{sinclair2022hindsight} study a similar hindsight problem, where the unobserved part of the latent state is not affected by the learner's actions, and dynamics are fully known in hindsight. Hence, they study a planning problem in hindsight with no need for exploration, unlike the general \setshort setting considered here.

\citet{kwon2021rl,zhou2022horizon} study a latent MDP model, where the latent state contains an additional identifier of the active MDP for each episode, and the identifier is revealed in hindsight during training. Our setting is significantly more general, but shares similar motivation. We compare our bounds in Section~\ref{sec::finite}.

\section{\setting}\label{sec::homdp}
The underlying model of the \setshort setting is the same as a POMDP; the difference lies in what information is revealed to the learner and when. We first review the relevant quantities of a POMDP and subsequently introduce the hindsight observability and learning protocol in a \setshort.

\subsection{Preliminaries}
For $n \in \N$, we use $[n]$ to denote the set $\{1, 2, \ldots, n\}$. For a set $S$, $\Delta(S)$ denotes the set of (appropriately defined) densities over $S$.  For $h \in \N$, we use $a_{1:h}$ to denote $(a_{1}, \ldots, a_h)$.

 We consider an episodic partially observable Markov decision process (POMDP) $\MM$ with episode length $H$, latent state space $\XX$, observation space $\YY$, and action space $\AA$. When these are finite, we denote their respective cardinalities as $X := \abs{\XX}$, $Y := \abs{\YY}$, and $A := \abs{\AA}$. An initial latent state $x_1$ is sampled from a fixed and known initial state distribution $\rho \in \Delta(\XX)$. The process evolves according to transition function $T_\star: \XX\times \AA \to \Delta(\XX)$, acting on the latent states. 
 When the learner visits a latent state $x$, the environment generates an observation $y \in \YY$ according to the conditional emission function $O_\star : \XX\to \Delta(\YY)$. In particular, in each episode, a (latent) trajectory $\bar \tau = (x_1, y_1, a_1, \ldots, x_H, y_H, a_H, x_{H + 1})$ is generated where $x_1 \sim \rho(\cdot)$, $x_{ h + 1} \sim T_\star(\cdot | x_h, a_h)$, $y_h \sim O_\star(\cdot | x_h)$, and  the learner selects $a_{1:H}$. We include $x_{H + 1}$ (from taking $a_H$ in $x_H$) as a latent variable for convenience. When referring to an (observed) trajectory of just observations and actions, we use $\tau = (y_1, a_1, \ldots, y_H, a_H)$. 
 We assume there is a known deterministic reward function $r: \XX \times \AA \to [0, 1]$. Our results can be generalized readily to stochastic, observation-dependent rewards. 

As is standard in POMDPs, we  consider the setting where the learner interacts with the environment by specifying a \textit{history-dependent} policy $\pi  \ : \ (\YY \times \AA)^* \times \YY \to \Delta(\AA)$ which takes as input a (variable) $h$-length history of observations $y_{1:h}$ and $(h - 1)$-length history of actions $a_{1: h - 1}$ and outputs a distribution over actions. That is, the learner's policy does not get to observe any of the latent states $x_{1:H+1}$ during execution of $\pi$.  For conciseness, we  denote the partial histories as $\tau_h := (y_{1:h}, a_{1:h - 1})$ and $\bar \tau_h := (y_{1:h}, x_{1:h}, a_{1:h - 1})$, which includes the observation $y_h$ and latent state $x_h$ (if applicable) at step $h$.  

For a policy $\pi$, we denote the expected cumulative reward over an episode by
\begin{align}\label{eq::policy-value}
	v(\pi) = \E_{\pi} \left[  \sum_{h \in [H]} r(x_h, a_h) \right],
\end{align} 
where $\E_\pi$ is the expectation taken over trajectories in the POMDP under policy $\pi$.

\subsection{ Hindsight observability }

\begin{figure}
    \centering
    \includegraphics[width=3in]{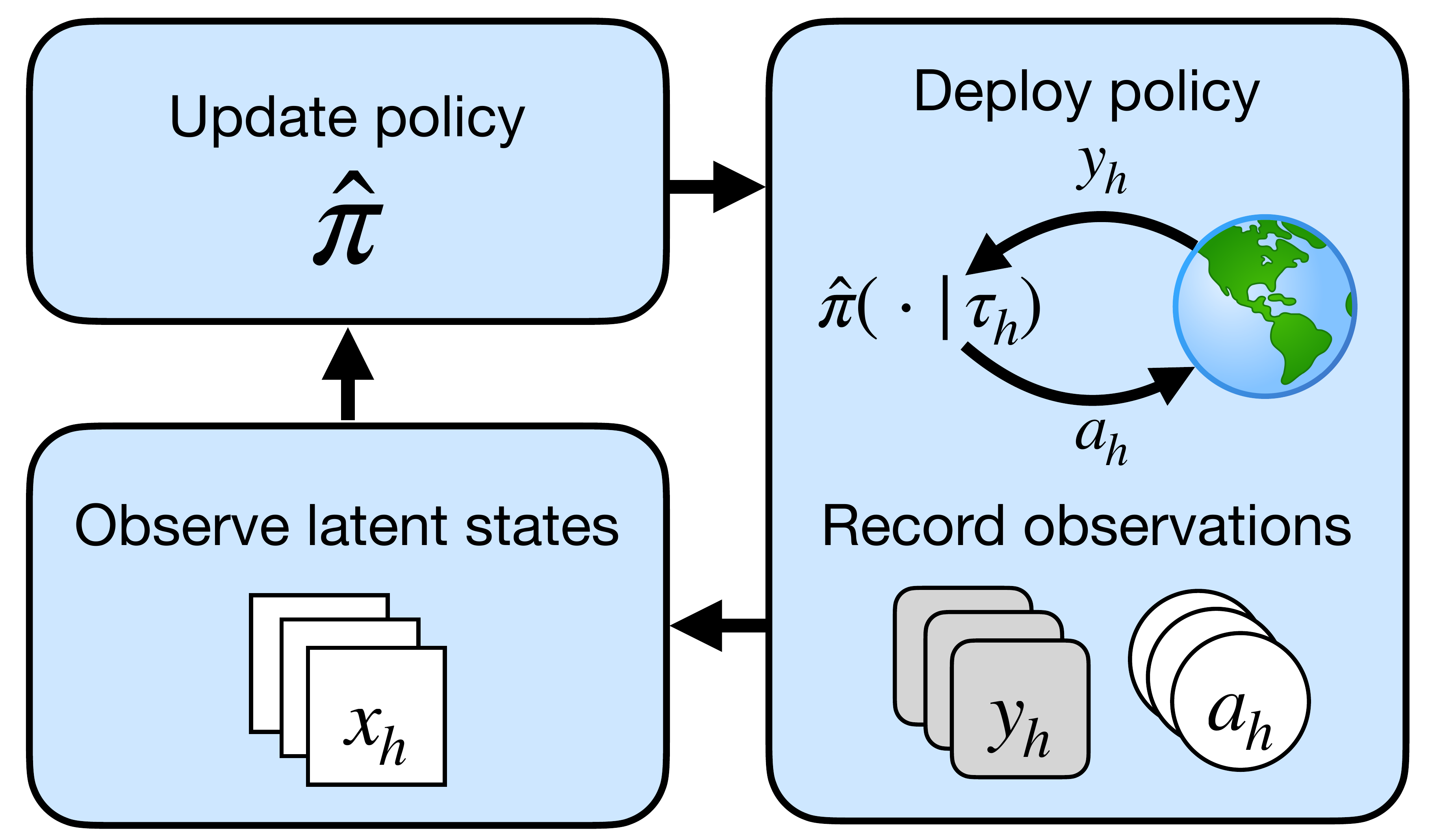}
    \caption{A \setshort model at train time. The learned history-dependent policy $\hat \pi$ is deployed and takes actions $a_{1:H}$ using only observations $y_{1:H}$. After deployment, the environment reveals the latent states $x_{1:H+1}$. The policy updates with both the latent states $x_{1:H + 1}$ and observations $(y_{1:H}, a_{1:H})$. At test time, only a history-dependent policy is deployed.
    }
    \label{fig::problem}
\end{figure}
 Now we formally introduce the \setshort setting and describe the interaction protocol, i.e., how the learner interacts with the environment and receives information. We also illustrate this description in the accompanying Figure~\ref{fig::problem}. Along the way, we highlight differences with the standard POMDP and MDP settings. 
There are two phases in the \setshort: train time and test time.

During train time, the learner interacts with the environment over $K \in \N$ rounds (episodes).
At any given round $k \in [K]$, the learner produces a history-dependent policy $\hat \pi_k$ which is deployed in the partially observable environment as if the learner is interacting with a standard POMDP. During execution of the episode $k$ at time $h \in [H]$, the environment reveals only the current observation $y_h$ to the learner. The policy can thus base its decision $a_{h}$ on only the partial history $\tau_{h} = (y_{1:h}, a_{1:h -1})$ of interactions in that episode. Once the $k$th episode is completed, the latent states $x_{1:H+1}$ are revealed to the learner in hindsight, hence the terminology \textit{hindsight observability}. The learner can then generate a new policy $\hat \pi_{k + 1}$ using information from $(x_{1:H+1}, y_{1:H}, a_{1:H})$ as well as that of all previous episodes. This is the key difference between \setshorts and standard POMDPs where the latent states are never revealed and the learner generates the policy from only previous observations, actions, and rewards alone. In MDPs, on the other hand, the latent state is observed instantaneously and the policy can directly map a latent state to an action.

The train time phase may be followed by a test time phase where a single history-dependent policy $\hat \pi$ is deployed but the learner does not observe latent states or update the policy after committing to $\hat \pi$. To determine $\hat \pi$, the learner can use all of the information collected over the $K$ episodes at train time, including the latent states observed in hindsight. The quantity $v(\hat \pi)$ evaluates the quality of $\hat \pi$.
We let $\pistar$ denote the optimal observation-based policy maximizing $v(\pistar)$ and measure the sub-optimality of the learner's policy $\hat \pi$ by the difference $v(\pistar) - v(\hat \pi)$. Again, in contrast with an  MDP, a \setshort never reveals the latent state at test time.

We are primarily concerned with PAC sample complexity bounds controlling the suboptimality of $\hat \pi$, but some algorithms also address the regret problem at train time where the regret for all $K$ rounds is measured as
\begin{align*}
	\regret(K) = \sum_{k \in [K]} v(\pistar) - v(\hat \pi_k).
\end{align*}

\begin{example}[Sim-to-real robotics]
 In \textit{sim-to-real robotics} \citep{pinto2017asymmetric}, one trains an image-based policy in a simulator with access to the underlying states. The goal is to deploy the image-based policy in the real world. $\XX$ is the set of robot and object positions and poses which are observable in the simulator during training.  $\YY$ is the set of image observations from a camera, which is the only modality available at test time. $\XX$ and $\YY$ are both continuous and high dimensional. $\AA$  is the set of control inputs the robot can take such as joint torques. Note that the latent states are available without delay at train time in this example. However, we later show that this variant of the problem interestingly does not yield significant statistical advantage in theory because the desired policy at test time is still history-dependent (see the discussion following Theorem~\ref{thm::lower-bound}). Empirically, history-dependent policies are still preferred to state-based policies even during train time despite access to the state to facilitate better sim-to-real transfer.
 \label{ex:sim-to-real}
 \end{example}
 
 \begin{example}[Data center scheduling]
 In \textit{data center scheduling}~\citep{sinclair2022hindsight}, described in the introduction, $\YY$ is the observable state of the submitted, processing, and completed jobs as well as their allocations to servers, which is available at the time of decision-making. $\XX$ is a concatenation of $\YY$ with lifetime lengths of the submitted/processing jobs and the arrival times of future jobs. This information is available, but only in hindsight. Depending on the setup, $\XX$ and $\YY$ can be relatively succinct here. $\AA$ is the set of allocation actions for currently submitted jobs. 
\end{example}

\subsection{Comparison with hardness of learning in POMDPs} 
\label{sec:pomdp-comp}
Both POMDPs and \setshorts share the use of history-dependent policies during execution at test time. However, a POMDP never reveals the association between observations and latent states, leading to a lack of identifiability and exponential in $H$ lower bounds even for simple ones \citep{krishnamurthy2016pac}.
As discussed previously, numerous recent papers \citep{liu2022partially,jin2020sample,cai2022reinforcement,liu2022optimistic,zhan2022pac} investigate observability conditions under which sample-efficient learning is possible by ensuring $O_\star$ reveals enough about the distribution of possible latent states. This yields sample complexity bounds that incur an unavoidable dependence on the minimum singular value of $O_\star$~\citep{liu2022partially}, or related parameters. 

However, settings where such observability conditions are satisfied may still preclude many practically interesting partially observable problems. Our objective in this paper is to understand to what extent the addition of hindsight observability in a \setshort can make learning in partially observable settings sample-efficient without relying on observability parameters.

\section{Learning in Finite \setshorts}\label{sec::finite}

We now turn to the design of efficient algorithms for learning in the \setshort model. We begin with the setting where the latent state space $\XX$ and observation space $\YY$ have finite cardinalities $X := |\XX|$ and $Y := |\YY|$. We introduce a new algorithm, \tabalg, which naturally extends minimax optimal results (in $X$ and $A$) for tabular MDPs to the \setshort model. Our proposed algorithm is model-based, leveraging the intuition that, provided with the latent states $x_{1:H+1}$, one should be able to estimate the transition and emission functions, $T_\star$ and $O_\star$. We start with the algorithm, before presenting the sample complexity guarantee.

\subsection{The \tabalg algorithm}

Our algorithm, \tabalglong (\tabalg) estimates the transition and emission models, and subsequently finds an optimal policy in this learned model using a reward bonus to encourage exploration. The design of bonus is a key novelty in \tabalg, relative to its MDP counterparts, as we will discuss shortly.

Before describing the algorithm, we define a planning oracle which is used in the algorithm to compute the exploration policies. Note that planning in a \setshort is identical to a POMDP, as we seek an optimal history-dependent policy. 

\begin{definition}[Optimal planner]
The POMDP planner \pop{} takes as input a transition function $T$, an emission function $O$, and a reward function $r$ and returns a policy $\pi = \pop(T, O, r)$ such that $v(\pi) = \max_{\pi'}v_{\MM(T, O, r)}(\pi')$, where $\MM(T, O, r)$ denotes the POMDP model with latent transitions $T$, emissions $O$, and reward function $r$.
\end{definition}

While it is known that planning in POMDPs is PSPACE-hard in general \citep{papadimitriou1987complexity}, there are many special classes of POMDPs for which planning is computationally efficient. Alternatively, it is possible in practice to use one of many existing approximate POMDP planners; however, this will likely weaken the subsequent theoretical guarantees of this section up to some approximation error. Regardless, this is much milder computational assumption than what is sometimes made in comparable POMDP literature~\citep{jin2020sample}.

\begin{algorithm}
\caption{\tabalglong (\tabalg)} 
\label{alg:tabular}
\begin{algorithmic}[1]
	
	\STATE \textbf{Input}: POMDP planner $\pop$.
	\STATE Initialize emission and transition models $\hat O_1, \hat T_1$.%
	\STATE Initialize  $n_1(x) = n_1(x, a) = 0$ for all $x \in \XX, a \in \AA$.
	\STATE Set bonus parameters\label{line:betas}\\
	        \mbox{$\beta_1 = \constantCT H^3 \log ( Y X A H K / \delta), \beta_2 = \constantCO Y \log ( Y X K H /\delta ).$}
			
			\FOR{ $k = 1, \ldots, K$ }
			
            \STATE \algcomment{Set reward bonuses}			

			\STATE $\epsilon_k(x, a) =  \min \left\{  \sqrt{ \beta_1   \over n_k(x, a)  },  2 H  \right\}$ \label{line::bonus-trans}
			
			\STATE  $\epsilon_k(x) = \min\left\{  \sqrt{ \beta_2   \over n_k(x)  },  2 \right\}$ \label{line::bonus-obs}
			
			\STATE $\hat r_{k}(x, a) = r(x, a) +  H\epsilon_k(x) + \epsilon_k(x, a)$
			
			\STATE \algcomment{Plan, deploy hist.-dependent policy}
			\STATE $\hat \pi_k = \pop( \hat T_k, \hat O_k, \hat r_k  )$
			
			\STATE Run $\hat \pi_k$ and observe trajectory $\tau^k = (y_{1:H}^k, a_{1:H}^k)$. 
			
			\STATE \algcomment{Hindsight observation}
			
			\STATE Observe latent states $x^k_{1:H+1} = (x^k_1, \dots, x^k_{H+1})$.

            \STATE \algcomment{Update models}

		    \STATE $n_{k + 1}(x) = \sum_{\ell \in [k], h \in [H]} \1\{ x^{\ell}_h = x \}$.
			
			\STATE $n_{k + 1}(x, a) = \sum_{\ell \in [k], h \in [H]} \1 \{ x^\ell_h = x \wedge a^\ell_h =a\} $.

			\STATE Update $\hat T_{k + 1}$ via \eqref{eq::tabular-transition-update}
			
			\STATE Update $\hat O_{k + 1}$ via \eqref{eq::tabular-emission-update}

			\ENDFOR		
		\end{algorithmic}
		
	\end{algorithm}

\tabalg operates over $K$ rounds, starting with arbitrary guesses $\hat T_1$ and $\hat O_1$ of the model. At round $k$, it computes reward bonuses based on the uncertainty in the estimates $\hat T_k$ and $\hat O_k$, which is quantified by the number of visits to each latent state $x$ and latent-state action pair $(x, a)$ from the dataset. We define these bonuses in $\epsilon_k(x)$ and $\epsilon_k(x, a)$ in lines~\ref{line::bonus-trans} and~\ref{line::bonus-obs} with parameters given in line~\ref{line:betas}. Note that the $\epsilon_k(x)$ bonus is in addition to the typical bonus in MDPs. Informally $\epsilon_k(x,a)$ captures our uncertainty in the estimation of $T_\star$, while $\epsilon_k(x)$ measures it for $O_\star$. For instance, even if we know $T_\star$, we need to visit each latent state to estimate its emission process for the subsequent planning, capturing the need for the additional $\epsilon_k(x)$ bonus.

We construct a reward function $\hat r_k$ by adding the bonuses to $r$. We then invoke the planner \pop{} using $\hat T_k$, $\hat O_k$, and the reward function $\hat r_k$ to generate an optimistic history-dependent policy $\hat \pi_k$. As we show in the proof, the estimated value of $\hat \pi_k$ under the current model over-estimates the true value of $\pistar$ with high probability.
We then deploy the optimistic policy $\hat \pi_k$ in the environment to generate a trajectory of observations $y_{1:H}$ and actions $a_{1:H}$. We further observe the latent states $x_{1:H+1}$ in hindsight. Finally, using the new information from the trajectory and in hindsight, we update the models with empirical estimates using all the past data:
\begin{align}\label{eq::tabular-transition-update}
	&\scalemath{0.95}{\hat T_{k + 1}(x' | x, a) = \hspace{-0.4cm}\sum_{\ell \in [k], h\in [H]}\hspace{-0.4cm}{\1 \left\{ x^\ell_h = x, y^\ell_h=y, x^\ell_{h+1}=x' \right\}\over n_{k + 1} (x, a)}} \\
	&\hspace{3mm}\hat O_{k + 1} (y | x) =    \sum_{\ell \in [k], h\in [H]}{ \1 \left\{  x^\ell_h = x, y^\ell_h = y \right\} \over n_{k + 1} (x) } \label{eq::tabular-emission-update}
\end{align}
for all $x, x', a, y$ where $n_{k + 1}(x, a)$ and $n_{k +1}(x)$ are defined in Algorithm~\ref{alg:tabular}.
Note that both the calculation of the uncertainty bonuses and the model updates are possible only due to the hindsight observability that reveals the latent states $x^\ell_{1:H+1}$ for $\ell \in [k - 1]$. In general POMDPs, such calculations are not available.

\subsection{Regret and sample complexity bounds}

We now present the main guarantees for \tabalg in \setshorts. While we are primarily concerned with sample complexity bounds, \tabalg readily admits a regret bound.

\begin{theorem}\label{thm::tabular} Let $\MM$ be a \setshort model with $X$ latent states and $Y$ observations.
With probability at least $1 - \delta$, \tabalg outputs a sequence of policies $\hat \pi_1, \ldots, \hat \pi_K$ such that
\begin{align*}
 \regret(K)  = \widetilde \OO \left( \sqrt{{ (XY H^5 + XAH^4) K \iota } }  \right) ,
\end{align*}
where $\iota = \log ({2X^2 Y A KH }\delta^{-1})$ and $\widetilde \OO$ omits lower-order terms in $K$.
\end{theorem}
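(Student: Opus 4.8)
The plan is to follow the standard optimism-based regret analysis for tabular MDPs, but carefully tracking the two distinct sources of model error — in the transition estimate $\hat T_k$ and in the emission estimate $\hat O_k$ — and showing that the two bonuses $\epsilon_k(x,a)$ and $H\epsilon_k(x)$ dominate each respectively. First I would establish the \textbf{good event}: with probability at least $1-\delta$, simultaneously for all $k,x,a$ we have concentration of the empirical transitions, $\|\hat T_k(\cdot\mid x,a) - T_\star(\cdot\mid x,a)\|_1 \lesssim \sqrt{X\log(\cdot)/n_k(x,a)}$ (via an $\ell_1$/Bernstein bound over the $X$ next-states, giving the $X$ inside $\beta_1$), and of the empirical emissions, $\|\hat O_k(\cdot\mid x) - O_\star(\cdot\mid x)\|_1 \lesssim \sqrt{Y\log(\cdot)/n_k(x)}$ (giving the $Y$ inside $\beta_2$). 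Here the key structural point, enabled by hindsight observability, is that conditioned on the latent state $x$ the observations are i.i.d.\ draws from $O_\star(\cdot\mid x)$ and the next latent states are i.i.d.\ draws from $T_\star(\cdot\mid x,a)$, so that despite the adaptive data collection the counts $n_k(x),n_k(x,a)$ index genuine i.i.d.\ samples — a uniform Freedman/Azuma argument over the at most $XAKH$ relevant (count) configurations handles the adaptivity.

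Next I would prove \textbf{optimism}: on the good event, $v_{\MM(\hat T_k,\hat O_k,\hat r_k)}(\hat\pi_k)\ge v(\pistar)$. The natural tool is a simulation-lemma-style value difference decomposition for history-dependent policies in POMDPs. Running $\pistar$ in the estimated model versus the true model, the value gap telescopes over $h$ into a sum of per-step terms, each of which is a weighted inner product of $(\hat T_k - T_\star)$ against the future value function (bounded by $H$) plus a term from $(\hat O_k - O_\star)$ acting on the policy's action distribution conditioned on the partial history. Using Hölder with the $\ell_1$ bounds from the good event, these are bounded by $H\cdot\|\hat T_k - T_\star\|_1$ and $H\cdot\|\hat O_k - O_\star\|_1$ in expectation over the latent-state visitation of $\pistar$; since $\hat r_k = r + H\epsilon_k(x) + \epsilon_k(x,a)$, the accumulated bonus under $\pistar$'s visitation distribution upper-bounds exactly this error, yielding $v_{\MM(\hat T_k,\hat O_k,\hat r_k)}(\pistar)\ge v(\pistar)$, and optimality of $\hat\pi_k$ in the estimated model finishes the claim. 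The subtlety relative to MDPs — and the point where the extra $H\epsilon_k(x)$ bonus is genuinely needed — is that the emission error propagates \emph{both} through the policy's action choices at every subsequent step (an $O(H)$ blow-up, handled by the $H$ multiplier) and cannot be folded into the transition bonus because even with known $T_\star$ one must visit every latent state to pin down its emission law.

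Then the \textbf{regret bound} follows by the usual argument: $\regret(K) \le \sum_k \big(v_{\MM(\hat T_k,\hat O_k,\hat r_k)}(\hat\pi_k) - v(\hat\pi_k)\big)$ by optimism, and another application of the value difference decomposition — now along $\hat\pi_k$'s own (latent) visitation, which we \emph{do} observe in hindsight — bounds this by roughly $\E_{\hat\pi_k}\sum_h \big(2\,\epsilon_k(x_h,a_h) + 2H\,\epsilon_k(x_h)\big)$ up to a lower-order term from the rare event that the empirical and true visitation counts diverge (a Chernoff argument, contributing the ``lower-order in $K$'' slack). Summing over $k$ and converting the on-policy bonus sum to the realized counts, $\sum_{k,h}\sqrt{\beta_1/n_k(x_h^k,a_h^k)} \lesssim \sqrt{\beta_1\, XA\,KH}$ and $\sum_{k,h} H\sqrt{\beta_2/n_k(x_h^k)} \lesssim H\sqrt{\beta_2\, X\, KH}$ by the standard pigeonhole inequality $\sum_{k}1/\sqrt{n_k} \lesssim \sqrt{\text{(final count)}}$, gives the two terms $\sqrt{XAH^4 K\iota}$ and $\sqrt{XYH^5 K\iota}$ after substituting $\beta_1 = \Theta(H^3\log(\cdot))$, $\beta_2 = \Theta(Y\log(\cdot))$. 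I expect the \textbf{main obstacle} to be the optimism step: making the POMDP value-difference decomposition precise for arbitrary history-dependent policies and correctly accounting for how the emission error compounds over the horizon — getting the right power of $H$ on the $\epsilon_k(x)$ bonus — is where the analysis departs nontrivially from the MDP template, whereas the concentration and pigeonhole steps are routine modulo the i.i.d.-given-latent-state observation.
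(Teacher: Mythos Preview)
Your transition-concentration step is where the argument breaks. You propose the $\ell_1$ bound $\|\hat T_k(\cdot\mid x,a) - T_\star(\cdot\mid x,a)\|_1 \lesssim \sqrt{X\log(\cdot)/n_k(x,a)}$ and say this ``gives the $X$ inside $\beta_1$'', yet later you substitute $\beta_1 = \Theta(H^3\log(\cdot))$; these are inconsistent, and the algorithm in fact sets $\beta_1 = 4H^3\log(YXAHK/\delta)$ with \emph{no} $X$ factor outside the logarithm. With your TV bound the per-step transition contribution to the value difference is of order $H\sqrt{X\iota/n_k(x,a)}$, which the bonus $\epsilon_k(x,a)=\sqrt{4H^3\iota/n_k(x,a)}$ does \emph{not} dominate unless $H\gtrsim X$, so optimism fails for the algorithm as written. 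If instead you inflate $\beta_1$ by an $X$ factor to rescue optimism, the pigeonhole step yields $\sqrt{X\cdot XA\cdot KH\cdot\iota}$, i.e.\ an $X^2A$ leading term, missing the theorem by $\sqrt{X}$. The paper explicitly flags this: a na\"ive TV-based analysis ``results in an $\OO(X^2)$ scaling of sample complexity'', and indeed the simpler variant in Appendix~\ref{app::fn-simple} (Proposition~\ref{prop::fn-simple}), which follows exactly your route, obtains $\sqrt{X^2AH^5K\iota}$.

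The paper's fix is to abandon the TV bound on transitions. The event $\EE_T$ controls only the \emph{projection} of $T_\star-\hat T_k$ onto the fixed functions $x'\mapsto \sum_{y'}O_\star(y'\mid x')\,\alpha^{\pi^\star}_{h+1,\tau'_h}(x')$ built from the optimal policy's $\alpha$-vectors; this is a scalar Hoeffding bound with no $X$ factor, at the price of a union bound over all $(YA)^{O(H)}$ histories $\tau_h$, which is precisely what produces the $H^3$ in $\beta_1$. That event suffices for optimism (Lemma~\ref{lem::alpha-optimism-refined}), but the regret step then requires controlling $\hat\alpha^{\hat\pi_k}-\alpha^{\hat\pi_k}$, and $\EE_T$ says nothing about $\hat\pi_k$. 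The paper closes this with an Azar-et-al.\ style add/subtract of $\alpha^{\pi^\star}$ combined with a multiplicative Bernstein event $\EE_T^c$ (Lemmas~\ref{lem::recursive-alpha-bound} and~\ref{lem::Ibound}), whose residual contributes only the lower-order additive $H^4X^2A\iota\log K$ term. Your simulation-lemma route contains neither ingredient and cannot recover the linear-in-$X$ leading term.
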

The full bound, including lower order terms, and the proof can be found in Appendix~\ref{app::tabular}.
A standard online-to-batch conversion reveals that \tabalg{} learns an $\epsilon$-optimal policy at test time with probability at least $1 - \delta$ with sample complexity
\begin{align*}
    K = \widetilde \OO\left( { X Y H^5  + XAH^4  \over \epsilon^2 } \right),
\end{align*}
omitting log factors and lower-order terms in $\epsilon$.
We highlight several conceptual  implications of the results.
\begin{itemize}
[nosep,leftmargin=10pt]
    \item The bounds do not have dependence on any \emph{observability parameter}, which typically measures the degree to which one can decode the latent distribution from observations in POMDPs. For instance, guarantees of \citet{liu2022partially} depend polynomially on the inverse of the minimum singular value of $O_\star$ and in fact this dependence is necessary in general POMDPs~\citep[see e.g. Theorem 6 in][]{liu2022partially}. This precludes efficient learning in a wide class of partially observed problems (such as vision-based robotics applications with occlusions). By leveraging hindsight observability, our results show that it is possible to circumvent this hardness while still learning a near optimal history-dependent policy for test time. 
    
    \item The leading terms depend linearly on the number of observations $Y$ and latent states $X$. Inspecting just the dependence in $X$ and $A$, the result of Theorem~\ref{thm::tabular} can thus be viewed as a natural extension of minimax regret (and sample complexity) results for the MDPs. This is known to be unimprovable in general even for full information MDPs~\citep{dann2015sample,osband2016lower}. However, due to the added complexity of partial observability during deployment, a linear term in $Y$ is also present our bound. Our lower bound in Theorem~\ref{thm::lower-bound} shows that the linear $XY$ dependence is minimax optimal. We remark prior work on POMDPs has yielded large polynomial dependence on $X$ and  $Y$ in contrast to our linear dependence here.
    \item An interesting observation of \tabalg  is that the exploration bonus need only happen at the latent state level, rather than needing to explore histories. This suggests that little structure might be needed to learn $O_\star$.\footnote{Indeed, we show in Appendix~\ref{app::fn-simple} that we can incorporate function approximation of $O_\star$ without additional structural conditions beyond realizability as long as the latent model is tabular.}
    We will see in Section~\ref{sec::fn} that this observation becomes deeper when incorporating function approximation.
\end{itemize}

The dependence on the horizon $H$, although still polynomial, is likely suboptimal; however, we conjecture that it can be improved using known tools for MDPs. Since our focus is  on the impact of $X$ and $Y$ and understanding the fundamental efficiency gaps between \setshorts and POMDPs, we leave optimizing these additional factors for future work. As discussed before, \setshorts are a generalization of latent MDPs with identifiers labeled in hindsight studied by \citet{kwon2021rl,zhou2022horizon}. Ignoring $H$ and specializing our bound to their setting, it matches \citet{kwon2021rl}. \citet{zhou2022horizon} is better by a sparsity factor, but is more specialized and not applicable to general \setshorts.

\paragraph{Proof intuition.} \tabalg resembles typical optimistic algorithms for MDPs. However, a na\"ive analysis by computing confidence intervals on $\hat T_k - T_\star$ and $\hat O_k - O_\star$ results in an $\OO(X^2)$ scaling of sample complexity. We follow the MDP literature in constructing more careful confidence bounds on appropriate value functions instead. This requires some care as value functions are history-dependent in a \setshort. In particular, we need to control the required exploration as a function of latent state visitations, while reasoning over history-dependent value functions. Combining these ideas carefully, which we present in detail in Appendix~\ref{app::tabular}, gives the proof of Theorem~\ref{thm::tabular}. We note that most POMDP analyses do suffer from the $\OO(X^2)$ or worse scaling, as they do not reason via value functions.

\section{Limits of Learning in Tabular \setshorts}

\begin{figure}
    \centering
    \includegraphics[width=2.5in]{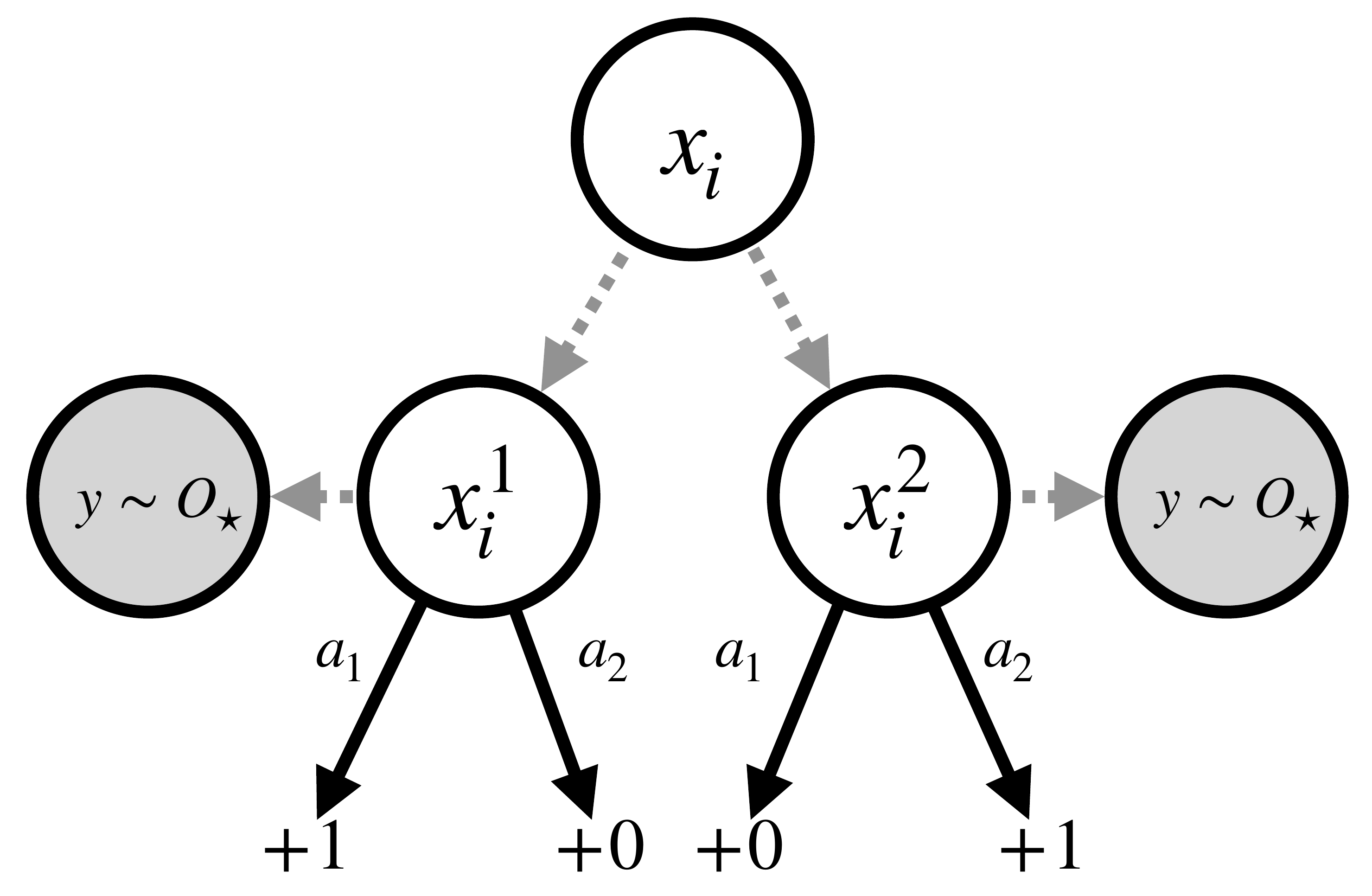}
    \caption{Simplified case of the lower bound construction. The learner starts in $x_i$ and randomly transitions to $x_i^1$ or $x_i^2$. The optimal action to achieve reward $+1$ is different depending on which latent state is visited, but the observation $y$ obfuscates sensing of the latent state.}
    \label{fig::lbsmall}
\end{figure}

We now show that the upper bound of the previous section is optimal in $XY$ for the tabular setting. We present a  new information-theoretic lower bound for the  tabular \setshort setting. The proof is in Appendix~\ref{app::lb}.

\begin{restatable}{theorem}{lowerbound}
		\label{thm::lower-bound}

Fix $\epsilon \leq 1/64$ and $X, Y \in \N$ such that $Y\geq 6$, $(X + 1) \geq 128\log 2$. For any algorithm $\AF$ producing a policy $\hat \pi$ in $K$ episodes of interaction, there exists a \setshort with the aforementioned cardinalities and $H \asymp \log_2(X)$ and $A = 2$ such that $\AF$ needs 
\begin{align*}
    K = \Omega\left({XY / \epsilon^2 }\right)
\end{align*}
to guarantee $\E \left[ v(\pistar) - v(\hat \pi) \right] \leq \epsilon$,
where the expectation is taken over randomness in the data and algorithm.

\end{restatable}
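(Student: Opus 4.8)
The plan is to reduce to a large collection of independent, hard-to-distinguish one-step bandit-like instances and apply a standard information-theoretic argument (Fano / Assouad / a Bretagnolle--Huber KL bound). The construction in Figure~\ref{fig::lbsmall} is the building block: from each of roughly $X$ ``reservoir'' latent states $x_i$, a uniformly random transition sends the learner to one of two successor latent states $x_i^1, x_i^2$, each of which emits the \emph{same} observation $y$ so that the policy cannot tell which branch occurred. In each successor, one of the $A=2$ actions yields reward $+1$ and the other $0$, but the identity of the good action differs between $x_i^1$ and $x_i^2$; encode this choice in a hidden bit. Crucially we also make the emission distributions themselves carry information: I would perturb the emission probabilities over the $Y$ observations at the two successors so that distinguishing $x_i^1$ from $x_i^2$ (and hence learning the good action) requires $\Omega(Y/\epsilon^2)$ visits to $x_i$ — this is where the $Y$ factor enters, and it is consistent with the claim in the paper that the bonus must act at the latent-state level. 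To get the product $XY$ rather than $X+Y$, I would arrange a binary-tree routing structure of depth $H \asymp \log_2 X$ that funnels the initial distribution across the $X$ reservoir states roughly uniformly, so that in $K$ episodes the \emph{total} number of informative samples is $O(K)$ but it must be shared across all $\Omega(X)$ independent sub-problems; thus each sub-problem sees only $O(K/X)$ samples, forcing $K/X = \Omega(Y/\epsilon^2)$.

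The key steps, in order: (i) formally specify the family of \setshorts indexed by a hidden parameter $\theta \in \{1,2\}^{X} \times (\text{emission perturbations})$, verifying the stated cardinality and horizon constraints ($Y \ge 6$, $(X+1)\ge 128\log 2$, $H \asymp \log_2 X$, $A=2$); (ii) compute $v(\pistar)$ exactly and show that any policy $\hat\pi$ with $v(\pistar)-v(\hat\pi) \le \epsilon$ must, on an $\Omega(1)$ fraction of the $X$ sub-problems weighted by their visitation mass, identify the correct action bit — i.e.\ suboptimality decomposes additively across sub-problems; (iii) bound the KL divergence between the distributions over the learner's observable transcript under $\theta$ and a neighbor $\theta'$ differing in one coordinate, showing it is $O(\epsilon^2/Y)$ per visit to the relevant reservoir state (this uses the emission perturbation of size $\Theta(\epsilon/\sqrt{Y})$ spread over $\Theta(Y)$ observations, so that a single observation gives $O(\epsilon^2/Y)$ information); (iv) invoke a chain-rule / divergence-decomposition lemma (as in Auer--Cesa-Bianchi--Freund--Schapire or Garivier et al.) to get that the expected number of visits $N_i$ to $x_i$ must satisfy $\sum_i N_i \cdot \Theta(\epsilon^2/Y) = \Omega(X)$ for the learner to succeed on a constant fraction of sub-problems; (v) since $\sum_i N_i \le KH$ and $H \asymp \log_2 X$, rearrange — being slightly careful about whether the $H$ shows up — to conclude $K = \Omega(XY/\epsilon^2)$. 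A cleaner route for step (v) is to make the tree deterministic in its routing so that $\sum_i N_i = \Theta(K)$ exactly (each episode reaches exactly one reservoir state), avoiding the $H$ entirely.

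I would lean on the sub-problem decomposition framework (Assouad's lemma, or equivalently a ``needle in each of $X$ haystacks'' argument) rather than a single Fano bound, because the hard instances are genuinely independent across $i$ and this makes the additivity in step (ii) and the divergence accounting in step (iv) transparent. The main obstacle I anticipate is step (iii)--(iv): making the emission perturbation simultaneously (a) large enough that an $\epsilon$-suboptimal policy is \emph{forced} to resolve the hidden bit — one must check that a policy could not instead hedge by randomizing and lose only $o(\epsilon)$, which requires the two reservoir branches to be equiprobable and the reward gap to be $\Theta(1)$ so that any residual uncertainty costs $\Theta(1)$ per visit in expectation, integrated against visitation it costs $\Theta(\text{mass not yet resolved})$; and (b) small enough that the per-sample KL is only $O(\epsilon^2/Y)$, which caps the rate of information and yields the $Y$ in the denominator of the sample budget. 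Balancing these — essentially choosing the perturbation magnitude as $\Theta(\epsilon/\sqrt{Y})$ and the ``number of reservoirs the learner must get right'' as $\Theta(X)$, then checking the two requirements are compatible given the numeric constraints $\epsilon \le 1/64$, $Y \ge 6$ — is the delicate part; everything else is a routine instantiation of the standard lower-bound machinery. A secondary subtlety is handling the history-dependence of $\hat\pi$: since the observation $y$ is shared across branches and (in the core gadget) the rest of the trajectory is uninformative, a history-dependent policy has no more power than a reactive one on the sub-problems, so this does not weaken the bound, but the reduction should state this explicitly.
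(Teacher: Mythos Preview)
Your high-level plan (binary tree routing to $\Theta(X)$ independent three-state gadgets, then an information-theoretic lower bound) matches the paper, but the core accounting in steps (iii)--(iv) has a genuine gap that breaks the argument as written.

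You propose an emission perturbation of magnitude $\delta = \Theta(\epsilon/\sqrt{Y})$ so that the per-visit KL is $O(\epsilon^2/Y)$. The problem is that this same $\delta$ controls the \emph{value separation}, not just the KL. Conditioned on reaching reservoir $i$ and seeing observation $y$, the posterior over $\{x_i^1,x_i^2\}$ is $(1\pm\delta)/2$, so the expected reward gap between the right and wrong action is exactly $\delta$, not $\Theta(1)$. Your claim that ``any residual uncertainty costs $\Theta(1)$ per visit'' is false: the reward gap is $1$ at a \emph{known} latent state, but the policy only sees $y$, and the posterior-weighted gap is $\delta$. Consequently the per-reservoir suboptimality of any policy is at most $\delta = O(\epsilon/\sqrt{Y})$, and the total suboptimality (averaged over $\Theta(X)$ reservoirs) is also $O(\epsilon/\sqrt{Y})$. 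A policy can be $\epsilon$-optimal while getting \emph{every} bit wrong, so step (ii) fails and the Assouad/needle argument yields nothing.

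The paper's construction avoids this by taking the perturbation to be $\Theta(\epsilon)$, giving per-episode KL $\Theta(\epsilon^2)$ with no $Y$ in it. The $Y$ factor enters instead through the \emph{number of hidden bits}: the instance is indexed by $u \in \{-1,+1\}^{\Theta(XY)}$, one sign per (reservoir, observation) pair, where $u(y,x_i)$ determines whether observation $y$ is more likely under $x_i^1$ or $x_i^2$ and hence which action is optimal upon seeing $y$ at reservoir $i$. The optimal policy must learn $\Theta(Y)$ such signs per reservoir. A Gilbert--Varshamov packing gives $|\UU| \ge \exp(\Theta(XY))$ well-separated instances with pairwise value separation $\Theta(\epsilon)$, and Fano's inequality then yields $K \cdot \Theta(\epsilon^2) \gtrsim \log|\UU| = \Theta(XY)$. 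Your one-bit-per-reservoir parameterization $\{1,2\}^X$ cannot produce the $Y$ factor; if you want to salvage the Assouad route, you need $\Theta(XY)$ coordinates with per-coordinate KL $O(\epsilon^2/(XY))$ and per-coordinate separation $\Theta(\epsilon/(XY))$, which is exactly what the $(y,x_i)$-indexed sign construction provides.
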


The lower bound is information-theoretic, meaning that no algorithm can do better than this.
The lower bound of Theorem~\ref{thm::lower-bound} matches the  $XY$ leading term of the upper bound in Theorem~\ref{thm::tabular}, suggesting that our algorithm is minimax optimal in $X$ and $Y$ for large $K$. Recall that existing lower bounds for learning in MDPs necessitate $\Omega\left(\nicefrac{XA}{\epsilon^2}\right)$ episodes of interaction,  which accounts for the other leading term in our upper bound~\citep{dann2015sample,osband2016lower}.
Since POMDPs are more general than \setshorts, our lower bound also applies to POMDPs.

\textbf{Hindsight vs.\ foresight observability.}
Our lower bound construction does not distinguish between the latent state $x_h$ being simultaneously revealed along with $y_h$, or only in hindsight. The key bottleneck is in the construction of the history-dependent policy at test time. Therefore, revealing states simultaneously is no easier statistically than revealing them in hindsight, at least in a minimax sense. This lends credence to framing a broader class of problems such as sim-to-real robotics as \setshorts by dealing with latent states \textit{after} execution of a policy even though the state is always observable during training. Because one seeks a history-dependent policy in the end, it is just as hard statistically.

\paragraph{Intuition for lower bound construction.}
We derive the lower bound by constucting a class of hard \setshort models in the form a binary tree for the latent states, like many hard MDP constructions. At the final layers of the tree is a collection of $\Omega(X)$ subproblems, each of the form of Figure~\ref{fig::lbsmall}. In each subproblem there are $3$ latent states. The learner starts in $x_i$ and transitions randomly to either of the child states $x_i^1$ and $x_i^2$ with equal probability and must take an optimal action that depends on which one it visits. The difficulty is that $O_\star$ is biased slightly towards half of the observations depending on the latent state. As a result, in order to effectively match the value of the optimal policy, the learner must interact at least $\Omega(Y)$ times with this subproblem. 
To prove there is linear dependence on $XY$, we leverage the binary tree described earlier. 
In short, the learner transitions randomly down the tree until it reaches one of $\Omega(X)$ independent subproblems (decodable from the observations), where it must play optimally. As we remarked earlier, the bottleneck is on the history-dependent policy deployed at test-time, so observing the latent states simultaneously at training does not help in solving this construction.

\section{Generalization in HOMDPs} \label{sec::fn}

While the tabular setting considered so far is important for building intuition, and tabular latent states are particularly reasonable in many settings, several practical domains of interest necessitate continuous and high-dimensional latent states and observations. In this section, we study \setshorts where both $\XX$ and $\YY$ can be infinitely large, and we employ function approximation for sample-efficient learning. Mirroring prior work in MDPs and more recently POMDPs, this requires structural conditions on the underlying latent state transitions $T_\star$. We now describe one such  condition which has been widely studied in the MDP literature, and present an algorithm and sample complexity guarantees.

\subsection{Low-rank latent transition dynamics}
We initiate this investigation
with a well-studied model in the MDP literature: the low-rank MDP \citep{barreto2011reinforcement,jiang2017contextual,agarwal2020flambe}. We study \setshorts where the underlying latent state MDP is low-rank.

\begin{definition}
A transition function $T_\star$ admits a low-rank decomposition with rank $d$ if there exist vector functions $\phi_\star: \XX \times \AA \to \R^d$ and $\psi_\star: \XX \to \R^d$ such that
\begin{align*}
    T_\star(x' | x, a) = \phi_\star(x, a)^\top \psi_\star(x') \quad \forall x, x' \in \XX, a \in \AA
\end{align*}
Furthermore, $\phi_\star$ satisfies $\sup_{x, a} \| \phi_\star(x, a) \|_2 \leq 1$ and $\psi_\star$ satisfies $\| \int_{x'} \psi_\star(x') \diff x' \|_2 \leq \sqrt{d}$.
\end{definition}
Note that in the low-rank setting, we do not assume that the feature embedding $\phi_\star$ is known, unlike the linear MDP setting \citep{jin2020provably} which crucially leverages this knowledge. We defer details, motivations, and comparisons to the original papers on the matter \citep{agarwal2020flambe}.

We assume that the learner has access to function classes $\TT$ and $\Theta$ for approximation of $T_\star$ and $O_\star$, respectively. For simplicity of the analysis, we assume that they are finite but  large, and thus we desire a sample complexity which is logarithmic in $|\TT|$ and $|\Theta|$, with no explicit dependence on $|\XX|$ or $|\YY|$. This formulation also automatically captures the representation learning problem for the latent states~\citep{agarwal2020flambe}. 
As is standard, we assume that the function classes are proper and satisfy realizability.

\begin{assumption}
\label{asmp::realizability}
$T_\star \in \TT$ and $O_\star \in \Theta$. Furthermore, for all $T \in \TT$ and $O \in \Theta$, $T(\cdot | x, a) \in \Delta(\XX)$ and $O(\cdot | x,a) \in \Delta(\YY)$ for all $x, x' \in \XX$ and $a \in \AA$.
\end{assumption}

The above requires that all candidates in the class can form valid distributions (i.e., they are proper); this can be  satisfied by simply discarding those in $\TT$ and $\Theta$ that are improper.

\subsection{The \fnalg algorithm}

\begin{algorithm}[t]
		\caption{\!\mbox{\fnalglong\!(\fnalg)}}
		\label{alg::gfa}
		\begin{algorithmic}[1]
			
			\STATE \textbf{Input}: Transition class $\TT$, Emission class $\Theta$.
			\STATE Set $K' = \floor{K/ H}$.
			\STATE Set $\beta_\TT  = 2 \log \left( K' | \TT | / \delta \right)$.
			\STATE Set $\beta_\Theta   =2 \log \left (K' | \Theta | / \delta \right)$.
			\STATE Initialize $\TT_1 = \TT$. and $\Theta_1  = \Theta $. 
			
			\FOR{ $k = 1, \ldots, K'$ }
			
			\STATE \algcomment{Optimistic planning}
			\STATE Solve \begin{align*}\hat \pi_k, \hat O_k, \hat T_k = \argmax_{\pi \in \Pi, T \in \TT_k, O \in \Theta_k} v_{\MM(T, O) } (\pi)  \end{align*}
			\FOR{ $h \in [H]$ }
			    \STATE \algcomment{Deploy hist.-dependent policy}
    			\STATE Construct exploration policy $\tilde \pi_k = \hat \pi_k \circ_{h} \unif(\AA)$
    			
    			\STATE Deploy $\tilde \pi_k$ and observe trajectory $(y_{1:H}, a_{1:H})$.
    			
    			\algcomment{Hindsight observation}
    			\STATE Observe latent states $x_{1:H+1}$.
    			
    			\STATE Set $y^k_h := y_h$, $a^k_h := a_h$, $x^k_h  := x_h$, $\tilde x^k_{h} := x_{h + 1}$.
			
			\ENDFOR

			\STATE Update version spaces with
			\begin{align*}
			    \TT_{k + 1}  & = \left\{ T \in \TT_k \ : \ \hat \LL^1_k(T) \geq \max_{T' \in \TT_k}  \hat \LL^1_k(T') - \beta_\TT \right\} \\
			    \Theta_{k + 1} & = \left\{ O \in \Theta_k \ : \ \hat \LL^2_k(O) \geq \max_{O' \in \Theta_k} \hat \LL^2_k(O') - \beta_\Theta \right\} 
			\end{align*}

			\ENDFOR		
		\end{algorithmic}
	\end{algorithm}

We now introduce the algorithm Hindsight OPtimism with Version spaces (\fnalg) for function approximation in \setshort models. \fnalg divides the $K$ rounds into $K' = \floor{K/H}$ epochs and maintains version spaces $\TT_k$ and $\Theta_k$ over the model classes based on the data collected so far for each epoch $k \in [K']$.
In epoch $k$, \fnalg identifies a policy $\hat \pi_k$ and models $\hat T_k$ and $\hat O_k$ by solving an optimistic optimization problem over the version spaces. Here $v_{\MM(T, O)}(\pi)$ denotes the value of a policy $\pi$ in the POMDP model given by transition function $T$, emission function $O$ and the true reward function $r$, akin to the original definition in \eqref{eq::policy-value}. Still within epoch $k$, for each $h \in [H]$, \fnalg generates an exploration policy from $\hat \pi_k$ by taking $\tilde \pi_k = \hat \pi_k \circ_h \unif(\AA)$. The operator $\circ_h$ replaces $\hat \pi(\cdot | \tau_h)$ with the uniform distribution $\unif(\AA) \in \Delta(\AA)$ over the actions for all $h$-length histories $\tau_h$, but leaves the rest of $\hat \pi_k$ unaffected. That is, $\tilde \pi_k$ plays $\hat \pi_k$ normally up to the $h$th step and then takes a random action.

\fnalg deploys the exploration policy $\tilde \pi_k$ in the environment and records the observation $y_h^k$ and action $a_h^k$. Then, when the latent states are revealed, it records the latent state $x_h^k$ and the next state ${\tilde x_{h}^k}$. It repeats this exploration procedure for each $h \in [H]$ in the epoch $k$ to generate $(y_{1:H}^k, a_{1:H}^k, x_{1:H}^k, \tilde x_{1:H}^k)$. Based on this new data, it updates the version spaces via maximum likelihood estimation (MLE) with the following log-likelihood objectives:
\begin{align*}
    \hat \LL_k^1 (T) & = \sum_{\ell \in [k], h\in[H]} \log T(\tilde x^\ell_h | x^\ell_h, a^\ell_h) \quad \text{and} \\
    \hat \LL_k^2 (O) & =  \sum_{\ell \in [k], h \in [H]} \log O( y_h^\ell | x_h^\ell) .
\end{align*}

\subsection{Sample complexity bound}

We now state the performance guarantee for \fnalg. 
\begin{theorem}\label{thm::fn-approx}
Let $\MM$ be a \setshort model with a low-rank transition function $T_\star$ of rank $d$. Let $\TT$ and $\Theta$ satisfy Assumption~\ref{asmp::realizability}. Then, with probability at least $1 -  \delta$, \fnalg outputs a sequence of policies $\hat \pi_1, \ldots, \hat \pi_{K'}$ such that
\begin{align*}
 \regret(K')  
 = \OO \left(   \sqrt{ {   AH^4d K'}  \log \left(K'H | \Theta | |\TT |\over \delta \right) \log K'   }  \right)
\end{align*}
\end{theorem}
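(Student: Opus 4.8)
\textbf{Proof proposal for Theorem~\ref{thm::fn-approx}.}

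The plan is to follow the standard optimism-based regret template, adapted to the \setshort structure where the MLE operates directly on latent-state transitions and emissions (which is possible precisely because of hindsight observability). First I would establish a \emph{uniform MLE guarantee}: since the data $\{(x^\ell_h, a^\ell_h, \tilde x^\ell_h)\}$ and $\{(x^\ell_h, y^\ell_h)\}$ are generated by deploying known policies in the true model, a martingale/Freedman-type analysis of the log-likelihood ratio gives that, with probability at least $1-\delta$, the true models $T_\star$ and $O_\star$ are never eliminated from the version spaces $\TT_k, \Theta_k$, and moreover every surviving $T \in \TT_{k+1}$ (resp.\ $O \in \Theta_{k+1}$) satisfies an in-sample Hellinger/TV bound of the form $\sum_{\ell \le k, h} \E[\,\|T(\cdot|x_h^\ell,a_h^\ell)-T_\star(\cdot|x_h^\ell,a_h^\ell)\|_1^2\,] \lesssim \beta_\TT$, and similarly for $O$ with $\beta_\Theta$. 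This is the usual MLE-to-version-space lemma (as in \citet{agarwal2020flambe}); the only subtlety is that the state-conditioned samples within an epoch are collected under $H$ different exploration policies $\hat\pi_k \circ_h \unif(\AA)$, so I would index the in-sample error by the mixture over $h$, which is exactly the distribution of latent state-action pairs visited.

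Next I would show \emph{optimism}: because $(T_\star, O_\star, \pistar)$ is feasible in the $\argmax$ defining $(\hat\pi_k, \hat T_k, \hat O_k)$, we get $v_{\MM(\hat T_k, \hat O_k)}(\hat\pi_k) \ge v_{\MM(T_\star,O_\star)}(\pistar) = v(\pistar)$, so the per-epoch regret $v(\pistar) - v(\hat\pi_k)$ is bounded by the \emph{simulation-lemma gap} $v_{\MM(\hat T_k,\hat O_k)}(\hat\pi_k) - v_{\MM(T_\star,O_\star)}(\hat\pi_k)$. The next step is a POMDP simulation lemma: this difference telescopes into a sum over $h$ of (i) emission-mismatch terms $\E_{\hat\pi_k}[\|\hat O_k(\cdot|x_h) - O_\star(\cdot|x_h)\|_1]$ weighted by $H$ (value range) and (ii) transition-mismatch terms $\E_{\hat\pi_k}[\|\hat T_k(\cdot|x_h,a_h) - T_\star(\cdot|x_h,a_h)\|_1]$ weighted by $H$. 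The crucial point is that these expectations are under the true latent-state occupancy of $\hat\pi_k$, not over histories, so the errors are controlled entirely at the latent level — which is why only $H$, $A$, $d$, and the log-cardinalities appear. Here I would use the $\circ_h \unif(\AA)$ exploration (and the factor $K' = \floor{K/H}$) to relate the deployed exploration distribution to $\hat\pi_k$'s occupancy, paying the $A$ factor from importance weighting to the uniform action.

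The main obstacle is closing the loop between the \emph{on-policy} errors appearing in the simulation lemma (expectations under $\hat\pi_k$'s latent occupancy in epoch $k$) and the \emph{in-sample} errors controlled by the MLE bound (expectations under the exploration occupancies from epochs $1,\dots,k$). For the transition terms this is exactly where low rank enters: I would invoke the standard elliptical-potential / pigeonhole argument for low-rank MDPs — the on-policy feature covariance $\phi_\star(x_h,a_h)$ under $\hat\pi_k$ is summable against the accumulated covariance matrix, so $\sum_k \E_{\hat\pi_k}[\|\hat T_k - T_\star\|_1] \lesssim \sqrt{d K' \beta_\TT}$ up to $H$ and $A$ factors, via Cauchy--Schwarz and the log-determinant potential bound (rank $d$ gives $\log\det \lesssim d\log K'$). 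For the emission terms, no rank is needed: since $\hat O_k$ has small error on the latent-state distribution visited so far and emissions depend only on the current latent state, a direct visitation-counting / pigeonhole over latent states suffices — but to keep the bound free of $|\XX|$, I would instead route this through the same low-rank occupancy control (the latent state marginal at step $h$ is itself a linear function of $\phi_\star$), so the emission error is also summed via an elliptical potential. Collecting the two pieces, summing over $h \in [H]$ (an extra $H$), over $k \in [K']$, substituting $\beta_\TT, \beta_\Theta = \OO(\log(K'H|\TT||\Theta|/\delta))$, and using $K' \asymp K/H$, yields $\regret(K') = \OO(\sqrt{AH^4 d K' \log(K'H|\Theta||\TT|/\delta)\log K'})$ as claimed. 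I would carefully track that the value-range factor $H$ in the simulation lemma combines with the $H$ from summing over exploration steps and one more $H$ from the epoching to produce the $H^4$ inside the square root (i.e.\ $H^2$ outside).
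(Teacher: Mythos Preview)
Your proposal is correct and follows essentially the same route as the paper: MLE concentration keeps $T_\star,O_\star$ in the version spaces and bounds in-sample squared TV error, optimism reduces per-epoch regret to a simulation-lemma gap expressed as $\E_{\hat\pi_k}$ of TV errors at the latent level, and the one-step-back low-rank trick (writing the step-$h$ occupancy through $\phi_\star(x_{h-1},a_{h-1})$) together with Cauchy--Schwarz and the elliptical potential closes the loop, paying the $A$ factor from $\unif(\AA)$ importance weighting. The paper handles transition and emission errors jointly via a single $\epsilon(T,O,x,a) = \|T-T_\star\|_1 + \|O-O_\star\|_1$ and runs the potential argument on the policy-averaged features $U_h(\hat\pi_k)=\E_{\hat\pi_k}[\phi_\star(x_{h-1},a_{h-1})]$, which is precisely your ``route the emission error through the same low-rank occupancy control'' observation; your $H$-accounting is slightly garbled (the $H^2$ outside the root comes from the value-range $H$ in the simulation lemma times the sum over $h\in[H]$, with the epoching $H$ entering only in the PAC conversion $K=HK'$), but the final exponent is right.
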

Note that the regret is over the learned $\hat \pi_{1:K'}$, not the actually deployed exploration policies. This phenomenon occurs often from one-step exploration~\citep{jiang2017contextual,agarwal2020flambe}. However, our focus is the implied PAC guarantee. Again, using a standard online to batch conversion, we get that \fnalg learns an $\epsilon$-optimal policy with probability at least $1 - \delta$ in 
\begin{align*}
    K = \widetilde \OO \left( {A H^5 d \over \epsilon^2  }  \log \left( {  | \Theta | |\TT |\over \delta }  \right)   \right) 
\end{align*}
episodes of interaction. The additional $H$ arises because there are $H$ episodes for each epoch $k \in [K']$ due to the construction and deployment of the exploration policies.

\begin{itemize}[nosep, leftmargin=10pt]
\item In contrast to the tabular bound of \tabalg, \fnalg has no dependence on the size of the latent state space $X$ or observation space $Y$. Instead, generalization using function classes replaces them with complexities of $\TT$ and $\Theta$ and the rank $d$ of the latent transition. Note that we can readily replace these log-cardinalities with other suitable notions of complexity for infinite function classes.
\item For comparison to Theorem~\ref{thm::tabular}, we can set $d=X$, $\log|\Theta| = \otil(XY)$ and $\log|\TT| = \otil(X^2A)$, which results in a suboptimal scaling in $X$ as \fnalg does not use value function-based optimism unlike \tabalg.
\item Similar to the tabular setting, the sample complexity also has no dependence on observability parameters, showing that we maintain this advantageous property of \setshort models in the function approximation setting.
    \item Observe that we have not made any further structural conditions on $O_\star$ to achieve this result. The structural condition is only on $T_\star$. 
\end{itemize}

\paragraph{Proof intuition.}
The proof is remarkably simple in contrast to the tabular result.
It is a combination of just two components. The first is a simulation lemma (Lemma~\ref{lem::sim-lem}), which relates the estimation error of the estimated value function to the total variation error of both $\hat T_k$ and $\hat O_k$. This decomposition allows us the analyze the error almost entirely in terms of the latent state distributions of the exploration policies, rather than their histories.  
This leads to the second component, which is a standard ``one-step-back'' analysis that has previously appeared for low-rank MDPs in the fully observable setting~\citep{agarwal2020flambe}. The use of uniform exploration policies at each $h$ is also common in MDP literature to handle the distribution shift between current and historical data. Here the randomness also plays a secondary role of removing all history dependence.

\section{Discussion}

Motivated by practical applications in partially observable problems, we formulated the problem setting of a \setting (\setshort), where the objective is to learn a decision making policy based on partial observations in order to interact with the environment, but the underlying latent states are eventually revealed during the training process. We proposed an algorithm, \tabalg, for finite latent states and observations. We gave sample-complexity upper and lower bounds, showing that \tabalg has no dependence on partial observability parameters and that it is nearly optimal in dependence on $XY$. We also proposed an algorithm, \fnalg, that allows for function approximation of the transition and emission functions to handle generalization in large or infinite latent state and observation spaces.

There are a number of interesting open directions for future work on hindsight observability. The similarities and compatibility between the \setshort and  standard MDP models make \setshorts a ripe area for further advancements that leverage our deeper knowledge of MDPs. Natural directions include, for example, model-free RL \citep{jin2018q,jin2020provably}, offline RL \citep{xie2021bellman,jin2021pessimism,zanette2021provable}, model selection \citep{lee2021online,lee2022oracle,cutkosky2021dynamic}, and computationally efficient representation learning for latent states \citep{agarwal2020flambe}.

Specific to function approximation, our most general results applied to low-rank latent transition functions for generalization and representation learning; however, MDP literature has had success with more general conditions that restrict the form of the latent state Bellman error \citep{jiang2017contextual,sun2019model, du2021bilinear,agarwal2022model}.
Unfortunately, these conditions have little meaning in \setshorts and POMDPs because partially observable value functions are history-dependent, not latent state-dependent. It would be interesting in the future to either reconcile these types of conditions or develop new meaningful ones for \setshorts.

\section*{Acknowledgements}

Part of this work was done while JNL was a student researcher at Google Research. JNL acknowledges support from the NSF GRFP.

\bibliography{refs}
\bibliographystyle{icml2023}

\newpage
\appendix

\onecolumn

\renewcommand{\contentsname}{Contents of Appendix}
\tableofcontents
\addtocontents{toc}{\protect\setcounter{tocdepth}{3}} 
\clearpage

\paragraph{Image sources:} The globe in Figure~\ref{fig::problem} is taken from \url{https://commons.wikimedia.org/wiki/File:Ambox_globe_Americas.svg}.

\section{Additional Motivating Applications}
\label{app:more_applications}

To further establish hindsight observability, we describe in more detail a number of motivating applications. The diversity of the problem settings highlights the generality of the \setshort model.
\begin{itemize}
    \item \textbf{Sim-to-real robotics.} Sim-to-real (simulation to reality) is a well-studied paradigm in robotics in which one trains a robot using RL in a simulator with the end goal of deploying the robot in the real world at test time. The test time policy often uses high dimensional, partial observations such as images from a camera (which are susceptible to noise and occlusions). However, during training, the simulator grants access to the underlying state (object positions, poses, etc.), information that is not available at test time but might dramatically increase sample efficiency. Empirically, leveraging this train-time information has led to improved sample complexity \citep{pinto2017asymmetric,chen2020learning}.
    
    \item \textbf{High-frequency control.} In a related control setting, latency and computational bottlenecks (e.g. processing lidar data, depth images, etc.) can obscure and delay observation of the true state  even though control inputs must be made quickly. It is common in such settings to instead use simpler observations, such as images from a standard camera. Once the system finally observes or processes the true states, the states can be incorporated in the training procedure. \citet{pan2017agile} explored this direction, successfully training an autonomous rally car via distillation for high-speed driving.
    
    \item \textbf{Meta-learning and latent MDPs.} In meta-learning for RL \citep{wang2016learning,finn2017model}, an agent leverages past rollouts on different MDP tasks sampled from a fixed distribution, where the reward and transition functions differ from task to task. The training tasks are often labeled or can be inferred in hindsight \citep{liu2021decoupling}. At test time, the task is unknown and thus the agent  must adapt using only observations to infer and maximize reward for the test time task. This can be modeled as a \setshort, where the latent states are the MDP states concatenated with the task context and the observations are the MDP states. Meta-learning with finitely many tasks can also be viewed as a latent MDP~\citep{kwon2021rl,zhou2022horizon}.
    
    \item \textbf{Scheduling.} Following \citet{sinclair2022hindsight}, consider a data center, which aims to allocate submitted jobs to servers efficiently. The arrival times of the jobs and their total lengths are unknown. However, once a job has completed, both its arrival time and length are known in hindsight. The latent states of the equivalent \setshort are the observations concatenated with the arrival times and lengths of all jobs.

    \item \textbf{Online imitation learning} In online imitation learning \citep{ross2011reduction}, an agent interacts with the MDP over $K$ rounds, executing actions under its learned policy. After a round, it retroactively queries an expert for optimal actions in the visited states to then update the policy. This can be viewed as a special case of our setting where each of our latent states is a visited state concatenated with the expert's optimal action for that state, which is only retroactively observed. 
    
    \item \textbf{Screening for diseases.} Medical screenings are procedures designed to help detect and monitor diseases in patients, usually in early stages. Screenings are typically dependent on patient characteristics as well as tests, which may not always be accurate and may have undesirable effects. \citet{ayer2012or} frame the problem of screening for breast cancer as a POMDP, where the state is a patient's true condition (progression of the disease), and the observations are the outcomes of tests such as a mammogram. Depending on the actions taken in response to the observed tests, the patient may eventually undergo a perfect test revealing the latent state (such as a biopsy). Revelation of the state can then be used to more effectively guide observation-based policies in the future by allowing association between the outcomes of prior tests and the true condition.
\end{itemize}

\section{Value Functions and Alpha Vector Representations}\label{app::alpha-vectors}

The proofs in this paper crucially rely on the $\alpha$-vector representation of value functions for POMDPs \citep{smallwood1973optimal}. Since these techniques and intuitions are not common in reinforcement learning theory literature\footnote{Exceptions include \citet{kwon2021rl,zhou2022horizon} who used $\alpha$-vectors in the latent MDP model (single unobserved context).}, we now give an introductory treatment of this topic with all the tools necessary for our proofs.

Since we work with history-dependent policies, the value $V^\pi$ and action-value functions $Q^\pi$  of a policy $\pi$ are history-dependent as well. The history of observations and actions $\tau_h$ gives rise to a posterior distribution $b_h(\cdot) = P(x_h = \cdot | \tau_h)$ over the latent states.\footnote{The posterior depends on the initial latent state distribution $\rho$ but we omit a notational references to it for clarity.} We could define value functions simply as a function of $\tau_h$ but it will be convenient to make the posterior $b_n$ explicit and write value functions as a function of both $x_h$ and $b_h$:
\begin{align*}
	Q^\pi_h(b_h, \tau_h, a_h)  
& = \E_{x_h \sim b_h}\!\! \left[ \!\E\!\left[ \sum_{h' = h}^H r(x_h, a_h)  \  | \ \tau_h, x_h, a_h \right] \! \right] \\
	V^\pi_h(b_h, \tau_h)  & = \sum_{a_h} \pi(a_h | \tau_h) Q^\pi_h( b_h, \tau_h , a_h).
\end{align*}
While $b_h$ determines the latent state, $\tau_h$ affects the actions taken by the policy.

The $\alpha$-vectors of a POMDP act as a kind of representation for the value functions of possible policies on the POMDP. This allows us to represent value functions as linear functions of belief vectors.
Note that it is not obvious how to write either of the value function $V$ or $Q$ for POMDPs in a concise, recursive form in the same way that we do for MDP value functions. The $\alpha$-vector representation provides an alternative solution where we can represent the value functions as linear functions with the $\alpha$-vectors and then have a recursive definition of the $\alpha$-vectors.

Consider a POMDP with transtion function $T$ and emission function $O$.
Consider the last timestep $H$ and a belief distribution $b \in \Delta(\XX)$ over the latent state:
\begin{align}
	Q_H^\pi(b, \tau, a) & = \sum_x b(x) r(x, a) \\
	V_H^\pi(b, \tau) & = \sum_{x, a} b(x) \pi(a | \tau)  r(x, a). 
\end{align}
These clearly have simple linear representations as functions of the belief vector:
\begin{align}
	Q_H^\pi(b, \tau, a)  & = b^\top \alpha^\pi_{H, \tau}(\cdot, a)  \\
	V_H^\pi(b, \tau) & = b^\top \alpha^\pi_{H, \tau}(\cdot),
\end{align}
where $\tau$ is a partial history of appropriate length, $\alpha^\pi_{H, \tau} (\cdot) \in \R^X$ and  $\alpha^\pi_{H, \tau}(\cdot, \cdot) \in \R^{X \times A}$ are given by 
\begin{align}
	\alpha^\pi_{H, \tau} (x, a)  & = r(x, a) \\
	\alpha^\pi_{H, \tau} (x, a) & = \sum_{a} \pi(a | \tau) \alpha^\pi_{H, \tau} (x, a).  
\end{align}
Let us now assume inductively that $V^\pi_{h + 1}(x)$ and $Q^\pi_{h + 1}$ have the following representations:
\begin{align}
	Q^\pi_{ h+ 1} (b, \tau, a) & = b^\top \alpha^\pi_{h + 1, \tau} (\cdot, a) \\
	V^\pi_{ h+ 1} (b, \tau) & = b^\top \alpha^\pi_{h + 1, \tau}.
\end{align}
From the definition of $Q$,
\begin{align}
	Q_{h}^\pi(b,\tau, a) = b^\top r(\cdot,a ) + \sum_{y'} P(y' | \tau, a) V^\pi_{h + 1} (b', \tau')
\end{align}
 where we  denote $\tau'$ as the history $\tau$ concatenated with the new action $a$ and observation $y'$ and where $b'$ (which is dependent on $y'$ and $a$ ) is updated belief vector starting from  $b$ given action $a$ and the next observation $y'$.
\begin{align}
	b'(x') := { O(y' | x') \sum_{x} b(x) T(x' | x, a) \over  \sum_{x''  } O(y' | x'' ) \sum_{x} b(x) T(x'' | x, a) } .
\end{align}
The action-value function can then be rewritten as 
\begin{align}
	Q_{h}^\pi(b,\tau, a) & = b^\top r(\cdot ,a ) + \sum_{y'} P(y' | \tau, a) \sum_{x'} b'(x') \alpha^\pi_{h+1, \tau'}(x') \\
	& = b^\top r(\cdot ,a ) + \sum_{x, x', y'} b(x) T(x' | x, a) O(y' | x' ) \alpha^\pi_{h+1, \tau'}(x') \\
	& = b^\top r(\cdot ,a ) + b^\top \gamma^\pi_{h, \tau}(\cdot, a),
\end{align}
where we define
\begin{align}
	\gamma^\pi_{h, \tau}(x, a) & = \sum_{x',  y'} T(x' | x, a) O(y' |  x') \alpha^\pi_{h + 1, \tau'}(x') \\
	\gamma^\pi_{h, \tau}(x) & = \sum_{a, x', y'} \pi_h(a | \tau)  T(x' | x, a) O(y'|  x') \alpha^\pi_{h + 1, \tau'}( x').
\end{align}
Then, define
\begin{align}
	\alpha^\pi_{h, \tau}(\cdot, a) & = r(\cdot ,a ) + \gamma^\pi_{h, \tau}(\cdot, a) \\
	\alpha^\pi_{h,\tau}(\cdot) & = \sum_{a} \pi_h(a | b) r(\cdot ,a) + \gamma^\pi_{h, \tau}(\cdot).  
\end{align}
This enables
\begin{align}
	Q^\pi_h(b, \tau, a) & = b^\top \alpha^\pi_{h, \tau}(\cdot, a) \\
	V^\pi_h(b, \tau) & = b^\top \alpha^\pi_{h, \tau}(\cdot).
\end{align}
We may repeat this recursion until the end $h = 1$. 

This culminates in the following $\alpha$-vector proposition.

\begin{restatable}[$\alpha$-vector representation]{proposition}{alphavector}
	\label{prop::alpha}
	Let $\pi$ be a fixed history-dependent policy. Let $b_h$ denote the belief vector (posterior distribution over $\XX$) given the history $\tau_h = (y_{1:h}, a_{1:h - 1})$. Then, there exist vectors $\alpha^\pi_{h, \tau_h}(\cdot) \in \R^X$ and $\alpha^\pi_{h, \tau_h}(\cdot ,a) \in \R^X$ such that, for all $(h, x, a,\tau_h)$, the following equations hold:
	\begin{align}
		V^\pi_h(b_h, \tau_h) & = b_h^\top \alpha^\pi_{h, \tau_h}(\cdot) \\
		Q^\pi_h(b_h, \tau_h) & = b_h^\top \alpha^\pi_{h, \tau_h}(\cdot, a) \\
		\alpha^\pi_{h, \tau_h}(x, a) & = r(x, a) + \sum_{x',  y'} T(x' | x, a) O(y' |  x') \alpha^\pi_{h + 1, \tau'}(x') \\
		\alpha^\pi_{h, \tau_h}(x) & = \sum_{a} \pi(a | \tau_h) \alpha^\pi_{h, \tau_h}(x, a),	 
	\end{align}
	where $\tau'$ is the concatenation of $\tau$ with observation $y'$ and action $a$, and $\alpha_{H + 1} = 0$.
	Furthermore, $\max \left\{ \alpha^\pi_{h, \tau_h} (x, a), \alpha^\pi_{h, \tau_h}( x) \right\} \leq G(H - h + 1)$ if $r(x, a) \in [0, G]$ for all $x \in \XX$ and $a \in \AA$ and $h \in [H]$.
\end{restatable}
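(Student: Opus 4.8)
The plan is to prove Proposition~\ref{prop::alpha} by induction on the step $h$, running backwards from $h = H+1$ to $h = 1$, exactly mirroring the derivation already sketched in the surrounding text but stated cleanly as a self-contained argument. The induction hypothesis at step $h+1$ is that there exist vectors $\alpha^\pi_{h+1,\tau_{h+1}}(\cdot) \in \R^X$ and $\alpha^\pi_{h+1,\tau_{h+1}}(\cdot,a) \in \R^X$, for every history $\tau_{h+1}$ of the appropriate length, such that $V^\pi_{h+1}(b_{h+1},\tau_{h+1}) = b_{h+1}^\top \alpha^\pi_{h+1,\tau_{h+1}}(\cdot)$ and $Q^\pi_{h+1}(b_{h+1},\tau_{h+1},a) = b_{h+1}^\top \alpha^\pi_{h+1,\tau_{h+1}}(\cdot,a)$ for all belief vectors $b_{h+1}$, together with the claimed recursion and the norm bound. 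The base case $h = H+1$ is trivial with $\alpha_{H+1} = 0$, and the case $h = H$ is the direct linear computation $Q^\pi_H(b,\tau,a) = \sum_x b(x) r(x,a) = b^\top r(\cdot,a)$.

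First I would write out the Bellman recursion for $Q^\pi_h$ in terms of $V^\pi_{h+1}$: conditioning on the next observation $y'$ drawn after playing $a$ from belief $b$, we have $Q^\pi_h(b,\tau,a) = b^\top r(\cdot,a) + \sum_{y'} P(y' \mid \tau,a) V^\pi_{h+1}(b',\tau')$, where $\tau' = \tau \oplus (a, y')$ and $b'$ is the Bayes-updated belief given by the formula in the excerpt. Second I would substitute the induction hypothesis $V^\pi_{h+1}(b',\tau') = (b')^\top \alpha^\pi_{h+1,\tau'}(\cdot)$ and observe that the normalizing denominator in the definition of $b'$ is precisely $P(y' \mid \tau, a)$, so it cancels against the $P(y' \mid \tau,a)$ factor from the sum over $y'$. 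This collapses the expression into $b^\top r(\cdot,a) + \sum_{x,x',y'} b(x) T(x'\mid x,a) O(y'\mid x') \alpha^\pi_{h+1,\tau'}(x')$, which is manifestly linear in $b$; reading off the coefficient of $b(x)$ gives the stated formula for $\alpha^\pi_{h,\tau_h}(x,a)$. Third, the value-function identity follows by averaging over actions: $V^\pi_h(b,\tau) = \sum_a \pi(a\mid\tau) Q^\pi_h(b,\tau,a) = b^\top \sum_a \pi(a\mid\tau)\alpha^\pi_{h,\tau}(\cdot,a)$, which defines $\alpha^\pi_{h,\tau_h}(\cdot)$.

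For the norm bound I would proceed by a parallel backward induction: assuming $\max\{\alpha^\pi_{h+1,\tau'}(x',a), \alpha^\pi_{h+1,\tau'}(x')\} \leq G(H - h)$, the recursion gives $\alpha^\pi_{h,\tau_h}(x,a) = r(x,a) + \sum_{x',y'} T(x'\mid x,a) O(y'\mid x')\alpha^\pi_{h+1,\tau'}(x') \leq G + \sum_{x'} T(x'\mid x,a)\big(\sum_{y'} O(y'\mid x')\big) G(H-h) = G + G(H-h) = G(H - h + 1)$, using that $T(\cdot\mid x,a)$ and $O(\cdot\mid x')$ are probability distributions and that $r(x,a) \in [0,G]$; the bound for $\alpha^\pi_{h,\tau_h}(x)$ follows since it is a convex combination over $a$ of the $\alpha^\pi_{h,\tau_h}(x,a)$. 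I would remark that a matching lower bound of $0$ holds by the same argument since rewards are nonnegative, though only the upper bound is needed.

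I do not anticipate a serious obstacle here — the proposition is essentially a bookkeeping result and all the algebraic steps already appear in the exposition preceding it. The one point requiring a little care is the cancellation of the belief-update denominator against $P(y' \mid \tau, a)$: one must verify that $P(y' \mid \tau, a) = \sum_{x''} O(y'\mid x'')\sum_x b(x) T(x''\mid x,a)$ under the posterior $b = b_h$, which is immediate from the law of total probability once one notes that $b_h(x) = P(x_h = x \mid \tau_h)$ and the transition/emission structure. The other mild subtlety is being careful that the vectors $\alpha^\pi_{h,\tau_h}$ depend only on the history $\tau_h$ (and the model $T, O, r$) and not on the particular belief $b_h$ at which they are evaluated — this is exactly what makes the linear representation meaningful, and it holds because the recursion defining them never references $b_h$.
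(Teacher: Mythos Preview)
Your proposal is correct and follows essentially the same approach as the paper: backward induction on $h$ using the Bellman recursion together with the cancellation of the belief-update normalizer against $P(y'\mid \tau,a)$ to establish linearity in $b$, and then a separate backward induction for the magnitude bound using that $T$ and $O$ are probability distributions. The paper in fact refers back to the preceding derivation for the representation equations and only writes out the bound induction explicitly, so your self-contained write-up is a modest elaboration of the same argument.
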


\begin{proof}
	We have already proved the equations by induction. It remains to show that the values we constructed satisfy the last statement, the bound. We will focus on $\alpha^\pi_{h, \tau}(x, a)$ since it is clear that if the bound is satisfied for this one, then it  is satisfied for $\alpha^\pi_{h, \tau}( x )$. Using proof by induction, we have the base case $\alpha^\pi_{H, \tau}(x,a) \leq \max_{x, a} r(x, a) \in [0, G]$. Then,
	\begin{align}
		\alpha^\pi_{h, \tau}(x, a) & = r(x, a) + \sum_{x', y'}  T(x' | x, a) O(y' | x') \alpha^\pi_{h + 1, \tau'} (x')  \\
		& \leq r(x, a) + \sum_{x', y'}  T(x' | x, a) O(y' | x')  G \left(H - (h + 1) + 1 \right) \\
		& = r(x, a) + G(H - h) \\
		& \leq G(H - h +1).
	\end{align} 
	This concludes the proof.

\end{proof}

\section{Full Statement and Proof of Theorem~\ref{thm::tabular}} \label{app::tabular}

Having established the notation and important concepts of the $\alpha$-vectors, we now proceed to the full statement (including all lower order terms) and the proof of Theorem~\ref{thm::tabular}, which will immediately put these concepts to use.

\subsection{Full statement of result of Theorem~\ref{thm::tabular}}
We let $a \lesssim b$ mean that $a \leq  c b$ where $c$ is a problem-independent constant.

\begin{theorem}
Let $\MM$ be a \setshort model with $X$ latent states and $Y$ observations.
With probability at least $1 - 5\delta$, \tabalg outputs a sequence of policies $\hat \pi_1, \ldots, \hat \pi_K$ such that
\begin{align}
	\sum_{k \in [K]} v(\pistar) - v(\hat \pi_k)  & \lesssim   \underbrace{    \sqrt{H^5 K \log (2/\delta)} }_{\text{Azuma-Hoeffding}} %
	+ \underbrace{  \sqrt{  Y X H^5 K \iota } }_{\text{Emission error}} 
	+  \underbrace { \sqrt{ X A H^4 K \iota }   +     H^4 X^2 A \iota (1 + \log(K)) }_{ \text{Transition error} } \\
	&\quad +  \underbrace{ H^3 X\sqrt{  Y \iota  }  +  H X A\sqrt{ H^3 \iota }   }_{\text{Residual pigeonhole error}},
	\end{align}
	where $\iota = \log (\nicefrac{2X^2 Y A KH }{ \delta})$. 
\end{theorem}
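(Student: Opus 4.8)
The plan is to analyze \tabalg{} as an optimistic model-based algorithm, but with value functions expressed through the $\alpha$-vector representation of Proposition~\ref{prop::alpha} so that the exploration can be tracked via latent state visitations rather than exponentially many histories. First I would establish the relevant high-probability events: (i) a transition value-error event of the form \defineHighProbT, controlling the effect of $\hat T_k - T_\star$ when contracted against $\alpha^{\pi_*}$-vectors via a Bernstein/Freedman argument (so we get $\sqrt{H^3 \log(\cdot)/n_k(x,a)}$ rather than a crude $\ell_1$ bound that would cost an extra $\sqrt{X}$); (ii) an emission-error event $\|O_\star(\cdot|x) - \hat O_k(\cdot|x)\|_1 \lesssim \sqrt{Y\log(\cdot)/n_k(x)}$ from \defineHighProbTab; and (iii) a self-normalized concentration/anti-concentration event of the form \defineHighProbC used to relate the optimistic model's transitions to the true ones multiplicatively. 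Each of these is a union bound over $k,h,x,a$ (and $\tau_h$ for the value-based one, but the $\alpha$-vector trick makes the bound history-independent) combined with a martingale concentration inequality over the hindsight-revealed latent trajectories.

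Next I would prove \emph{optimism}: with the bonuses $\hat r_k(x,a) = r(x,a) + H\epsilon_k(x) + \epsilon_k(x,a)$, the value of $\hat\pi_k$ in the estimated model $\MM(\hat T_k, \hat O_k, \hat r_k)$ upper bounds $v(\pistar)$. The argument is a simulation-lemma-style telescoping along the horizon: the per-step discrepancy between the true model and the estimated model, decomposed into a transition piece and an emission piece, is dominated by $\epsilon_k(x,a)$ and $H\epsilon_k(x)$ respectively, using the high-probability events; the $\alpha$-vector representation is what lets us bound the emission contribution (a mismatch in the belief update) by an $\ell_1$ error in $O$ times the $\OO(H)$ magnitude of the $\alpha$-vectors. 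Then the regret decomposes as $\sum_k v(\pistar) - v(\hat\pi_k) \leq \sum_k v_{\MM(\hat T_k, \hat O_k, \hat r_k)}(\hat\pi_k) - v(\hat\pi_k)$, and the same simulation/telescoping bounds this by $\sum_k \E_{\hat\pi_k}[\sum_h 2H\epsilon_k(x_h) + 2\epsilon_k(x_h,a_h)]$ plus an Azuma-Hoeffding term for replacing the expectation over $\hat\pi_k$'s trajectory distribution with the empirical one.

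Finally I would convert the visitation-weighted sum of bonuses into the stated bound via a pigeonhole (elliptical-potential-style) argument: $\sum_{k,h} \min\{\sqrt{\beta_1/n_k(x_h^k,a_h^k)}, 2H\} \lesssim \sqrt{\beta_1 X A K H} + \beta_1 XA\log(KH)$, and similarly $\sum_{k,h} H\min\{\sqrt{\beta_2/n_k(x_h^k)}, 2\} \lesssim H\sqrt{\beta_2 X K H} + H\beta_2 X\log(KH)$, where $n_k$ counts hindsight latent visitations so these bounds hold deterministically given the realized latent trajectories. Plugging in $\beta_1 = \constantCT H^3\log(\cdot)$ and $\beta_2 = \constantCO Y\log(\cdot)$ yields the $\sqrt{XAH^4 K\iota}$, $\sqrt{YXH^5 K\iota}$ leading terms, the $H^4X^2A\iota\log K$ lower-order transition term (the $X^2$ appears here only in the lower-order term because value-based optimism kept it out of the leading term), and the residual $H^3X\sqrt{Y\iota}$ and $HXA\sqrt{H^3\iota}$ terms from the boundary/truncation contributions of the $\min$. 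The main obstacle I anticipate is the emission-error handling: unlike the transition term, the emission estimate enters the value function only through the nonlinear belief-update map, so care is needed to show that an $\ell_1$ perturbation of $O$ propagates into at most an $\OO(H)$-scaled first-order error in the $\alpha$-vector inner product (and that the higher-order terms are genuinely lower order), and to simultaneously ensure the exploration bonus for $O$ need only be defined at the latent-state level $\epsilon_k(x)$ rather than over histories — this is exactly the place where the $\alpha$-vector machinery and the hindsight observability of $x_{1:H+1}$ are both essential.
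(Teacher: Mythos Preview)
Your overall skeleton (three concentration events, optimism via $\alpha$-vectors, regret decomposition, Azuma, pigeonhole) matches the paper, and you correctly identify event (iii) of the form $\EE_T^c$. The gap is in the sentence ``the same simulation/telescoping bounds this by $\sum_k \E_{\hat\pi_k}[\sum_h 2H\epsilon_k(x_h) + 2\epsilon_k(x_h,a_h)]$''. This step is \emph{not} symmetric to the optimism step. For optimism you compare $\hat\alpha^{\pistar}$ with $\alpha^{\pistar}$, and since $\alpha^{\pistar}$ is a fixed, data-independent vector, the event $\EE_T$ directly controls $\sum_{x',y'} O_\star(y'|x')(T_\star-\hat T_k)(x'|x,a)\,\alpha^{\pistar}_{h+1,\tau'}(x')$. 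For the regret upper bound you must instead control $(\hat\BB_{\tau_h}-\BB_{\tau_h})[\hat\alpha^{\hat\pi_k}_{h+1}]$, and $\hat\alpha^{\hat\pi_k}$ is \emph{data-dependent} (it is built from $\hat T_k,\hat O_k$), so $\EE_T$ does not apply to it; a union bound over all possible $\hat\alpha$ would reintroduce the history dependence you worked to avoid.

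The paper resolves this by adding and subtracting the fixed $\alpha^{\pistar}$: write $(\hat\BB-\BB)[\hat\alpha^{\hat\pi}_{h+1}]=(\hat\BB-\BB)[\hat\alpha^{\hat\pi}_{h+1}-\alpha^{\pistar}_{h+1}]+(\hat\BB-\BB)[\alpha^{\pistar}_{h+1}]$. The second piece is handled by $\EE_T$. The first piece is where event (iii) is actually used: the multiplicative Bernstein bound $\EE_T^c$ gives $(\hat\BB-\BB)[g]\le \tfrac{1}{2c}\BB[g]+O(H^2)\,\tilde\epsilon_k(c,x,a)$ with $\tilde\epsilon_k(c,x,a)\asymp cX\iota/n_k(x,a)$. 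One then uses optimality of $\pistar$ to replace $\alpha^{\pistar}_{h+1}$ by $\alpha^{\hat\pi}_{h+1}$ in expectation under $\hat\pi$, obtaining a recursion of the form $\E_{\hat\pi}[\hat\alpha^{\hat\pi}_h-\alpha^{\hat\pi}_h]\le (\text{bonuses})+(1+\tfrac{1}{2c})\E_{\hat\pi}[\hat\alpha^{\hat\pi}_{h+1}-\alpha^{\hat\pi}_{h+1}+\cdots]$, and the choice $c=H/2$ keeps $(1+\tfrac{1}{2c})^H\le e$. The extra term $H^2\tilde\epsilon_k(H/2,x,a)\asymp H^3X\iota/n_k(x,a)$, summed via the $\sum 1/n$ pigeonhole (not the $\sum 1/\sqrt n$ one), is what produces $H^4X^2A\iota(1+\log K)$; your attribution of this term to ``$\beta_1 XA\log(KH)$'' from the $\epsilon_k(x,a)$ pigeonhole is off by a factor $HX$. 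Finally, the emission handling you flag as the main obstacle is actually the easy part (a direct $\ell_1$ bound times $\|\alpha\|_\infty$ suffices); the real technical work is this data-dependence issue on the transition side.
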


\subsection{High-probability events}
We begin by defining several events that we show occur with high probability. 
For the optimal policy $\pistar$,  there exist vectors $\alpha^{\pistar}_{h, \tau_h} \in \R^X$, indexed by latent states, such that $V^{\pistar}_h(b_h, \tau_h) = b_h^\top \alpha^{\pistar}_{h, \tau_h}$ (see  Proposition~\ref{prop::alpha}). As such, we define the following event that bounds the deviation on $T_\star$ by leveraging $\alpha^{\pistar}$ in a similar manner to how the optimal value functions are leveraged in improved MDP analyses such as \citet{azar2017minimax}. Define
\begin{align}
\EE_{T} = \defineHighProbT,
\end{align}
where $C_T = \constantCT$.
For $h = H$, the left side is just zero. Recall that $\tau_h'$ denotes the concatenated partial history of $\tau_h$ with $(y', a)$ (i.e. the ``next-step" partial history.
The purpose of this event is to avoid resorting to a total variation bound which necessitates immediate dependence on $X$. By instead bounding quantities solely in terms of the $\alpha$-vector of the optimal policy, $\alpha^{\pistar}_{h + 1, \tau_h'}(x')$, we can still produce valid bonuses. Such tricks are often used to get the best MDP regret bounds \cite{azar2017minimax}.
However, in contrast to the MDP style analyses, the above event must hold across all possible observable histories $\tau_h$ which leads to additional polynomial dependence on $H$ (due to there being exponentially many histories in $H$ and a union bound over these histories) and polylogarithmic dependence on $Y$. However, using this approach will save a $X$ factor in the sample complexity bound.

We also consider the following alternative version of $\EE_T$. Let $c \geq 1$ be a constant, potentially dependent on $H$. Define
\begin{align}\label{eq::transition-event-c}
 \EE_T^c & = \defineHighProbC.
\end{align}

To handle estimation of the emission matrix, we will use a more conventional event based on the total variation difference of $O_\star$ and the estimated quantity $\hat O_k$:
\begin{align}
	\EE_O = \defineHighProbTab%
	,
\end{align}
where $C_O = \constantCO$.

\begin{lemma}
	$P(\EE_T) \geq 1- \delta$.
\end{lemma}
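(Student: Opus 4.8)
The plan is to recognize the left-hand side of $\EE_T$ as a one-dimensional projection of the transition-estimation error and to control it by a scalar martingale inequality, which lets us avoid any total-variation bound on $\hat T_k(\cdot|x,a) - T_\star(\cdot|x,a)$ (such a bound would introduce a direct $\sqrt{X}$ dependence, which we cannot afford). Fix a tuple $(h,x,a,\tau_h)$ with $h\le H-1$ (the case $h=H$ is trivial, as $\alpha^{\pistar}_{H+1,\cdot}\equiv 0$). Define the \emph{deterministic} vector $w = w_{h,a,\tau_h}\in\R^X$ by $w(x') = \sum_{y'\in\YY} O_\star(y'|x')\,\alpha^{\pistar}_{h+1,\tau_h'}(x')$, where $\tau_h'$ denotes $\tau_h$ extended by $(y',a)$. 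Since $\alpha^{\pistar}$ is the $\alpha$-vector of the fixed optimal policy in the fixed true model $\MM$, Proposition~\ref{prop::alpha} applied with $G=1$ (rewards, and hence these $\alpha$-vectors, are non-negative and bounded by $H$) gives $w(x')\in[0,H]$, and the quantity inside $\EE_T$ equals $\big(T_\star(\cdot|x,a) - \hat T_k(\cdot|x,a)\big)^\top w$. The key observation is that $w$ depends only on $h$, $a$, $\tau_h$ and on the fixed objects $T_\star,O_\star,\pistar$, never on $k$ or on the collected data, so contracting against $w$ turns a vector concentration into a genuinely scalar one.

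I would then set up the martingale. Enumerate the random times at which the latent state-action pair $(x,a)$ is visited, over all episodes and all steps, as $\sigma_1<\sigma_2<\cdots$, and let $W_i$ be the value $w(x^\ell_{h'+1})$ when the $i$-th visit falls at step $h'$ of episode $\ell$. With respect to the natural filtration $\{\mathcal{G}_i\}$, the Markov property gives $\E[W_i\,|\,\mathcal{G}_{i-1}] = \sum_{x'} T_\star(x'|x,a)\,w(x') = T_\star(\cdot|x,a)^\top w$, so $W_i - T_\star(\cdot|x,a)^\top w$ is a martingale difference sequence taking values in an interval of width at most $H$. Since $n_k(x,a)\le KH$ always, Azuma--Hoeffding for each candidate count $m\in\{1,\dots,KH\}$ together with a union bound over $m$ shows that, with probability at least $1-\delta'$ and simultaneously for all $k$,
\[
\big|(\hat T_k - T_\star)(\cdot|x,a)^\top w\big| \;=\; \frac{1}{n_k(x,a)}\Big|\sum_{i=1}^{n_k(x,a)} \big(W_i - T_\star(\cdot|x,a)^\top w\big)\Big| \;\le\; \sqrt{\frac{H^2 \log(2KH/\delta')}{2\, n_k(x,a)}},
\]
where we may assume $n_k(x,a)\ge 1$ (the bound in $\EE_T$ being vacuous otherwise).

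The final step is the union bound over all tuples. For each $h$ there are $X\cdot A\cdot Y^hA^{h-1}$ such tuples, hence at most $HX(YA)^H$ in total; taking $\delta'=\delta/\big(HX(YA)^H\big)$ makes the total failure probability at most $\delta$. Here $\log(1/\delta')$ contributes an extra additive $H\log(YA)$, and multiplying this by the $H^2$ factor in the Azuma bound is exactly what turns the $H^2$ of a typical optimistic MDP analysis into the $H^3$ appearing in $\EE_T$. After coarsely simplifying $\log(2KH/\delta')$ (which only needs $H\ge1$ and $XYA\ge2$), the right-hand side above is at most $\sqrt{C_T H^3\log(YXAHK/\delta)/n_k(x,a)}$ with $C_T = \constantCT$; since the martingale bound is two-sided, it implies the one-sided inequality defining $\EE_T$, which completes the proof.

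I expect the main obstacle to be the $(YA)^H$-size union bound over observable histories while preserving the linear-in-$X$, polylogarithmic-in-$Y$ scaling: the event must hold for \emph{every} partial history $\tau_h$, and any argument that passes through $\|\hat T_k(\cdot|x,a) - T_\star(\cdot|x,a)\|_1$ (and then $\|w\|_\infty$), or that union bounds coordinate-by-coordinate, would reintroduce a factor of $X$. Keeping $w$ fixed so the concentration is one-dimensional, and absorbing the logarithm of the number of histories into one extra power of $H$, is precisely what keeps the bound at the desired scale. The only other delicate point is the standard but fiddly bookkeeping --- reindexing by visit and union bounding over the count --- needed to upgrade a fixed-$m$ Azuma inequality into one that holds at the data-dependent visitation count $n_k(x,a)$, uniformly over all rounds $k$.
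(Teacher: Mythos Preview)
Your proof is correct and follows essentially the same route as the paper: fix the deterministic test vector $w(x')=\sum_{y'}O_\star(y'|x')\alpha^{\pistar}_{h+1,\tau_h'}(x')$, apply a scalar Hoeffding-type bound to the empirical transition error projected onto $w$, and union bound over all $(n,h,x,a,\tau_h)$ with $|\tau_h|\le (YA)^H$ so the logarithm contributes the extra $H$. The only cosmetic difference is that the paper uses the ``preemptive i.i.d.\ sampling'' device (for each $(x,a)$, draw $KH$ independent next-states from $T_\star(\cdot|x,a)$ up front and reveal them in order, which is distributionally equivalent to the online process) and then invokes ordinary Hoeffding, whereas you work directly with the visit-indexed martingale and Azuma--Hoeffding; both arguments yield the same bound with $C_T=\constantCT$.
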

\begin{proof} For each $x, a$ we will assume that $KH$ independent transitions are preemptively sampled from $T(\cdot | x, a)$ and revealed in order to the learner with each visit to $(x, a)$, since this is distributionally identical to the interface the learner encounters. Let $\hat T_{(n)}(\cdot | x, a)$ denote the empirical distribution estimated with the first $n \in [KH]$ samples.
We can then apply Hoeffding's inequality to a specific sum of independent variables. Consider fixed $n \in [KH], x, a$ and an arbitrary function $g: \XX \to \R$ such that $ \| g\|_\infty \leq G$ uniformly for some $G \geq 0$. Here, we will use $x'_{(i)}$ to denote the realizations of the preemptive samples for $i \in [n]$. Then, with probability at least $1 - \delta$,
	\begin{align}
	\sum_{ x'}  g(x') \left( \hat T_{(n)} (x'  | x, a) -  T(x' | x, a) \right) & = {1 \over n} \sum_{i \in [n]}   \left(  \sum_{x'} g(x') \left(  \1 \{x'_{(i)} = x'\} -  T_\star(x' | x, a) \right) \right). 
	\end{align}
	Since this is a sum of independent random variables bounded within $[-G, G]$, Hoeffding's inequality implies that 
	\begin{align}
	\sum_{ x'}  g(x') \left( \hat T_{(n)} (x'  | x, a) -  T_\star(x' | x, a) \right) & \leq  2G \sqrt{ \log(1/\delta)  \over 2 n}
	\end{align}
	with probability at least $1 - \delta$.  Then, we can simply choose $g(x') = \sum_{y'} O_\star(y' | x') \alpha^{\pi_*}_{h + 1, \tau_h'}(x')$, which is fixed and has $|g(x')| \leq  (H - h)$ via the bound on the size of the $\alpha$-vectors due to Proposition~\ref{prop::alpha}. Taking the union bound over all $n \in [KH], h \in [H], x \in \XX, a \in \AA, \tau_h \in \YY^h \times \AA^{h - 1}$, we have
	\begin{align*}
	    2(H - h) \sqrt{ \log\left(Y^h X A^{h - 1} H^2 K  /\delta\right)  \over 2 n} \leq   \sqrt{ C_T H^3 \log \left(YXAHK/\delta\right)},
	\end{align*}
	which gives the result with constant $C_T = \constantCT$.
\end{proof}

\begin{lemma}
	$P(\EE_T^c) \geq 1 - \delta$.
\end{lemma}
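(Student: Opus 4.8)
The plan is to run exactly the same argument as in the proof of $P(\EE_T)\ge 1-\delta$, but to replace Hoeffding's inequality by a one-sided Bernstein inequality, so as to obtain the variance-adaptive ``multiplicative plus additive'' form that defines $\EE_T^c$. As there, fix a pair $(x,a)$ and assume that $KH$ i.i.d.\ next-states $x'_{(1)},x'_{(2)},\dots$ are sampled up front from $T_\star(\cdot\mid x,a)$ and revealed one at a time on each visit to $(x,a)$; this is distributionally identical to the learner's interface and makes $\hat T_k(\cdot\mid x,a)$ equal to the empirical distribution $\hat T_{(n)}(\cdot\mid x,a)$ formed from the first $n=n_k(x,a)$ of these samples. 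It therefore suffices to bound $\hat T_{(n)}(x'\mid x,a)-T_\star(x'\mid x,a)$ uniformly over $n\in[KH]$, $(x,a)$, and $x'$.

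Fix $n,x,a,x'$ and write $p=T_\star(x'\mid x,a)$. Then $\hat T_{(n)}(x'\mid x,a)$ is an empirical mean of $n$ i.i.d.\ $\mathrm{Ber}(p)$ variables with variance $p(1-p)\le p$, so Bernstein's inequality gives, with probability at least $1-\delta'$,
\begin{align*}
  \hat T_{(n)}(x'\mid x,a)-p \;\le\; \sqrt{\tfrac{2p\log(1/\delta')}{n}} \;+\; \tfrac{2\log(1/\delta')}{3n}.
\end{align*}
Next, split the square-root term by a weighted AM--GM, $\sqrt{\tfrac{2p\log(1/\delta')}{n}}=\sqrt{\big(\tfrac{p}{c}\big)\big(\tfrac{2c\log(1/\delta')}{n}\big)}\le \tfrac{p}{2c}+\tfrac{c\log(1/\delta')}{n}$, and absorb the remaining $O(\log(1/\delta')/n)$ term using $c\ge 1$; this yields $\hat T_{(n)}(x'\mid x,a)-T_\star(x'\mid x,a)\le \tfrac{T_\star(x'\mid x,a)}{2c}+\tfrac{2c\log(1/\delta')}{n}$.

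Finally, take a union bound over the $X^2A\cdot KH$ events indexed by $(x,a,x')$ and $n\in[KH]$, choosing $\delta'=\delta/(X^2AKH)$ so that $\log(1/\delta')=\log(X^2AKH/\delta)$; since the union over $n$ covers every realized count $n_k(x,a)$ for every round $k$, the resulting statement is exactly $\EE_T^c$. There is no genuine obstacle in this lemma: the only points needing care are the preemptive-sampling reduction (identical to the one used for $\EE_T$, letting a union over $n$ stand in for a union over $k$), and choosing the AM--GM split so that the two constants $\tfrac1{2c}$ and $2c$ in the definition come out correctly, which is precisely where the hypothesis $c\ge 1$ is invoked.
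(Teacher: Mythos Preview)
Your proof is correct and matches the paper's approach exactly: the paper also reduces to the preemptive-sampling model, applies Bernstein's inequality (packaged in its Lemma~\ref{lem::bernstein}, which already contains the AM--GM step you spell out) to each Bernoulli indicator $\mathbf{1}\{x'_{(i)}=x'\}$, and then takes a union bound over $n\in[KH]$, $x,x'\in\XX$, and $a\in\AA$. The only cosmetic difference is that the paper absorbs the AM--GM manipulation into its statement of Bernstein's inequality, whereas you carry it out explicitly.
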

\begin{proof}
	We again use the distributionally equivalent notation from the prior proof and the same notation for $n$.
	By Bernstein's inequality (Lemma~\ref{lem::bernstein}), with probability at least $1 - \delta$,
	\begin{align}
	\hat T_{(n)}(x' | x, a) - T_\star(x' | x, a)
	& \leq {T_\star(x' | x, a) \over 2c } + {2c  \log(1/\delta) \over n }.
	\end{align}
	Taking the union bound over all $n \in [KH]$, $x, x' \in \XX$, and $a \in \AA$ gives the result.
\end{proof}

$ \EE^c_T$ immediately implies the following error bound.

\begin{corollary}\label{cor::transition-mean}
	Let $g : \XX  \to [-G, G]$ be a bounded function for $G \geq 0$. Suppose that $ \EE_T^c$ holds. Then, for all $x, a, k$, 
	\begin{align}
	\sum_{x'} \left(\hat T_k(x' | x,a ) - T_\star(x' | x, a) \right)  g(x')  & \leq {1 \over 2c }\sum_{x'} T_\star(x' | x, a) g(x') +  {2c G X \log(X^2 AKH /\delta) \over n_k(x, a)} .
	\end{align}
\end{corollary}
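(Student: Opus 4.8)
The plan is a short direct computation driven entirely by the definition of $\EE_T^c$. On this event we have, for every $k,x,a$ and \emph{every} $x'\in\XX$ simultaneously,
\[
\hat T_k(x'\mid x,a)-T_\star(x'\mid x,a)\;\le\;\frac{T_\star(x'\mid x,a)}{2c}+\frac{2c\log(X^2AKH/\delta)}{n_k(x,a)}.
\]
(If $n_k(x,a)=0$ the claimed inequality is vacuous, since its right-hand side is then $+\infty$, so assume $n_k(x,a)\ge 1$.)

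First I would reduce to nonnegative $g$. In every use of this corollary $g$ is already of the form $g(x')=\sum_{y'}O_\star(y'\mid x')\,\alpha^{\pistar}_{h+1,\tau'_h}(x')\ge 0$, so one may simply take $g:\XX\to[0,G]$; more generally, since $\hat T_k(\cdot\mid x,a)$ and $T_\star(\cdot\mid x,a)$ are both distributions on $\XX$ we have $\sum_{x'}\bigl(\hat T_k(x'\mid x,a)-T_\star(x'\mid x,a)\bigr)=0$, so $g$ may be replaced by $g+G\in[0,2G]$ without changing the left-hand side (while $\sum_{x'}T_\star(x'\mid x,a)(g(x')+G)=\sum_{x'}T_\star(x'\mid x,a)g(x')+G$), at the cost of adjusting the universal constant in the additive term. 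Assuming then $g\ge 0$, I would multiply the displayed event inequality by $g(x')\ge 0$ and sum over $x'$:
\[
\sum_{x'}\bigl(\hat T_k(x'\mid x,a)-T_\star(x'\mid x,a)\bigr)g(x')\;\le\;\frac{1}{2c}\sum_{x'}T_\star(x'\mid x,a)g(x')+\frac{2c\log(X^2AKH/\delta)}{n_k(x,a)}\sum_{x'}g(x').
\]
Finally I would bound $\sum_{x'}g(x')\le GX$, using $g(x')\le G$ for each of the $X$ latent states, which gives exactly the stated inequality.

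There is no substantive obstacle here: the entire content sits in the definition of $\EE_T^c$, whose multiplicative-plus-additive (Bernstein-type) form is precisely what lets the $T_\star(x'\mid x,a)$-weighted term be traded off against the $1/n_k(x,a)$ term — this is exactly why the corollary is phrased this way rather than through a plain total-variation bound, which would incur an extra factor of $X$. The only mild subtlety is the sign of $g$, which is handled by the shift-by-$G$ argument above and is in any case irrelevant in all downstream invocations, where $g\ge 0$.
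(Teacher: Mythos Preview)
Your argument is exactly the paper's: the paper's entire proof is the single sentence ``The proof is immediate by rearranging the definition of $\EE_T^c$,'' and you have spelled out precisely that rearrangement (multiply the event inequality by $g(x')$ and sum over $x'$). You are in fact more careful than the paper in flagging the sign of $g$. One small remark: your shift-by-$G$ reduction leaves a residual $G/(2c)$ on the right-hand side, coming from $\tfrac{1}{2c}\sum_{x'}T_\star(x'\mid x,a)\cdot G$, which is \emph{not} absorbed into the additive $2cGX\log(\cdots)/n_k(x,a)$ term, so it does not quite recover the inequality exactly as stated for signed $g$; the paper does not address this either, and your instinct that nonnegative $g$ suffices downstream is the right resolution---though note the second application (in the proof of Lemma~\ref{lem::B-bound}) uses $g(x')=\sum_{y'}O_\star(y'\mid x')\bigl(\hat\alpha^{\hat\pi}_{h+1,\tau_h'}(x')-\alpha^{\pistar}_{h+1,\tau_h'}(x')\bigr)$, not the form you described.
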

\begin{proof}
	The proof is immediate by rearranging the definition of $ \EE^c_T$.
\end{proof}

\begin{lemma}\label{lem::emission-tab}
	$P(\EE_O) \geq 1-  \delta$.
\end{lemma}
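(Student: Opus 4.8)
The plan is to reduce Lemma~\ref{lem::emission-tab} to a standard $\ell_1$-concentration bound for empirical distributions over the $Y$ observations, followed by a union bound, reusing the same coupling trick as in the proofs of $\EE_T$ and $\EE_T^c$ to handle the random visit counts $n_k(x)$.

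First, for each latent state $x \in \XX$ I would imagine that $KH$ independent observations are drawn up front from $O_\star(\cdot\mid x)$ and revealed one at a time to the learner upon each visit to $x$; this is distributionally identical to the true interface, and it makes $\hat O_k(\cdot\mid x)$ coincide with the empirical distribution $\hat O_{(n)}(\cdot\mid x)$ of the first $n = n_k(x)$ pre-sampled observations at $x$. It therefore suffices to control $\| \hat O_{(n)}(\cdot\mid x) - O_\star(\cdot\mid x) \|_1$ simultaneously over all $n \in [KH]$ and all $x \in \XX$, after which substituting $n = n_k(x)$ for each $k$ recovers $\EE_O$ (when $n_k(x) = 0$ the right-hand side is read as $+\infty$, so that case is vacuous).

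Second, for a fixed $x$ and fixed $n \geq 1$, I would invoke a standard concentration inequality for the $\ell_1$ deviation of an empirical distribution over a support of size $Y$ from its mean, e.g. the Weissman et al.\ bound $\Pr(\| \hat O_{(n)}(\cdot\mid x) - O_\star(\cdot\mid x) \|_1 > \epsilon) \le 2^Y e^{-n\epsilon^2/2}$, or equivalently a bounded-differences argument using $\E \| \hat O_{(n)} - O_\star \|_1 \le \sqrt{Y/n}$ (by Cauchy--Schwarz on the per-coordinate standard deviations) together with McDiarmid's inequality, since changing one sample perturbs $\hat O_{(n)}$ by at most $2/n$ in $\ell_1$. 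Setting the per-event failure probability to $\delta/(XKH)$ gives, off a bad event, $\| \hat O_{(n)}(\cdot\mid x) - O_\star(\cdot\mid x) \|_1 \le \sqrt{ c(Y\log 2 + \log(XKH/\delta)) / n }$ for an absolute constant $c$; since $Y \geq 1$ and $XKH/\delta \geq 1$, the numerator is at most an absolute constant times $Y\log(YXKH/\delta)$, so choosing $C_O = \constantCO$ large enough absorbs all numerical factors and yields precisely the bound in the definition of $\EE_O$.

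Third, I would take a union bound over the $XKH$ pairs $(x, n)$ with $x \in \XX$ and $n \in [KH]$, so that with probability at least $1 - \delta$ the inequality above holds for every such pair, hence $\EE_O$ holds. There is no genuine obstacle here beyond bookkeeping the logarithmic factors into the constant $C_O$; the only mild subtlety is that the counts $n_k(x)$ are data-dependent and correlated with the emissions, and this is dispatched exactly by the pre-sampling reduction, just as in the preceding lemmas.
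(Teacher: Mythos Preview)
Your proposal is correct, and the coupling/pre-sampling reduction to handle the random counts $n_k(x)$ is exactly what the paper does as well. The only difference is the concentration step itself: you invoke a direct $\ell_1$-deviation bound for empirical distributions (Weissman et al.\ or McDiarmid on top of $\E\|\hat O_{(n)} - O_\star\|_1 \le \sqrt{Y/n}$), whereas the paper applies a per-coordinate Bernstein inequality to each $y$, obtaining $|\hat O_{(n)}(y\mid x) - O_\star(y\mid x)| \le \sqrt{2 O_\star(y\mid x)\iota/n} + \iota/(3n)$, then sums over $y$ and uses Cauchy--Schwarz on $\sum_y \sqrt{O_\star(y\mid x)} \le \sqrt{Y}$, with a short case split to absorb the lower-order $Y\iota/n$ term into $\sqrt{Y\iota/n}$. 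Your route is slightly more direct and treats the $\ell_1$ norm as a single object; the paper's route is self-contained within the inequalities it already states (Lemma~\ref{lem::bernstein}) and does not need an external $\ell_1$ concentration lemma. Either way the final bound and the constant $C_O = \constantCO$ come out the same.
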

\begin{proof}
	A similar approach via Bernstein's inequality (Lemma~\ref{lem::bernstein}) guarantees the following for the analogously defined $\hat O_{(n)} (y | x)$ for $n \in [KH]$. With probability at least $1 - \delta$, for all $x, y, n$, 
	\begin{align}
		| \hat O_{(n)} (y | x) - O_\star(y | x) | \leq \sqrt{  2 O_\star(y | x) \iota  \over n } + {   \iota  \over 3 n },
	\end{align}
	where $\iota = \log(2 YX KH/\delta)$. 
	Therefore, 
	
	\begin{align}
		{1 \over 2} \|  \hat O_{(n)} (\cdot  | x) - O_\star(\cdot | x) \|_1 & \leq {   Y  \iota  \over 6 n } + \sum_y  \sqrt{   O_\star(y | x) \iota  \over  2n }   \\
		& \leq {   Y  \iota  \over 2 n } + \sqrt{   Y \iota  \over  2n } \\
		& \leq \sqrt{ 2 Y \iota \over n},
	\end{align}
	where the last line follows from the fact that ${1 \over 2} \| \hat O_{(n)} (\cdot  |x) - O_\star(\cdot | x) \|_1 \leq 1 $ always. So if the upper bound of the right side is at most $1$ then ${ Y \iota \over 2n }$ is at most $1$. Therefore, the desired bound holds with $C_O = \constantCO$.
\end{proof}

\subsection{Optimism via reward bonuses}

We let $\hat \alpha_k$ and $\alpha$ denote the $\alpha$-vectors under the learned model $\widehat\MM_k$, which uses the transition function $\hat T_k$ and emission function $\hat O_k$ and bonus reward function $\hat r_k$, and the true model $\MM$, respectively.

\begin{lemma}\label{lem::alpha-optimism-refined} Suppose that $\EE_T$ and $\EE_O$ hold.
	For all $k, h, \tau_h, x$, it holds that $\hat \alpha^{\pistar}_{k, h, \tau_h}(x) \geq  \alpha^{\pistar}_{h, \tau_h}(x) +  H \epsilon_k(x)$ for all $h \in [H]$.
\end{lemma}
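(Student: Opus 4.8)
The plan is to prove the claimed lower bound $\hat\alpha^{\pistar}_{k,h,\tau_h}(x) \ge \alpha^{\pistar}_{h,\tau_h}(x) + H\epsilon_k(x)$ by backward induction on $h$, starting from $h = H+1$ where both sides are zero and the inequality holds trivially (interpreting $\epsilon_k$ as bounded and the $\alpha$-vectors as $0$). For the inductive step, I would expand $\hat\alpha^{\pistar}_{k,h,\tau_h}(x)$ using the $\alpha$-vector recursion from Proposition~\ref{prop::alpha} applied to the learned model $\widehat\MM_k$ with bonus reward $\hat r_k(x,a) = r(x,a) + H\epsilon_k(x) + \epsilon_k(x,a)$:
\begin{align*}
\hat\alpha^{\pistar}_{k,h,\tau_h}(x) = \sum_a \pistar(a\mid\tau_h)\Big( r(x,a) + H\epsilon_k(x) + \epsilon_k(x,a) + \sum_{x',y'} \hat T_k(x'\mid x,a)\hat O_k(y'\mid x')\,\hat\alpha^{\pistar}_{k,h+1,\tau_h'}(x')\Big),
\end{align*}
and compare term by term against the true recursion for $\alpha^{\pistar}_{h,\tau_h}(x)$. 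The $H\epsilon_k(x)$ bonus term is carried through directly to match the conclusion. What remains is to show that the combination of the $\epsilon_k(x,a)$ bonus plus the inductively-inflated future value under $\hat T_k,\hat O_k$ dominates the true future value $\sum_{x',y'} T_\star(x'\mid x,a)O_\star(y'\mid x')\alpha^{\pistar}_{h+1,\tau_h'}(x')$.

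I would split the error into a transition piece and an emission piece. For the transition piece, I would write $\sum_{x',y'}(\hat T_k - T_\star)(x'\mid x,a)O_\star(y'\mid x')\alpha^{\pistar}_{h+1,\tau_h'}(x') = \sum_{x'}(\hat T_k - T_\star)(x'\mid x,a)\,g(x')$ where $g(x') = \sum_{y'}O_\star(y'\mid x')\alpha^{\pistar}_{h+1,\tau_h'}(x')$ is exactly the function appearing in the event $\EE_T$; so on $\EE_T$ this is at most $\sqrt{C_T H^3\log(YXAHK/\delta)/n_k(x,a)}$, which by the choice $\beta_1 = \constantCT H^3\log(YXAHK/\delta)$ is at most $\epsilon_k(x,a)$ (using $\epsilon_k(x,a) = \min\{\sqrt{\beta_1/n_k(x,a)}, 2H\}$ and the crude bound $g(x')\le H$ to handle the clipped regime). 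For the emission piece, I would bound $\sum_{x',y'}\hat T_k(x'\mid x,a)(\hat O_k - O_\star)(y'\mid x')\alpha^{\pistar}_{h+1,\tau_h'}(x')$; here I need to move from $\hat O_k$ back to the inductively-available quantity, which is $\hat\alpha_k$ in the future, not $\alpha^{\pistar}$ — so I would use the inductive hypothesis $\hat\alpha^{\pistar}_{k,h+1,\tau_h'}(x') \ge \alpha^{\pistar}_{h+1,\tau_h'}(x') + H\epsilon_k(x')$ to absorb an $\ell_1$-error term $\|\hat O_k(\cdot\mid x') - O_\star(\cdot\mid x')\|_1$ times $\|\alpha^{\pistar}_{h+1}\|_\infty \le H$, and on $\EE_O$ this is at most $H\sqrt{C_O Y\log(YXKH/\delta)/n_k(x')} \le H\epsilon_k(x')$, which is precisely the slack the inductive hypothesis provides at state $x'$.

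Concretely, I would carry out the emission step by inserting and subtracting: $\sum_{x',y'}\hat T_k O_\star\,\alpha^{\pistar}_{h+1} \le \sum_{x',y'}\hat T_k\hat O_k(\alpha^{\pistar}_{h+1} + H\epsilon_k) \le \sum_{x',y'}\hat T_k\hat O_k\,\hat\alpha^{\pistar}_{k,h+1}$, where the first inequality uses $O_\star(y'\mid x')\alpha^{\pistar}_{h+1}(x') \le \hat O_k(y'\mid x')\alpha^{\pistar}_{h+1}(x') + H\epsilon_k(x')\hat O_k(y'\mid x')$ after summing $y'$ — actually I need to be careful and instead bound $\sum_{y'}(O_\star - \hat O_k)(y'\mid x')\alpha^{\pistar}_{h+1,\tau_h'}(x')$ which, since $\alpha^{\pistar}_{h+1,\tau_h'}(x')$ does not depend on $y'$, equals $\alpha^{\pistar}_{h+1,\tau_h'}(x')\cdot\big(\sum_{y'}(O_\star - \hat O_k)(y'\mid x')\big) = 0$. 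That observation simplifies things: the emission error actually vanishes at this layer because the future $\alpha$-vector $\alpha^{\pistar}_{h+1,\tau_h'}$ is indexed by the \emph{next} history $\tau_h'$ which already includes $y'$ — so I need to be more careful about which $y'$ the next-step history sees. The correct accounting is that $\sum_{y'}\hat O_k(y'\mid x')\hat\alpha^{\pistar}_{k,h+1,\tau_h'}(x')$ with $\tau_h'$ depending on $y'$, versus $\sum_{y'}O_\star(y'\mid x')\alpha^{\pistar}_{h+1,\tau_h'}(x')$, so the emission mismatch is genuinely $\sum_{y'}(\hat O_k - O_\star)(y'\mid x')\alpha^{\pistar}_{h+1,\tau_h'}(x')$ with $\tau_h'$ varying, bounded by $\|\hat O_k(\cdot\mid x') - O_\star(\cdot\mid x')\|_1 \cdot \max_{y'}|\alpha^{\pistar}_{h+1,\tau_h'}(x')| \le H\|\hat O_k - O_\star\|_1 \le H\epsilon_k(x')$ on $\EE_O$, which the inductive hypothesis's $H\epsilon_k(x')$ slack absorbs exactly. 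I expect the main obstacle to be precisely this bookkeeping — tracking that the $H$-scaled emission bonus at a state $x'$ is what lets the induction close at the parent, while simultaneously keeping the transition error $\EE_T$ bounded only in terms of the $\alpha$-vector of $\pistar$ (not a total-variation bound) so as to avoid the spurious $X$ factor — together with verifying the clipping cases ($\epsilon_k(x,a) = 2H$, $\epsilon_k(x) = 2$, or $n_k = 0$) are handled by the trivial bounds $\alpha^{\pistar}_{h,\tau_h}(x)\in[0,H]$ from Proposition~\ref{prop::alpha}.
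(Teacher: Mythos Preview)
Your approach is essentially the paper's: backward induction on $h$, expanding the true $\alpha$-recursion, splitting the model error into a transition piece controlled by $\EE_T$ (applied to the fixed function $g(x')=\sum_{y'}O_\star(y'\mid x')\alpha^{\pistar}_{h+1,\tau_h'}(x')$) and an emission piece controlled in $\ell_1$ by $\EE_O$, with the inductive slack $H\epsilon_k(x')$ at the child states absorbing the $(H-h)\epsilon_k(x')$ emission error. After you straighten out the $y'$-dependence of $\tau_h'$ (which you do), this is exactly the paper's decomposition and the accounting closes for the same reason.

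One concrete step fails, however: your base case. At $h=H+1$ both $\alpha$-vectors are zero, so the claimed inequality reads $0\ge 0 + H\epsilon_k(x)$, which is false whenever $\epsilon_k(x)>0$. The induction must start at $h=H$, where $\hat\alpha^{\pistar}_{H,\tau_H}(x,a)=\hat r_k(x,a)=r(x,a)+H\epsilon_k(x)+\epsilon_k(x,a)\ge \alpha^{\pistar}_{H,\tau_H}(x,a)+H\epsilon_k(x)$; this is how the paper anchors it. Also, in your transition piece the sign you need is $\sum_{x',y'}O_\star(y'\mid x')\big(T_\star-\hat T_k\big)(x'\mid x,a)\alpha^{\pistar}_{h+1,\tau_h'}(x')\le \epsilon_k(x,a)$ (you wrote $\hat T_k-T_\star$), which is precisely the one-sided direction that $\EE_T$ provides. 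With those two fixes your argument goes through and matches the paper's proof line for line.
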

\begin{proof}
	The proof is by induction on $h$. Let $k$ be fixed so we can drop the subscript notation for it. Observe that we clearly have the base case via Proposition~\ref{prop::alpha}:
	\begin{align}
	\hat \alpha^{\pistar}_{H, \tau_H} (x, a) & = \hat r(x,a) \\
	& =  r(x,a ) + H \epsilon(x) + \epsilon(x, a) \\
	& = \alpha^{\pistar}_{H, \tau_H} (x, a) + H \epsilon(x) + \epsilon(x, a) \\
	& \geq \alpha^{\pistar}_{H, \tau_H} (x, a) + H \epsilon(x).
	\end{align} 
	Fix $h \in [H - 1]$. Recall the definition of $\tau_h'$ as the ``next-step" partial history. Assume that $\hat \alpha^{\pistar}_{ h + 1, \tau_h'}(x) \geq \hat \alpha^{\pistar}_{ h + 1, \tau_h'}(x) + H \epsilon(x)$. Then,
	\begin{align}
	\alpha^{\pistar}_{h, \tau_h}(x, a) & = r(x, a) + \sum_{x' , y'} T_\star(x' | x, a) O_\star(y' | x')  \alpha^{\pistar}_{h + 1, \tau_h' }(x') \\
	& =  r(x,a ) + \sum_{x', y'} O_\star(y' | x')   \left( T_\star (x' | x, a) - \hat T_\star(x' | x, a)  \right) \alpha^{\pistar}_{h + 1,\tau_h'}(x') \\
	& \quad + \sum_{x', y'} \hat T(x' | x, a)  \left(  O_\star(y ' | x' ) - \hat O(y' | x')  \right)\alpha^{\pistar}_{h + 1,\tau_h'}(x')   \\
	& \quad + \sum_{x', y'} \hat T(x' | x,a) \hat O(y' | x') \alpha^{\pistar}_{h + 1,\tau_h'}(x').
	\end{align}
	The first summation is bounded using $\EE_T$:
	\begin{align}
	\sum_{x', y'} O_\star(y' | x')   \left( T_\star (x' | x, a) - \hat T(x' | x, a)  \right) \alpha^{\pistar}_{h + 1,\tau_h'}(x') & \leq \sqrt{ C_T  H^3 \log (YX A HK  /\delta ) \over n(x, a) }  \\
	& \leq  \epsilon(x, a). 
	\end{align}
	The second summation can be bounded in terms of the total variation distance for the emission matrices along with $\EE_O$:
	\begin{align}
	\sum_{x', y'} \hat T(x' | x, a)  \left(  O_\star(y ' | x' ) - \hat O(y' | x')  \right)\alpha^{\pistar}_{h + 1,\tau_h'}(x') & \leq \sum_{x'} (H - h)  \hat T(x' | x, a) \| O_\star(\cdot | x') - \hat O(\cdot | x') \|_1 \\
	& \leq \sum_{x'} (H - h)  \hat T(x' | x, a) \epsilon(x'),
	\end{align}
	which also uses Proposition~\ref{prop::alpha} to bound the magnitude of $\alpha^{\pistar}$. Finally, for the third summation, we can use the inductive hypothesis to get
	\begin{align}
	\sum_{x', y'} \hat T(x' | x,a) \hat O(y' | x') \alpha^{\pistar}_{h + 1,\tau_h'}(x') & \leq \sum_{x', y'} \hat T(x' | x,a) \hat O(y' | x') \left( \hat \alpha^{\pistar}_{h + 1,\tau_h'}(x') - H  \epsilon(x') \right).
	\end{align}
	Combining these three individual bounds, we have
	\begin{align}
\alpha^{\pistar}_{h, \tau_h}(x, a) & \leq  r(x, a) +  \epsilon(x, a)  + \sum_{x'} (H - h) \hat T(x' | x, a)  \epsilon(x') \\
& \quad + \sum_{x', y'} \hat T(x' | x,a) \hat O(y' | x') \left( \hat \alpha^{\pistar}_{h + 1,\tau_h'}(x') - H  \epsilon(x') \right) \\
& \leq r(x, a) +  \epsilon(x, a) + \sum_{x', y'} \hat T(x' | x,a) \hat O(y' | x') \hat \alpha^{\pistar}_{h + 1,\tau_h'}(x')  \\
& = \hat \alpha^{\pistar}_{h, \tau_h}(x, a) - H \epsilon(x),
	\end{align}
where we have added and subtracted $H \epsilon(x)$ and used the definition of $\hat \alpha^{\pistar}_{h, \tau_h}$ (Propostion~\ref{prop::alpha}) in the last step. Applying this inductively gives the result.

\end{proof}

\begin{lemma}\label{lem::learned-alpha-bound}
	For all history-dependent $\pi$ and all $k, h, \tau_h$, it holds that $\| \hat \alpha^\pi_{k, h, \tau_h} \|_{\infty} \leq  5 H (H - h + 1)$.
\end{lemma}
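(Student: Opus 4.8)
The plan is to bound $\|\hat\alpha^\pi_{k,h,\tau_h}\|_\infty$ by unrolling the recursion from Proposition~\ref{prop::alpha} applied to the learned model $\widehat\MM_k$, and carefully accounting for the size of the bonus reward $\hat r_k$. The key observation is that Proposition~\ref{prop::alpha} already gives us a clean bound of the form $\max\{\alpha^\pi_{h,\tau_h}(x,a),\alpha^\pi_{h,\tau_h}(x)\}\leq G(H-h+1)$ whenever the rewards lie in $[0,G]$; so the entire task reduces to (i) verifying that $\hat r_k$ is bounded by a suitable $G$, and (ii) plugging that $G$ into the proposition.

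First I would bound the bonus reward $\hat r_k(x,a) = r(x,a) + H\epsilon_k(x) + \epsilon_k(x,a)$. Since $r(x,a)\in[0,1]$, the clipping in lines~\ref{line::bonus-trans} and~\ref{line::bonus-obs} gives $\epsilon_k(x)\leq 2$ and $\epsilon_k(x,a)\leq 2H$, hence $\hat r_k(x,a) \leq 1 + 2H + 2H = 1 + 4H \leq 5H$ (using $H\geq 1$). So $\hat r_k$ takes values in $[0, 5H]$, i.e. we may take $G = 5H$.

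Next I would apply Proposition~\ref{prop::alpha} directly to the POMDP model $\widehat\MM_k = \MM(\hat T_k, \hat O_k, \hat r_k)$. The proposition states that the resulting $\alpha$-vectors — which are exactly $\hat\alpha^\pi_{k,h,\tau_h}$ by our notational convention — satisfy $\max\{\hat\alpha^\pi_{k,h,\tau_h}(x,a), \hat\alpha^\pi_{k,h,\tau_h}(x)\}\leq G(H-h+1)$ for rewards in $[0,G]$. With $G = 5H$ this yields $\|\hat\alpha^\pi_{k,h,\tau_h}\|_\infty \leq 5H(H-h+1)$, which is precisely the claim. This holds for every history-dependent $\pi$ since Proposition~\ref{prop::alpha} is stated for a fixed but arbitrary history-dependent policy.

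I do not anticipate any real obstacle here; this is a routine bookkeeping lemma whose only content is tracking the bonus magnitude and invoking the already-established Proposition~\ref{prop::alpha}. The only thing to be slightly careful about is that the bound on $\hat r_k$ must be uniform over all $k$, $x$, $a$, which follows immediately from the clipping built into the definitions of $\epsilon_k(x)$ and $\epsilon_k(x,a)$, and that the proposition's reward-range hypothesis is invoked with $G = 5H$ rather than $G = 1$.
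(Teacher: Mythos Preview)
Your proposal is correct and follows essentially the same approach as the paper: bound $\hat r_k(x,a)\in[0,5H]$ using $r\in[0,1]$, $\epsilon_k(x)\leq 2$, $\epsilon_k(x,a)\leq 2H$, and then invoke Proposition~\ref{prop::alpha} with $G=5H$.
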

\begin{proof}
	As before, we assume $k$ is fixed and drop subscript notation for it. Note that we have $\hat r(x, a) = r(x, a) + H \epsilon(x) + \epsilon(x, a)$. Furthermore,
	\begin{align}
	\epsilon(x) & \leq 2 \\
	\epsilon(x, a) & \leq 2H.
	\end{align} 
	Therefore, $\hat r(x, a) \in [0, 5 H]$.  Proposition~\ref{prop::alpha}  implies the result.
\end{proof}

\subsection{Proof of the theorem}

Define the intersection of the above events as $\EE = \EE_T \cap \EE_O \cap  \EE^c_T \cap  \EE^1_T$ for some fixed $c$ to be determined later. Note that $P(\EE) \geq 1 - 4\delta$ by the union bound.
For the remainder of the proof, we shall assume that these four hold simultaneously. Recall also that we define the errors as
	\begin{align}
	\epsilon_k(x, a) & := \min \left\{  2 H, \  \sqrt{C_T H^3 \log (Y X A H K / \delta ) \over n_k(x, a) } \right\}, \\
	\epsilon_k(x) & := \min \left\{  2, \  \sqrt{ C_O Y \log ( Y X K H /\delta ) \over n_k(x)} \right\}.
	\end{align}
	We define an additional error term that is not used in the algorithm, only the analysis:
	\begin{align}
	\tilde \epsilon_k(c, x, a) := \epsilontildek.
	\end{align}
	
	We begin by analyzing a fixed round $k$ and will temporarily drop subscripts denoting $k$.
	We will use $\hat \E$ and $\hat P$ to denote expectation and probabilities under the learned model $\widehat \MM$ during this round, as defined in the previous subsection. We let $\hat v(\pi)$ denote the average value of the policy $\pi$ under $\widehat \MM$, akin to the true value $v(\pi)$. Then, using the $\alpha$-vector definition of the value functions,
	\begin{align}
	\hat v(\pistar) - v(\pistar) & = \sum_{x, y} \rho(x) \hat O(y | x)  \hat \alpha^{\pistar}_{1, \tau_1}(x) - \sum_{x, y} \rho(x) O_\star(y | x)  \alpha^{\pistar}_{ 1, \tau_1} (x) \\
	& \geq -  H \sum_{x} \rho(x) \epsilon(x) + \sum_{x, y} \rho(x) \hat O(y | x)  \left( \hat \alpha^{\pistar}_{1, \tau_1} (x) -  \alpha^{\pistar}_{1, \tau_1}  (x)  \right) \\
	& \geq 0,
	\end{align}
	where the last inequality follows from the optimism bound in Lemma~\ref{lem::alpha-optimism-refined}.
	Therefore, by $\EE_O$ and Lemma~\ref{lem::learned-alpha-bound},
	\begin{align}
	v(\pistar) - v(\hat \pi)  
	& \leq  \hat v(\hat \pi) - v(\hat \pi) \\
	&  = \sum_{x, y} \rho(x)  \hat O(y | x) \hat \alpha^{\hat \pi}_{1, \tau_1}(x) - \sum_{x, y} \rho(x)  O_\star(y | x)  \alpha^{\hat \pi}_{1, \tau_1}(x) \\
	& \leq  5 H^2 \E_{\hat \pi} \left[  \epsilon(x_1)  \right] + \E_{\hat \pi} \left[  \hat \alpha_{1, \tau_1}^{\hat \pi}(x_1) - \alpha^{\hat \pi}_{1, \tau_1} (x_1) \right]   \label{eq::eq1}
	\end{align} The crux of the proof lies in the following lemma which recursively bounds the expected differences of the $\alpha$-vectors under $\hat \pi$.
	For convenience, let us set $\precConst := \precConstValue$.

	\begin{lemma} \label{lem::recursive-alpha-bound}
		Let $h \in [H - 1]$ be fixed. Then, under the aforementioned events, it holds that
		\begin{align}
		\E_{\hat \pi}  \left[  \hat \alpha_{h, \tau_h}^{\hat \pi} (x_h) - \alpha_{h, \tau_h}^{\hat \pi} ( x_h)  \right]  & \leq \E_{\hat \pi} \left[   H \epsilon(x_h) + 2 \epsilon(x_h, a_h) + \precConst H^2 \tilde \epsilon(c, x_h, a_h)  \right] \\
		& \quad + \left(  1 + {1 \over 2c } \right) \E_{\hat \pi}  \left[  \hat \alpha_{h + 1, \tau_{h + 1}}^{\hat \pi}(x_{h + 1}) - \alpha_{h + 1, \tau_{h + 1}}^{\hat \pi}(x_{h + 1})  + 11 H^2 \epsilon(x_{h + 1})\right].
		\end{align}
	Furthermore,
	\begin{align}
	    \E_{\hat \pi}  \left[  \hat \alpha_{H, \tau_H}^{\hat \pi} (x_H) - \alpha_{H, \tau_H}^{\hat \pi} ( x_H)   \right] & \leq \E_{\hat \pi} \left[ H \epsilon(x_H) +  \epsilon(x_H, a_H)\right]. 
	\end{align}
	\end{lemma}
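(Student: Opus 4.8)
Fix $k$ (dropping it from subscripts, so $\hat T,\hat O,\hat r,n,\epsilon,\tilde\epsilon,\hat\alpha$ all refer to round $k$) and $h\in[H-1]$. The natural route is to peel off one step of the $\alpha$-vector recursion of Proposition~\ref{prop::alpha}, split the resulting one-step model error into a transition part, an emission part, and a recursive part, and then take the expectation $\E_{\hat\pi}$ along the true trajectory. Since $\alpha^{\hat\pi}_{h,\tau_h}(x)=\sum_a\hat\pi(a\mid\tau_h)\,\alpha^{\hat\pi}_{h,\tau_h}(x,a)$ (and likewise for $\hat\alpha^{\hat\pi}$), it suffices to argue at the state--action level and average over $a_h\sim\hat\pi(\cdot\mid\tau_h)$ at the end. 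Writing $\tau'$ for $\tau_h$ extended by $(a,y')$, Proposition~\ref{prop::alpha} together with the telescoping identity $\hat T\hat O\hat\alpha - T_\star O_\star\alpha = (\hat T-T_\star)\hat O\hat\alpha + T_\star(\hat O-O_\star)\hat\alpha + T_\star O_\star(\hat\alpha-\alpha)$ gives
\begin{align*}
\hat\alpha^{\hat\pi}_{h,\tau_h}(x,a) - \alpha^{\hat\pi}_{h,\tau_h}(x,a)
&= \big(\hat r(x,a)-r(x,a)\big)
+ \underbrace{\sum_{x',y'}(\hat T-T_\star)(x'\mid x,a)\,\hat O(y'\mid x')\,\hat\alpha^{\hat\pi}_{h+1,\tau'}(x')}_{(\mathrm{T})} \\
&\quad + \underbrace{\sum_{x',y'}T_\star(x'\mid x,a)\,(\hat O-O_\star)(y'\mid x')\,\hat\alpha^{\hat\pi}_{h+1,\tau'}(x')}_{(\mathrm{E})}
+ \underbrace{\sum_{x',y'}T_\star(x'\mid x,a)\,O_\star(y'\mid x')\big(\hat\alpha^{\hat\pi}_{h+1,\tau'}-\alpha^{\hat\pi}_{h+1,\tau'}\big)(x')}_{(\mathrm{R})}.
\end{align*}
By construction $\hat r(x,a)-r(x,a)=H\epsilon(x)+\epsilon(x,a)$, which supplies the $H\epsilon(x_h)$ term and one of the two $\epsilon(x_h,a_h)$ terms on the right-hand side of the lemma. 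Under $\E_{\hat\pi}$, where $x_{h+1}\sim T_\star(\cdot\mid x_h,a_h)$ and $y_{h+1}\sim O_\star(\cdot\mid x_{h+1})$, the term $(\mathrm{R})$ becomes exactly $\E_{\hat\pi}[\hat\alpha^{\hat\pi}_{h+1,\tau_{h+1}}(x_{h+1})-\alpha^{\hat\pi}_{h+1,\tau_{h+1}}(x_{h+1})]$.

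\textbf{The easy pieces.} For $(\mathrm{E})$, bound $\|\hat\alpha^{\hat\pi}_{h+1,\tau'}\|_\infty\le 5H(H-h)\le 5H^2$ by Lemma~\ref{lem::learned-alpha-bound} and $\|\hat O(\cdot\mid x')-O_\star(\cdot\mid x')\|_1\le\epsilon(x')$ by $\EE_O$ (the truncation at $2$ in the definition of $\epsilon$ makes this valid regardless of $n(x')$); then $(\mathrm{E})\le 5H^2\sum_{x'}T_\star(x'\mid x,a)\,\epsilon(x')$, i.e.\ $5H^2\,\E_{\hat\pi}[\epsilon(x_{h+1})]$ after the outer expectation, which sits well inside the $11H^2\epsilon(x_{h+1})$ allowance. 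The base case $h=H$ is immediate: since $\alpha_{H+1}\equiv 0$, we get $\hat\alpha^{\hat\pi}_{H,\tau_H}(x,a)-\alpha^{\hat\pi}_{H,\tau_H}(x,a)=\hat r(x,a)-r(x,a)=H\epsilon(x)+\epsilon(x,a)$, and taking $\E_{\hat\pi}$ gives the claimed bound.

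\textbf{The main obstacle: the transition term $(\mathrm{T})$.} Unlike in Lemma~\ref{lem::alpha-optimism-refined}, we cannot invoke the sharp event $\EE_T$: that event controls deviations of $\hat T-T_\star$ only when tested against the \emph{optimal} policy's $\alpha$-vectors $\alpha^{\pistar}$, whereas $(\mathrm{T})$ is tested against the \emph{learned} objects $\hat O$ and $\hat\alpha^{\hat\pi}_{h+1}$, and the test function has size $\Theta(H^2)$. A naive $\ell_1$ bound on $\hat T-T_\star$ would cost a $\sqrt{X}$ factor and reintroduce the $X^2$ scaling we are trying to avoid. The remedy is the Bernstein-type bound of Corollary~\ref{cor::transition-mean}, applied to the nonnegative test function $g(x')=\sum_{y'}\hat O(y'\mid x')\,\hat\alpha^{\hat\pi}_{h+1,\tau'}(x')$ (whose sup-norm is $\le 5H^2$):
\begin{align*}
(\mathrm{T}) \ \le\ \frac{1}{2c}\sum_{x'}T_\star(x'\mid x,a)\,g(x') \ +\ 5H^2\,\tilde\epsilon(c,x,a),
\end{align*}
where the residual is matched to $\tilde\epsilon(c,x,a)=\min\{2,\,2cX\log(X^2AKH/\delta)/n(x,a)\}$ by combining the Corollary with the crude bound $(\mathrm{T})\le\|g\|_\infty$ to cover the regime $2cX\log(\cdots)/n(x,a)>2$; the prefactor $5$ leaves room below the constant $\precConst=\precConstValue$.

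\textbf{Closing the recursion.} It remains to interpret the leading term $\tfrac1{2c}\sum_{x'}T_\star(x'\mid x,a)\,g(x')$. Converting the $\hat O$ inside $g$ back to $O_\star$ via another application of $\EE_O$ (at the cost of an $O(H^2/c)\,\E_{\hat\pi}[\epsilon(x_{h+1})]$ term, again absorbed into the $11H^2$ budget) turns it, after the outer expectation, into $\tfrac1{2c}\,\E_{\hat\pi}[\hat\alpha^{\hat\pi}_{h+1,\tau_{h+1}}(x_{h+1})]$; adding this to $(\mathrm{R})=\E_{\hat\pi}[\hat\alpha^{\hat\pi}_{h+1,\tau_{h+1}}(x_{h+1})-\alpha^{\hat\pi}_{h+1,\tau_{h+1}}(x_{h+1})]$ produces the claimed amplification factor $(1+\tfrac1{2c})$ in front of the next-step error (and of the accumulated $\epsilon(x_{h+1})$ terms). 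The genuinely delicate part is this bookkeeping: each model-error residual has to be routed into one of the three per-step channels $H\epsilon(x_h)$, $2\epsilon(x_h,a_h)$, $\precConst H^2\tilde\epsilon(c,x_h,a_h)$ or into the $(1+\tfrac1{2c})$-amplified future channel, and one must verify that the Bernstein penalty $\tfrac1{2c}\sum_{x'}T_\star g$ is the only source of multiplicative amplification, so that unrolling the recursion from $h=H$ down to $h=1$ later costs merely a factor $(1+\tfrac1{2c})^{H-1}=O(1)$ once $c$ is chosen of order $H^2$ downstream.
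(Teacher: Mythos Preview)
Your decomposition and the handling of $(\mathrm{E})$, $(\mathrm{R})$, and the base case are fine, but the ``closing the recursion'' step contains a genuine gap. After Corollary~\ref{cor::transition-mean} you obtain the leading term $\tfrac{1}{2c}\sum_{x'}T_\star(x'|x,a)\,g(x')$, which (after the $\hat O\to O_\star$ conversion) is $\tfrac{1}{2c}\,\E_{\hat\pi}[\hat\alpha^{\hat\pi}_{h+1,\tau_{h+1}}(x_{h+1})]$. Adding this to $(\mathrm{R})$ gives
\[
\tfrac{1}{2c}\,\E_{\hat\pi}[\hat\alpha^{\hat\pi}_{h+1}] + \E_{\hat\pi}[\hat\alpha^{\hat\pi}_{h+1}-\alpha^{\hat\pi}_{h+1}]
= \bigl(1+\tfrac{1}{2c}\bigr)\E_{\hat\pi}[\hat\alpha^{\hat\pi}_{h+1}-\alpha^{\hat\pi}_{h+1}] + \tfrac{1}{2c}\,\E_{\hat\pi}[\alpha^{\hat\pi}_{h+1}],
\]
not $(1+\tfrac{1}{2c})\E_{\hat\pi}[\hat\alpha^{\hat\pi}_{h+1}-\alpha^{\hat\pi}_{h+1}]$ as you claim. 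The residual $\tfrac{1}{2c}\,\E_{\hat\pi}[\alpha^{\hat\pi}_{h+1}(x_{h+1})]$ is of order $H/c$ and does \emph{not} vanish as $n_k(\cdot)\to\infty$, so it cannot be absorbed into any of the $\epsilon$ or $\tilde\epsilon$ channels; unrolled over $H$ steps and summed over $K$ episodes it produces linear regret.

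The paper closes this gap by \emph{centering} the model-error term around $\alpha^{\pistar}_{h+1}$ before applying Corollary~\ref{cor::transition-mean}. Writing $(\hat\BB-\BB)[\hat\alpha^{\hat\pi}_{h+1}]=(\hat\BB-\BB)[\hat\alpha^{\hat\pi}_{h+1}-\alpha^{\pistar}_{h+1}]+(\hat\BB-\BB)[\alpha^{\pistar}_{h+1}]$, the second piece is controlled by the sharp event $\EE_T$ (which \emph{is} tailored to $\alpha^{\pistar}$) together with $\EE_O$, producing the second $\epsilon(x,a)$ in the lemma (Lemma~\ref{lem::B-opt-bound}). Corollary~\ref{cor::transition-mean} is applied only to the first piece, so the Bernstein leading term is $\tfrac{1}{2c}\BB[\hat\alpha^{\hat\pi}_{h+1}-\alpha^{\pistar}_{h+1}]$. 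The crucial observation is that in expectation under $\hat\pi$ one has $\E_{\hat\pi}\BB[\alpha^{\hat\pi}_{h+1}]\le\E_{\hat\pi}\BB[\alpha^{\pistar}_{h+1}]$ by optimality of $\pistar$, which lets you replace $\alpha^{\pistar}_{h+1}$ by $\alpha^{\hat\pi}_{h+1}$ in that leading term---and \emph{that} is what legitimately yields the $(1+\tfrac{1}{2c})$ amplification in front of the next-step error. Your route, by applying the Bernstein bound to the raw $\hat\alpha^{\hat\pi}_{h+1}$ (a nonnegative $O(H^2)$ quantity rather than a difference), cannot reproduce this cancellation.
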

	
	Lemma~\ref{lem::recursive-alpha-bound} is proved in Appendix~\ref{app::tabular-supporting}.

	 Now we can apply the bound from Lemma~\ref{lem::recursive-alpha-bound} recursively to get the following bound on the value difference under the true and learned models:
	\begin{align}
	\hat v(\hat \pi) - v(\hat \pi) \leq      \left( 1 + {1 \over 2c}  \right)^H\E_{\hat \pi} \left[  \sum_{h } 12 H^2 \epsilon(x_h) + 2 \epsilon(x_h, a_h) + \precConst H^2 \tilde \epsilon(c, x_h, a_h)  \right].
	\end{align}
	Therefore, we can choose $c =  \nicefrac{H}{2}$ to get
	\begin{align}
	\hat v(\hat \pi) - v(\hat \pi) & \leq   \precConst e \cdot  \E_{\hat \pi} \left[  \sum_h   H^2 \epsilon(x_h ) +  \epsilon(x_h, a_h) +  H^2 \tilde \epsilon(\nicefrac{H}{2}, x_h, a_h)  \right].
	\end{align}
	
	Summing over all $k \in [K]$,
	\begin{align}
	  \sum_{k \in [K]} v(\pistar) - v(\hat \pi_k)   & \leq \precConst e   \cdot  \sum_{k \in [K]} \E_{\hat \pi}  \left[\sum_{ h \in [H]} H^2 \epsilon_k(x_h) + \epsilon_k(x_h, a_h) +  H^2 \tilde \epsilon_k(\nicefrac{H}{2}, x_h, a_h) \right]. 
	\end{align}
	
	To bound this quantity with the pigeonhole principle, we apply the Azuma-Hoeffding bound (Lemma~\ref{lem::azuma}) to the martingale difference sequence defined with $Z_{k, h} :=   H^2\epsilon_k(x_h) + \epsilon_k(x_h, a_h) + H^2 \tilde \epsilon_k(\nicefrac{H}{2}, x_h, a_h)$ where $| Z_{k, h} - \E_{\hat \pi_k} \left[  Z_{k, h} \right] | \leq 12 H^2$. Therefore, under this additional event,
	\begin{align}
	& \sum_{k \in [K]} v(\pistar) - v(\hat \pi_k) \\  & \leq 48  \cdot \precConst e  \cdot H^2 \sqrt{ K H \log (2/\delta)} + \precConst e \cdot \sum_{k, h} H^2\epsilon_k(x_h) + \epsilon_k(x_h, a_h) + H^2\tilde \epsilon_k(\nicefrac{H}{2}, x_h, a_h) \\
	& \leq  48  \cdot \precConst e  \cdot H^2 \sqrt{K H \log (2/\delta)} + \precConst e \cdot \sum_{k, h} H^2 \sqrt{ C_O Y \iota  \over n_k(x^k_h) \vee 1 }  +  \sqrt{C_T H^3 \iota  \over n_k(x^k_h, a^k_h) \vee 1}   +  H^2  \cdot {  2 (H/2)   X \iota \over n_k(x^k_h, a^k_h) \vee 1 } 
	\end{align}	
	where $\iota = \log (\nicefrac{2X^2 Y A KH }{ \delta})$. Applying the pigeonhole principle the summations (Lemmas~\ref{lem::pigeon} and~\ref{lem::pigeon2}), we have
	\begin{align}
	\sum_{k \in [K]} v(\pistar) - v(\hat \pi_k)  & \leq    48  \cdot \precConst e  \sqrt{ H^5 K \log (2/\delta)} + 3\cdot \precConst e  \sqrt{C_O  Y X H^5 K \iota } + 3 \cdot \precConst e \sqrt{C_T X A H^4 K \iota }  \\
	& \quad + \precConst e  H^4 X^2 A \iota (1 + \log(K))   + \precConst e \left( H^2\sqrt{ C_O Y \iota  } H X  + \sqrt{C_{T} H^3 \iota } H X A  \right) 
	\end{align}
	We recall that the constants have values $C_T = \constantCT$, $C_O = \constantCO$, and $\precConst = \precConstValue$.
	Finally we conclude that the intersection of the good events $\EE$ and the Azuma-Hoeffding event occur simultaneously with probability at least $1 - 5\delta$.

\subsection{Supporting results} \label{app::tabular-supporting}

\subsubsection{Proof of Lemma~\ref{lem::recursive-alpha-bound}}  
\begin{proof}[Proof of Lemma~\ref{lem::recursive-alpha-bound}]

The second claim follows simply by the definition of $\hat r$ and using the form of $\hat \alpha_{H}^{\hat \pi}$ and $\alpha_H^{\hat \pi}$ at step $H$. We focus on the first claim.
	Note that, from the recursive definitions of $\hat \alpha$ and $\alpha$ in Proposition~\ref{prop::alpha}, we have
	\begin{align}
	\hat \alpha_{h, \tau_h}^{\hat \pi} (x, a) - \alpha_{h, \tau_h}^{\hat \pi} (x, a)  & = \hat r(x, a) - r(x, a) + \hat \BB_{\tau_h} (x, a) \left[  \hat \alpha^{\hat \pi}_{h + 1} \right]  - \BB_{\tau_h} (x, a) \left[  \alpha^{\hat \pi}_{h +1 }  \right],
	\end{align}
	where we define the operators
	\begin{align}
	\BB_{\tau} (x, a)  \left[ \alpha  \right] & : = \sum_{x', y'} O_\star(y' | x') T_\star(x' | x, a) \alpha_{\tau'}(x') \\
	\hat \BB_{\tau} (x, a)  \left[ \alpha  \right] & : = \sum_{x', y'} \hat O(y' | x') \hat T(x' | x, a) \alpha_{\tau'}(x') 
	\end{align}
	and we use the same convention of defining $\tau'$ as the concatenation of $\tau$ and $(a, y')$. Then,
	\begin{align}
	\hat \alpha_{h, \tau_h}^{\hat \pi} (x, a) - \alpha_{h, \tau_h}^{\hat \pi} (x, a) & = \hat r(x, a) - r(x, a) + \hat \BB_{\tau_h} (x, a) \left[  \hat \alpha^{\hat \pi}_{h + 1} \right]  - \BB_{\tau_h} (x, a) \left[  \alpha^{\hat \pi}_{h +1 }  \right] \\
	& = H \epsilon(x) + \epsilon(x, a)  + \hat \BB_{\tau_h} (x, a) \left[  \hat \alpha^{\hat \pi}_{h + 1} \right]  - \BB_{\tau_h} (x, a) \left[  \alpha^{\hat \pi}_{h +1 }  \right] \\
	& = H \epsilon(x) + \epsilon(x, a)  + \underbrace{\left( \hat \BB_{\tau_h}(x, a) - \BB_{\tau_h}(x, a) \right) \left[ \hat \alpha^{\hat \pi}_{ h + 1} \right] }_{ \textbf{(I)}}  + \BB_{\tau_h}(x, a) \left[\hat \alpha^{\hat \pi}_{ h + 1} -  \alpha^{\hat \pi}_{h + 1} \right]
	\end{align}
	Next, we use Lemma~\ref{lem::Ibound} to get a bound on \textbf{(I)}.
	
	\begin{restatable}{lemma}{Ibound}\label{lem::Ibound}
		Term \textbf{(I)} is bounded above by the following quantity:
		\begin{align}
		\textbf{(I)} & \leq {  \BB_{\tau_h}(x,a) \left[  \hat \alpha^{\hat \pi}_{h  + 1} - \alpha^{\pistar}_{h  + 1}  \right] \over 2c} + 7 H^2 \tilde \epsilon(c, x,a) + 14 H^2  \tilde \epsilon(1, x, a) + \epsilon(x, a) 
		+ 11 H^2 \sum_{x'} T_\star(x' | x,a) \epsilon(x') 
		\end{align}
	\end{restatable}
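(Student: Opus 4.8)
The plan is to expand the operator difference defining \textbf{(I)} and peel off the emission and transition model errors one at a time, routing each piece to the high-probability event tailored for it. Writing $\hat \BB_{\tau_h}(x,a)[\beta] - \BB_{\tau_h}(x,a)[\beta] = \sum_{x',y'}\big(\hat O(y'|x')\hat T(x'|x,a) - O_\star(y'|x')T_\star(x'|x,a)\big)\beta_{\tau'}(x')$ with $\beta = \hat\alpha^{\hat\pi}_{h+1}$, I use the telescoping identity $\hat O\hat T - O_\star T_\star = \hat O(\hat T - T_\star) + (\hat O - O_\star)T_\star$ to split \textbf{(I)} into a transition term $\mathrm{(Ia)} := \sum_{x',y'}\hat O(y'|x')(\hat T - T_\star)(x'|x,a)\,\hat\alpha^{\hat\pi}_{h+1,\tau'}(x')$ and an emission term $\mathrm{(Ib)} := \sum_{x',y'}(\hat O - O_\star)(y'|x')T_\star(x'|x,a)\,\hat\alpha^{\hat\pi}_{h+1,\tau'}(x')$. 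The emission term is the easy one: on $\EE_O$ we have $\|\hat O_k(\cdot|x') - O_\star(\cdot|x')\|_1 \le \epsilon_k(x')$, and $\|\hat\alpha^{\hat\pi}_{h+1,\tau'}\|_\infty \le 5H(H-h) \le 5H^2$ by Lemma~\ref{lem::learned-alpha-bound}, so summing the emission $\ell_1$-error against the bounded $\hat\alpha$ gives $\mathrm{(Ib)} \le 5H^2 \sum_{x'}T_\star(x'|x,a)\epsilon_k(x')$, which already has the form of the last term of the claim.

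For the transition term $\mathrm{(Ia)}$ I first swap $\hat O$ for $O_\star$: the cross piece $\sum_{x',y'}(\hat O - O_\star)(y'|x')(\hat T - T_\star)(x'|x,a)\hat\alpha^{\hat\pi}_{h+1,\tau'}(x')$ is bounded, via $\EE_O$ and the $\|\hat\alpha\|_\infty$ bound, by $5H^2\sum_{x'}|\hat T_k - T_\star|(x'|x,a)\,\epsilon_k(x')$, which I convert to a $T_\star$-weighted sum using Corollary~\ref{cor::transition-mean} at $c=1$ (handling the two signs of $\hat T_k - T_\star$ separately, with the $\hat T_k < T_\star$ part bounded trivially by $T_\star$); this is the origin of the $\tilde\epsilon(1,x,a)$ residuals. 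It then remains to bound $\sum_{x',y'}O_\star(y'|x')(\hat T - T_\star)(x'|x,a)\hat\alpha^{\hat\pi}_{h+1,\tau'}(x')$, and the key move here is to split $\hat\alpha^{\hat\pi}_{h+1} = \alpha^{\pistar}_{h+1} + (\hat\alpha^{\hat\pi}_{h+1} - \alpha^{\pistar}_{h+1})$ \emph{before} estimating. For the $(\hat\alpha^{\hat\pi}_{h+1} - \alpha^{\pistar}_{h+1})$ summand I apply Corollary~\ref{cor::transition-mean} at the free parameter $c$ with the test function $g(x') = \sum_{y'}O_\star(y'|x')(\hat\alpha^{\hat\pi}_{h+1,\tau'}(x') - \alpha^{\pistar}_{h+1,\tau'}(x'))$ (shifting $g$ by an additive constant to make it nonnegative, which is harmless since $\hat T_k$ and $T_\star$ are both probability vectors), which produces exactly $\tfrac{1}{2c}\BB_{\tau_h}(x,a)[\hat\alpha^{\hat\pi}_{h+1} - \alpha^{\pistar}_{h+1}]$ plus an $O(H^2)\tilde\epsilon(c,x,a)$ residual — and this is precisely why the claim keeps the difference $\hat\alpha^{\hat\pi}_{h+1} - \alpha^{\pistar}_{h+1}$ rather than $\hat\alpha^{\hat\pi}_{h+1}$ alone, so that it telescopes against the $\BB_{\tau_h}(x,a)[\hat\alpha^{\hat\pi}_{h+1} - \alpha^{\hat\pi}_{h+1}]$ term in the recursion of Lemma~\ref{lem::recursive-alpha-bound} to give a clean $(1+\tfrac1{2c})$-contraction.

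For the remaining $\alpha^{\pistar}_{h+1}$ summand, $\sum_{x',y'}O_\star(y'|x')(\hat T - T_\star)(x'|x,a)\alpha^{\pistar}_{h+1,\tau'}(x')$, I invoke $\EE_T$ (whose two-sided form follows from the same Hoeffding argument used to establish it): since $\alpha^{\pistar}_{h+1,\tau_h'}$ is a \emph{fixed} vector determined by the observable history alone and not by the sampled transitions, a Hoeffding bound over the $n_k(x,a)$ transitions out of $(x,a)$, together with the union bound over all histories $\tau_h$ built into $\EE_T$, gives the $\sqrt{C_T H^3\log(\cdot)/n_k(x,a)} \le \epsilon_k(x,a)$ rate rather than a slower $1/n_k(x,a)$ rate weighted by $X$; this is the step that avoids paying an extra factor of $X$ in the sample complexity. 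Assembling $\mathrm{(Ib)}$ with the three pieces of $\mathrm{(Ia)}$ yields the four advertised terms.

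I expect the main obstacle to be the careful separation of rates and the accompanying bookkeeping rather than any single hard inequality: one must ensure that the only contribution decaying like $1/\sqrt{n_k(x,a)}$ is the $\EE_T$ bound on the $\alpha^{\pistar}$ piece (everything else must come out either as the faster $\tilde\epsilon(\cdot,x,a)$ terms, via $\EE_T^c$, or as $T_\star$-weighted emission errors $\sum_{x'}T_\star(x'|x,a)\epsilon(x')$ that telescope cleanly), and that the residual retained from Corollary~\ref{cor::transition-mean} is the one with $\alpha^{\pistar}_{h+1}$ subtracted off. A secondary nuisance is the saturation of the $\min\{2,\cdot\}$ truncations in $\epsilon_k$ and $\tilde\epsilon_k$: whenever a Bernstein residual $\tfrac{2cX\log(\cdot)}{n_k(x,a)}$ exceeds the trivial magnitude bound on the quantity it multiplies, one just replaces it by that magnitude bound, which is where the slightly generous constants $7H^2$, $14H^2$, $11H^2$ in the statement come from.
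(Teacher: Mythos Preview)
Your proposal is correct and uses the same core ingredients as the paper (split off $\alpha^{\pistar}$ and hit it with $\EE_T$; apply Corollary~\ref{cor::transition-mean} at parameter $c$ to the difference $\hat\alpha^{\hat\pi}-\alpha^{\pistar}$; use $\EE_O$ for emission errors; convert residual transition weights via $\EE_T^1$), so the conceptual content is identical.

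The only noteworthy difference is the \emph{order} and \emph{direction} of the telescoping. The paper first writes $\textbf{(I)} = (\hat\BB-\BB)[\hat\alpha^{\hat\pi}_{h+1}-\alpha^{\pistar}_{h+1}] + (\hat\BB-\BB)[\alpha^{\pistar}_{h+1}]$ and then, inside the two helper lemmas (Lemma~\ref{lem::B-opt-bound} and Lemma~\ref{lem::B-bound}), expands $\hat O\hat T - O_\star T_\star = O_\star(\hat T-T_\star) + (\hat O-O_\star)\hat T$, i.e.\ the opposite telescoping from yours. This means the paper's emission errors emerge weighted by $\hat T$ rather than $T_\star$, and the very last step of its proof is to convert $7H^2\sum_{x'}\hat T(x'|x,a)\epsilon(x')$ to $11H^2\sum_{x'}T_\star(x'|x,a)\epsilon(x') + 14H^2\tilde\epsilon(1,x,a)$ via Corollary~\ref{cor::transition-mean} at $c=1$. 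Your route instead front-loads the $T_\star$ weighting on the emission side (so your (Ib) term is already in final form), at the cost of producing a second-order cross term $(\hat O-O_\star)(\hat T-T_\star)\hat\alpha^{\hat\pi}$ from swapping $\hat O\to O_\star$ in (Ia); you then retire that cross term with the same $c=1$ Bernstein step. Either way the bookkeeping lands on the same four pieces with the same constants, and your remark about shifting $g$ to be nonnegative before invoking Corollary~\ref{cor::transition-mean} is a clean way to handle the sign issue that the paper glosses over.
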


	Hence, because $\tilde \epsilon(1, x, a) \leq \tilde \epsilon(c, x, a)$ for $c \geq 1$ and we set $\precConst = \precConstValue$, the expected difference in $\alpha$-vectors is then bounded above by
	\begin{align}
	\E_{\hat \pi} \left[\hat \alpha_{h, \tau_h}^{\hat \pi} (x_h, a_h) - \alpha_{h, \tau_h}^{\hat \pi} (x_h, a_h) \right] & \leq  \E_{\hat \pi} \left[H \epsilon(x_h) + 2\epsilon(x_h, a_h) + \precConst H^2  
	\tilde \epsilon(c, x_h, a_h) + 11 H^2 \sum_{x'} T_\star(x' | x_h, a_h) \epsilon(x') \right]  \\
	& \quad  +   \E_{\hat \pi} \left[ {  \BB_{\tau_h}(x_h, a_h) \left[  \hat \alpha^{\hat \pi}_{h  + 1} - \red{\alpha^{\pistar}_{h  + 1} } \right] \over 2c}   \right]  \\
	& \quad  + \E_{\hat \pi} \left[ \BB_{\tau_h}(x_h, a_h) \left[\hat \alpha^{\hat \pi}_{ h + 1} -  \alpha^{\hat \pi}_{h + 1} \right] \right] \\
	& \leq \E_{\hat \pi} \left[H \epsilon(x_h) + 2\epsilon(x_h, a_h) + \precConst H^2 \tilde \epsilon(c, x_h, a_h) + 11 H^2 \sum_{x'} T_\star(x' | x_h, a_h) \epsilon(x') \right]  \\
	& \quad  +   \E_{\hat \pi} \left[ {  \BB_{\tau_h}(x_h, a_h) \left[  \hat \alpha^{\hat \pi}_{h  + 1} - \red{\alpha^{\hat \pi}_{h  + 1} } \right] \over 2c}   \right]  \\
	& \quad  + \E_{\hat \pi} \left[ \BB_{\tau_h}(x_h, a_h) \left[\hat \alpha^{\hat \pi}_{ h + 1} -  \alpha^{\hat \pi}_{h + 1} \right] \right],
	\end{align}
	where the second line (before-and-after changes marked in red) follows because
	\begin{align}
	\E_{\hat \pi} \left[  \BB_{\tau_h} (x_h, a_h) \left[  \alpha^{\hat \pi}_{h +1 }  \right]  \right] & = \E_{\hat \pi} \left[  V^{\hat \pi}_{h + 1}(\tau_{h + 1}) \right] \\
	& \leq \E_{\hat \pi} \left[  V^{\pistar}_{h + 1}(\tau_{h + 1}) \right] \\
	& = \E_{\hat \pi} \left[  \BB_{\tau_h} (x_h, a_h) \left[  \alpha^{\pistar}_{h +1 }  \right]  \right]
	\end{align}
	since $\pistar$ is the optimal history-dependent policy. Therefore, since $\E_{\hat \pi} \left[  \BB_{\tau_h} (x_h, a_h) \left[ \alpha_{ h+ 1} \right] \right] = \E_{\hat \pi} \left[  \alpha_{h + 1}(x_{h + 1}) \right] $, we conclude that
	\begin{align}
	\E_{\hat \pi} \left[\hat \alpha_{h, \tau_h}^{\hat \pi} (x_h) - \alpha_{h, \tau_h}^{\hat \pi} (x_h) \right] & = \E_{\hat \pi} \left[\hat \alpha_{h, \tau_h}^{\hat \pi} (x_h, a_h) - \alpha_{h, \tau_h}^{\hat \pi} (x_h, a_h) \right] \\
	& \leq \E_{\hat \pi}  \left[  H \epsilon(x_h) + 2 \epsilon(x_h, a_h) + \precConst H^2 \tilde \epsilon(c, x_h, a_h)  \right] \\
	& \quad +  \left( 1 + { 1\over 2c } \right) \E_{\hat \pi} \left[  \hat \alpha^{\hat \pi}_{h + 1, \tau_{h + 1} } (x_{h + 1})  -  \alpha_{h + 1, \tau_{h + 1} }^{\hat \pi} (x_{h + 1})  + 11 H^2 \epsilon(x_{ h+ 1})\right].
	\end{align}
	
\end{proof}

	\subsubsection{Proof of Lemma~\ref{lem::Ibound}}
	Here, we restate the bound on \textbf{(I)} before proving it.
	\Ibound*
	\begin{proof}[Proof of Lemma~\ref{lem::Ibound}]
	\begin{align}
	\textbf{(I)} & = \left( \hat \BB_{\tau_h}(x, a) - \BB_{\tau_h}(x, a) \right) \left[\hat \alpha^{\hat \pi}_{ h + 1} - \alpha^{\pistar}_{ h + 1 }\right] + \left( \hat \BB_{\tau_h}(x, a) - \BB_{\tau_h}(x, a) \right) \left[ \alpha^{\pistar}_{h + 1} \right] \\
	& \leq \left( \hat \BB_{\tau_h}(x, a) - \BB_{\tau_h}(x, a) \right) \left[\hat \alpha^{\hat \pi}_{ h + 1} - \alpha^{\pistar}_{h + 1}\right] + \epsilon(x, a) +  (H - h)\sum_{x'} \hat T(x' | x, a)  \epsilon(x') \\
	& \leq {\BB_{\tau_h}(x, a) \left[\hat \alpha^{\hat \pi}_{ h + 1} - \alpha^{\pistar}_{ h + 1 }\right] \over 2c }   + 6 H^2 \tilde \epsilon(c, x, a)  + \epsilon(x, a) 
	+ 7 H^2 \sum_{x'} \hat  T(x' | x, a)  \epsilon(x') \\
	& \leq {\BB_{\tau_h}(x, a) \left[\hat \alpha^{\hat \pi}_{ h + 1} - \alpha^{\pistar}_{ h + 1}\right] \over 2c }   + 6 H^2 \tilde \epsilon(c, x, a)  + 14 H^2 \tilde \epsilon(1,x, a)  + \epsilon(x, a) 
	+ 11 H^2 \sum_{x'}  T_\star(x' | x, a)  \epsilon(x') .
	\end{align}
	The first inequality uses Lemma~\ref{lem::B-opt-bound} and the second uses Lemma~\ref{lem::B-bound}.  The last inequality applies Corollary~\ref{cor::transition-mean} using $\EE^1_T$, which guarantees that
	\begin{align}
	7H^2 \sum_{x' } \hat T(x' | x, a) \epsilon(x') & \leq   { 11 H^2 } \sum_{x'} T_\star(x' | x, a) \epsilon(x') + 14 H ^2\tilde \epsilon(1, x, a)
	\end{align}
    since $\epsilon(x') \leq 2$ for all $x' \in \XX$ by definition.
	\end{proof}

\subsubsection{Helpers}

\begin{lemma}\label{lem::B-opt-bound}
	If $\EE_T$ and $\EE_O$ hold then,
	\begin{align}\left(\hat \BB_{\tau_h}(x, a) - \BB_{\tau_h}(x, a) \right) \left[\alpha^{\pistar}_{h + 1} \right] \leq \epsilon(x, a) + (H -h) \sum_{x'} \hat T(x' | x, a) \epsilon(x').
	\end{align}
\end{lemma}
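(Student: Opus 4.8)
The plan is to expand both operators from their definitions and split the difference into a transition-error term and an emission-error term via a single intermediate quantity. Writing $\tau_h'$ for the history $\tau_h$ extended by the action $a$ and the next observation $y'$, the definitions of $\BB$ and $\hat\BB$ give
\[
\big(\hat \BB_{\tau_h}(x,a) - \BB_{\tau_h}(x,a)\big)\big[\alpha^{\pistar}_{h+1}\big] = \sum_{x',y'}\Big(\hat O(y'|x')\hat T(x'|x,a) - O_\star(y'|x')T_\star(x'|x,a)\Big)\,\alpha^{\pistar}_{h+1,\tau_h'}(x').
\]
Adding and subtracting $\sum_{x',y'} O_\star(y'|x')\hat T(x'|x,a)\,\alpha^{\pistar}_{h+1,\tau_h'}(x')$ splits the right-hand side into the transition term $\sum_{x',y'} O_\star(y'|x')\big(\hat T(x'|x,a)-T_\star(x'|x,a)\big)\alpha^{\pistar}_{h+1,\tau_h'}(x')$ and the emission term $\sum_{x',y'}\big(\hat O(y'|x')-O_\star(y'|x')\big)\hat T(x'|x,a)\,\alpha^{\pistar}_{h+1,\tau_h'}(x')$. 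The reason for this particular choice of intermediate term is that it leaves $O_\star$ as the weight in the transition term --- matching exactly the expression that $\EE_T$ is built to control --- while leaving the known $\hat T$ as the weight in the emission term, which is what makes the subsequent triangle-inequality bound clean.

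For the transition term I would invoke $\EE_T$, which bounds $\sum_{y',x'} O_\star(y'|x')\big(T_\star-\hat T\big)(x'|x,a)\,\alpha^{\pistar}_{h+1,\tau_h'}(x')$ (equivalently, the negative of our transition term) by $\sqrt{C_T H^3 \log(YXAHK/\delta)/n(x,a)}$. Here I would use the two-sided form of the Hoeffding estimate underlying $\EE_T$, and observe that, by Proposition~\ref{prop::alpha} with $G=1$, each $\alpha^{\pistar}_{h+1,\tau_h'}(x')$ lies in $[0,H-h]$ while $\sum_{x'}|\hat T-T_\star|(x'|x,a)\le 2$, so the transition term is also trivially at most $2H$; hence it is at most $\min\{2H,\sqrt{C_T H^3 \log(YXAHK/\delta)/n(x,a)}\}=\epsilon(x,a)$.

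For the emission term I would pull out the $\alpha$-vectors with the triangle inequality, using $|\alpha^{\pistar}_{h+1,\tau_h'}(x')|\le H-h$ (again Proposition~\ref{prop::alpha}), to get $(H-h)\sum_{x'}\hat T(x'|x,a)\sum_{y'}|\hat O(y'|x')-O_\star(y'|x')| = (H-h)\sum_{x'}\hat T(x'|x,a)\,\|\hat O(\cdot|x')-O_\star(\cdot|x')\|_1$; since total variation is always at most $2$, $\EE_O$ gives $\|\hat O(\cdot|x')-O_\star(\cdot|x')\|_1\le\min\{2,\sqrt{C_O Y\log(YXKH/\delta)/n(x')}\}=\epsilon(x')$, so the emission term is at most $(H-h)\sum_{x'}\hat T(x'|x,a)\epsilon(x')$. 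Summing the two bounds yields the claim. I do not anticipate a genuine difficulty: the only points needing care are choosing the intermediate term with the $O_\star$ weight (rather than $\hat O$) so that $\EE_T$ and $\EE_O$ apply directly, invoking the two-sided version of the deviation bound behind $\EE_T$ (our transition term has the opposite sign to the quantity written in $\EE_T$), and checking that the $\min$ truncations in $\epsilon(x,a)$ and $\epsilon(x')$ are harmless because the untruncated quantities are already trivially bounded by $2H$ and $2$.
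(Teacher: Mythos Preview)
Your proposal is correct and follows essentially the same approach as the paper: both expand the operators, add and subtract the intermediate term with $O_\star$ weighting $\hat T$, apply $\EE_T$ to the transition part and $\EE_O$ (together with the bound $|\alpha^{\pistar}_{h+1,\tau_h'}(x')|\le H-h$ from Proposition~\ref{prop::alpha}) to the emission part. Your explicit remark about needing the two-sided form of the Hoeffding bound underlying $\EE_T$ and about the harmlessness of the $\min$ truncations is a welcome clarification that the paper glosses over.
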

\begin{proof}
	We expand the definitions of the $\hat \BB$ and $\BB$ operators and then apply $\EE_T$ and $\EE_O$ directly.
	\begin{align}
	\left(\hat \BB_{\tau_h}(x, a) - \BB_{\tau_h}(x, a) \right) \left[\alpha^{\pistar}_{h + 1}\right] & = \sum_{x', y'}O_\star(y' | x') \left( \hat T(x' | x, a) -  T_\star(x' | x, a)  \right) \left[\alpha^{\pistar}_{h + 1, \tau_h'}(x') \right] \\
	&\quad  +  \sum_{x', y'}  \left( \hat O(y' | x') - O_\star(y' | x')   \right) \hat T(x' | x, a) \left[\alpha^{\pistar}_{h + 1, \tau_h'}(x') \right] \\
	& \leq \epsilon(x, a) + (H - h) \sum_{x'} \hat T(x'| x,a) \epsilon(x').
	\end{align}
\end{proof}

\begin{lemma}\label{lem::B-bound}
	If $ \EE_T^c$ and $\EE_O$ hold, then
	\begin{align}
	\left( \hat \BB_{\tau_h}(x, a) -  \BB_{\tau_h}(x, a) \right)  \left[   \hat \alpha^{\hat \pi}_{h + 1} - \alpha^{\pistar}_{h + 1} \right] & \leq  { \BB_{\tau_h}(x, a) \left[   \hat \alpha^{\hat \pi}_{h + 1} - \alpha^{\pistar}_{h + 1} \right] \over 2c } +  6 H^2 \tilde \epsilon(c, x, a)   \\
	& \quad + 6H^2 \sum_{x'}\hat  T(x' | x, a) \epsilon(x') .
	\end{align}
\end{lemma}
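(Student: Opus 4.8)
The plan is to split the operator difference $\bigl(\hat\BB_{\tau_h}(x,a)-\BB_{\tau_h}(x,a)\bigr)\bigl[\hat\alpha^{\hat\pi}_{h+1}-\alpha^{\pistar}_{h+1}\bigr]$ into an \emph{emission-error} piece and a \emph{transition-error} piece, taking care to collapse the sum over the next observation $y'$ \emph{before} invoking any transition concentration so as not to pay an extraneous factor of $Y$. Abbreviate $\Delta_{\tau_h'}(x'):=\hat\alpha^{\hat\pi}_{h+1,\tau_h'}(x')-\alpha^{\pistar}_{h+1,\tau_h'}(x')$, where $\tau_h'$ is $\tau_h$ extended by $(a,y')$, and define
\begin{align*}
w(x') &:= \sum_{y'}O_\star(y'\mid x')\,\Delta_{\tau_h'}(x'), & \hat w(x') &:= \sum_{y'}\hat O(y'\mid x')\,\Delta_{\tau_h'}(x'),
\end{align*}
so that $\BB_{\tau_h}(x,a)\bigl[\hat\alpha^{\hat\pi}_{h+1}-\alpha^{\pistar}_{h+1}\bigr]=\sum_{x'}T_\star(x'\mid x,a)w(x')$ and $\hat\BB_{\tau_h}(x,a)\bigl[\hat\alpha^{\hat\pi}_{h+1}-\alpha^{\pistar}_{h+1}\bigr]=\sum_{x'}\hat T(x'\mid x,a)\hat w(x')$. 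By Lemma~\ref{lem::learned-alpha-bound} and Proposition~\ref{prop::alpha} (rewards in $[0,1]$), $|\Delta_{\tau_h'}(x')|\le 5H(H-h)+(H-h)\le 6H^2$, hence $\|w\|_\infty,\|\hat w\|_\infty\le 6H^2$.

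Next I add and subtract $\sum_{x'}\hat T(x'\mid x,a)w(x')$:
\begin{align*}
\bigl(\hat\BB_{\tau_h}(x,a)-\BB_{\tau_h}(x,a)\bigr)\bigl[\hat\alpha^{\hat\pi}_{h+1}-\alpha^{\pistar}_{h+1}\bigr]
&= \sum_{x'}\hat T(x'\mid x,a)\bigl(\hat w(x')-w(x')\bigr) \\
&\quad + \sum_{x'}\bigl(\hat T(x'\mid x,a)-T_\star(x'\mid x,a)\bigr)w(x').
\end{align*}
The first (emission-error) term is controlled by $\EE_O$: pointwise $|\hat w(x')-w(x')|\le \|(\hat O-O_\star)(\cdot\mid x')\|_1\,\max_{y'}|\Delta_{\tau_h'}(x')|\le 6H^2\epsilon(x')$, giving $\le 6H^2\sum_{x'}\hat T(x'\mid x,a)\epsilon(x')$, which is exactly the last term claimed. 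For the second (transition-error) term, $w$ is a fixed function $\XX\to[-6H^2,6H^2]$, so Corollary~\ref{cor::transition-mean} under $\EE_T^c$ with $G=6H^2$ gives $\sum_{x'}(\hat T-T_\star)w(x')\le \tfrac{1}{2c}\sum_{x'}T_\star(x'\mid x,a)w(x')+\tfrac{2c\cdot 6H^2\cdot X\log(X^2AKH/\delta)}{n(x,a)}$; recognizing $\sum_{x'}T_\star(x'\mid x,a)w(x')=\BB_{\tau_h}(x,a)\bigl[\hat\alpha^{\hat\pi}_{h+1}-\alpha^{\pistar}_{h+1}\bigr]$ and bounding $\tfrac{2c\,X\log(X^2AKH/\delta)}{n(x,a)}\le \tilde\epsilon(c,x,a)$ whenever the confidence radius has not saturated yields the first two terms; when it has saturated I instead use $\sum_{x'}(\hat T-T_\star)w(x')\le \|\hat T-T_\star\|_1\|w\|_\infty\le 2\cdot 6H^2=6H^2\tilde\epsilon(c,x,a)$ directly (the slack $\tfrac{1}{2c}\BB_{\tau_h}(x,a)[\cdot]\ge -3H^2/c$ is benign and absorbed by the constant). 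Combining the two pieces gives the stated inequality.

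The main obstacle is precisely this order of operations in the transition-error term: a naive analysis that keeps the $y'$-sum outside and applies a transition concentration bound for each $y'$ separately would accumulate $Y$ copies of the $\Theta\!\bigl(cX\log(\cdot)/n(x,a)\bigr)$ width, destroying the linear-in-$Y$ scaling targeted by Theorem~\ref{thm::tabular}. Collapsing over $y'$ with the \emph{true} emission $O_\star$ first keeps $w$ a single bounded function of $x'$ and makes $\sum_{x'}T_\star(x'\mid x,a)w(x')$ reproduce $\BB_{\tau_h}(x,a)\bigl[\hat\alpha^{\hat\pi}_{h+1}-\alpha^{\pistar}_{h+1}\bigr]$ exactly; the price is that the $\hat O$-versus-$O_\star$ mismatch attaches to $\hat T$ rather than $T_\star$, which is why the claim carries $\sum_{x'}\hat T(x'\mid x,a)\epsilon(x')$ rather than $\sum_{x'}T_\star(x'\mid x,a)\epsilon(x')$ (that conversion is deferred to Corollary~\ref{cor::transition-mean} inside the parent Lemma~\ref{lem::Ibound}). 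A secondary nuisance, the $\min\{2,\cdot\}$ truncation in $\tilde\epsilon$ together with the possibly-negative $\BB$ term, is handled by the case split above.
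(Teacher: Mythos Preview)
Your proof is essentially the same as the paper's: the decomposition $\sum_{x'}\hat T(\hat w - w) + \sum_{x'}(\hat T - T_\star)w$ with $w(x')=\sum_{y'}O_\star(y'\mid x')\Delta_{\tau_h'}(x')$ is exactly the paper's split into an emission-error piece (handled by $\EE_O$, attached to $\hat T$) and a transition-error piece (handled by Corollary~\ref{cor::transition-mean} under $\EE_T^c$ with $g=w$, $G=6H(H-h)\le 6H^2$). Your explicit case split for the $\min\{2,\cdot\}$ truncation in $\tilde\epsilon$ is a point the paper simply elides by writing $\tilde\epsilon$ directly after the corollary; your handling of it is at least as careful as the paper's, though the remark that the possibly-negative $\tfrac{1}{2c}\BB$ slack is ``absorbed by the constant'' is not fully justified at the level of this lemma (nor does the paper justify it).
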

\begin{proof}
	For convenience, let $\alpha := \hat \alpha^{\hat \pi}$ and $\alpha' := \alpha^{\pistar}$. 
	Observe that Proposition~\ref{prop::alpha} and Lemma~\ref{lem::learned-alpha-bound} guarantee that $|\alpha_{h + 1, \tau_h'} - \alpha'_{h + 1, \tau_h'} | \leq 6H (H - h)$. Then,
	\begin{align}
	 & \left( \hat \BB_{\tau_h}(x, a) -  \BB_{\tau_h}(x, a) \right) \left[  \alpha_{h + 1 }  - \alpha'_{h + 1 } \right] \\
	  &= \sum_{x'} \left( \hat T(x' | x, a) - T_\star(x' | x,a) \right) \sum_{y'} O_\star(y' | x')  \left(  \alpha_{h + 1, \tau_h'}(x') -  \alpha'_{h + 1, \tau_h'}(x') \right) \\
	 & \quad + \red{\sum_{x' , y'}  \hat T(x' | x, a) \left(\hat  O(y' | x' ) - O_\star(y' | x')  \right)\left(  \alpha_{h + 1, \tau_h'}(x') -  \alpha'_{h + 1, \tau_h'}(x') \right)} \\
	 & \leq \blue{\sum_{x'} \left( \hat T(x' | x, a) - T_\star(x' | x,a) \right) \sum_{y'} O_\star(y' | x')  \left(  \alpha_{h + 1, \tau_h'}(x') -  \alpha'_{h + 1, \tau_h'}(x') \right)} \\
	 & \quad + \red{6 H(H - h) \sum_{x' , y'}  \hat T(x' | x, a) \epsilon(x') } \\
	 & \leq \blue{{ \sum_{x'} T_\star(x' | x, a) \sum_{y'} O_\star(y' |x' ) \left(  \alpha_{h + 1, \tau_h'}(x') -  \alpha'_{h + 1, \tau_h'}(x') \right) \over 2c }  } \\
	 & \quad \blue{ +  6 H (H - h)  \min\left\{ 2, { 2 c   X \log(X^2 A KH/\delta) \over n(x, a) } \right\}  }  \\
	 & \quad +   6H(H - h) \sum_{x' } \hat T(x' | x, a) \epsilon(x')   \\
	 & \leq  { \BB_{\tau_h}(x, a)  \left[ \alpha_{h + 1} -  \alpha'_{h + 1}  \right]  \over 2c}  
	   + 6 H^2 \tilde \epsilon(c, x, a) 
	  +   6H^2 \sum_{x' } \hat T(x' | x, a) \epsilon(x') .
 	\end{align}
 	The first inequality uses $\EE_O$ to bound the total variation distance between $O_\star$ and $\hat O$ (before-and-after changes marked in red). The second inequality uses $\EE^c_T$ along with Corollary~\ref{cor::transition-mean} by setting $g(x') = \sum_{y'} O(y' | x') \left(  \alpha _{h +1, \tau_h'} (x') - \alpha'_{h + 1, \tau_h'} (x') \right)$ (before-and-after changes marked in blue).   The last inequality simply uses the definition of $\BB$ and $\tilde \epsilon(c, x, a)$ and the fact that $H(H - h) \leq H^2$.

\end{proof}

\subsection{First steps towards function approximation}\label{app::fn-simple}

A natural follow-up question is whether \tabalg can be easily generalized to incorporate function approximation. While we leave in depth discussion of a much more general form of function approximation to Section~\ref{sec::fn}, we remark that it is easy to replace the tabular estimation of $O_\star$ with function approximation as long as the latent states are tabular.

Consider a function class $\Theta \subset (\XX \to \Delta(\YY))$. For simplicity assume that $\Theta$ is finite, in which case we expect the complexity of $\Theta$ to be measured as the log-cardinality $\log(|\Theta|)$, as is standard. Assuming that $\Theta$ realizes $O_\star$ ($O_\star \in \Theta$) and it is proper ($O(\cdot | x) \in \Delta(\YY)$ for all $x \in \XX$ and $O \in \Theta$), one can update $\hat O_k$ via maximum likelihood estimation (MLE):
\begin{align*}
    \hat O_{k + 1}  = \argmax_{O \in \Theta } \sum_{\ell \in [k], h \in [H]} \log O(y^{\ell}_h |x^{\ell}_h ).
\end{align*}
Then, we change the bonuses to be
\begin{align}
    \epsilon_k(x, a) & = \min\left\{2, \sqrt{ C_T' X \log \left(X^2 AK H \right) \over n_k(x, a) } \right\} \\
    \epsilon_k(x) & =\min \left\{ 2,  \sqrt{ C_O' \log (|\Theta| XK/\delta) \over n_k(x) } \right\} ,
\end{align}
where $C_T' = \constantCTnew$ and $C_O' = \constantCOnew$
and the optimistic reward function is changed to
\begin{align}
    \hat r_k(x, a) = r(x, a) + 3 H\epsilon_k(x, a) + H \epsilon_k(x).
\end{align}
Note that the algorithm still requires only point estimates of $O_\star$ as opposed to maintaining a version space. These changes yield the following bound:
\begin{proposition}\label{prop::fn-simple}
Let $\MM$ be a \setshort model with $X$ latent states. With probability at least $1 - 3\delta$, \tabalg with emission function class $\Theta$ outputs a sequence of policies $\hat \pi_1, \ldots, \hat \pi_K$ such that
\begin{align}
    \regret(K) &  =  \OO \left(  \sqrt{  H^5   X  K  \left(  \log (|\Theta|) + \iota \right)   }  + \sqrt{ H^5 X^2 A K \iota  }  \right) 
\end{align}
where $\iota = \log (2 X^2 A KH /\delta) $.
\end{proposition}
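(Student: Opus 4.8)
The plan is to mirror the proof of Theorem~\ref{thm::tabular} essentially line-by-line, replacing only the steps that concern estimation of $O_\star$. The transition estimation is unchanged, so the events $\EE_T$, $\EE_T^c$, $\EE_T^1$ and Corollary~\ref{cor::transition-mean} carry over verbatim (with the constant $C_T'$ absorbing the rescaling of the bonus $\epsilon_k(x,a)$). The only genuinely new ingredient is the emission event. Instead of $\EE_O$ (a Bernstein bound on $\|\hat O_k(\cdot|x) - O_\star(\cdot|x)\|_1$ costing a $\sqrt{Y}$ factor), I would use a standard MLE generalization guarantee: since $\Theta$ is finite, proper, and realizable, with probability $\geq 1-\delta$, for all $k$ and all $x$ visited, $\sum_{\ell\in[k],h\in[H]}\1\{x_h^\ell = x\}\,\|\hat O_{k+1}(\cdot|x) - O_\star(\cdot|x)\|_1^2 \lesssim \log(|\Theta| X K /\delta)$, which rearranges to a per-state bound $\|\hat O_k(\cdot|x) - O_\star(\cdot|x)\|_1 \lesssim \sqrt{\log(|\Theta| XK/\delta)/n_k(x)}$. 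This is exactly the new $\epsilon_k(x)$ (up to the constant $C_O'$), so it plays precisely the role that the TV-distance bound from $\EE_O$ played before. Define the new good event $\EE' = \EE_T \cap \EE_T^c \cap \EE_T^1 \cap \{\text{MLE bound holds}\}$, with $P(\EE') \geq 1 - 4\delta$ (actually $1-3\delta$ after bookkeeping, as stated).

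Given this, I would re-derive the optimism lemma (analogue of Lemma~\ref{lem::alpha-optimism-refined}): the same induction on $h$ goes through, where in the step that previously invoked $\EE_O$ to bound $\sum_{x',y'}\hat T(x'|x,a)(O_\star(y'|x') - \hat O_k(y'|x'))\alpha^{\pistar}_{h+1,\tau_h'}(x') \leq (H-h)\sum_{x'}\hat T(x'|x,a)\|O_\star(\cdot|x') - \hat O_k(\cdot|x')\|_1 \leq (H-h)\sum_{x'}\hat T(x'|x,a)\epsilon_k(x')$, one now uses the MLE bound instead — structurally identical. The modified reward $\hat r_k(x,a) = r(x,a) + 3H\epsilon_k(x,a) + H\epsilon_k(x)$ still dominates $r$ and still leaves enough slack; the bound $\hat\alpha^\pi_{k,h,\tau_h} \leq O(H(H-h+1))$ from the analogue of Lemma~\ref{lem::learned-alpha-bound} holds since the bonuses are bounded by $O(H)$. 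Then the recursive $\alpha$-vector bound (analogue of Lemma~\ref{lem::recursive-alpha-bound}) and its supporting Lemmas~\ref{lem::Ibound}, \ref{lem::B-opt-bound}, \ref{lem::B-bound} transfer, again with every occurrence of "$\|O_\star - \hat O\|_1$ bounded via $\EE_O$" replaced by "bounded via the MLE event", and with the constants enlarged to $C_T', C_O'$ to absorb the new multiplicative factors of $H$ in the reward. Unrolling the recursion with $c = H/2$ and applying Azuma--Hoeffding plus the pigeonhole lemmas (Lemmas~\ref{lem::pigeon}, \ref{lem::pigeon2}) over $\sum_{k,h}\epsilon_k(x_h^k)$ and $\sum_{k,h}\epsilon_k(x_h^k,a_h^k)$ yields the claimed rate: the emission term contributes $\sqrt{H^5 X K(\log|\Theta| + \iota)}$ (the $X$ coming from pigeonhole over latent states, the $H^5$ from the $H^2$ multiplier on $\epsilon_k(x)$ and the horizon-squared in the recursion constant $(1+1/2c)^H = O(1)$), and the transition terms contribute the $\sqrt{H^5 X^2 A K \iota}$ piece.

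The main obstacle, and the only place that requires real care rather than bookkeeping, is establishing the per-state $\ell_1$ MLE bound of the form $\|\hat O_k(\cdot|x) - O_\star(\cdot|x)\|_1 \lesssim \sqrt{\log(|\Theta|XK/\delta)/n_k(x)}$ uniformly over all $k$ and all $x$. The subtlety is that the samples $(x_h^\ell, y_h^\ell)$ used to fit the MLE are not i.i.d.\ — the visitation counts $n_k(x)$ are random and depend on the adaptively chosen policies — so one cannot simply condition on a fixed sample size. The standard fix is to prove the MLE guarantee in its aggregate (sum-over-rounds) form, $\sum_{\ell,h}\|\hat O_{k+1}(\cdot|x_h^\ell) - O_\star(\cdot|x_h^\ell)\|_1^2 \lesssim \log(|\Theta|K/\delta)$ via the usual Hellinger/log-likelihood-ratio martingale argument (e.g. as in the analyses underlying FLAMBE/\citet{agarwal2020flambe}), then restrict to a fixed $x$ and note the left side is at least $n_k(x)\|\hat O_k(\cdot|x) - O_\star(\cdot|x)\|_1^2$ since for the tabular-latent MLE the estimate at $x$ depends only on samples at $x$; a union bound over $x \in \XX$ introduces the extra $\log X$. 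This replaces the $\sqrt Y$ dependence of the tabular bound by $\log|\Theta|$, which is the whole point of the proposition. Everything downstream is then a mechanical repeat of Appendix~\ref{app::tabular} with relabeled constants.
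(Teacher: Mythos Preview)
Your proposal takes a genuinely different route from the paper, and that difference matters for the result as stated. The paper does \emph{not} reuse the refined $\alpha$-vector recursion (Lemmas~\ref{lem::recursive-alpha-bound}, \ref{lem::Ibound}, \ref{lem::B-opt-bound}, \ref{lem::B-bound}) or the events $\EE_T$, $\EE_T^c$, $\EE_T^1$ from Appendix~\ref{app::tabular}. Instead it defines a \emph{new}, cruder transition event $\EE_T = \{\|T_\star(\cdot|x,a) - \hat T_k(\cdot|x,a)\|_1 \leq \sqrt{C_T' X\iota/n_k(x,a)}\}$ (a plain $\ell_1$ bound, matching the new bonus with the $\sqrt X$ factor), proves a single optimism lemma (Lemma~\ref{lem::alpha-optimism-simple}) directly from these TV bounds, and then controls $\hat v_k(\hat\pi_k) - v(\hat\pi_k)$ via the simulation lemma (Lemma~\ref{lem::sim-lem}) plus a change-of-measure step---no recursive $\alpha$-vector unrolling, no $\EE_T^c$ events. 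This is why only three events (crude $\EE_T$, MLE $\EE_O$, Azuma) are used and the probability is $1-3\delta$; your scheme would use five.

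The reason the paper deliberately abandons the refined approach here is the point you gloss over: the history-dependent event $\EE_T$ from the main tabular proof requires a union bound over all partial histories $\tau_h \in \YY^h\times\AA^{h-1}$, which costs a factor of $\log Y$. The whole purpose of Proposition~\ref{prop::fn-simple} is to replace $Y$-dependence by $\log|\Theta|$, and in particular to allow $\YY$ to be infinite. Your ``carry over verbatim'' step silently reintroduces $\log Y$ (and fails outright when $Y=\infty$). The paper remarks on exactly this trade-off right after the proposition: the original technique would work but yields an unwanted $\log Y$; the cruder simulation-lemma route pays an extra $X$ in the transition term ($\sqrt{H^5 X^2 AK\iota}$ instead of $\sqrt{H^4 XAK\iota}$) in exchange for being $Y$-free. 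Relatedly, the bonuses specified for this proposition, $\epsilon_k(x,a)\propto\sqrt{X\iota/n_k(x,a)}$, are calibrated to the crude $\ell_1$ event and do not dominate the refined history-dependent deviation, so your optimism argument would not go through with them as written. Your treatment of the MLE emission event is fine and matches the paper.
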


We see that the original $Y$ dependence is replaced with the complexity $\log(|\Theta|)$.
An interesting observation of this result is that we do not have to tailor the exploration to the type of function approximator used for $O_\star$ aside from adjustment of the bonus magnitude. The class $\Theta$ can also be completely arbitrary and need not satisfy any further structural conditions besides realizability and learnability for the MLE (i.e. manageable complexity). 

We finally remark that the dependence on $X$ is worse than the purely tabular case. This is due to an alternative technical approach (akin to the difference between the UCRL bound of \citet{auer2008near} and the improved version of \citet{azar2017minimax}). It is possible to apply our original technique to this case as well, but, in contrast to MDPs, this would yield a $\log(Y)$ factor due to a union bound over histories, which is not ideal.  We believe the simplicity of this analysis and ability to handle infinite $Y$ is a more desirable choice.

\subsubsection{Proof of Proposition~\ref{prop::fn-simple}}
We define new high probability events for the count-based estimate $\hat T_k$ and maximum likelihood estimate $\hat O_k$:
\begin{align}
	\EE_T & = \left\{ \forall k \in [K], x \in \XX, a \in \AA, \  \| T(\cdot | x, a)  - \hat T_k(\cdot | x, a) \|_1 \leq \sqrt{  C_T' X \log(X^2AK H / \delta) \over n_k(x, a)}  \right\} \\
	\EE_O & = \left\{  \forall k \in [K], x \in \XX, \ \| O(\cdot | x) - \hat O_k(\cdot | x) \|_1 \leq \sqrt{C_O'  \log (X K  |\Theta|  /\delta )  \over n_k(x)}  \right\}.
\end{align}
with $C_T' = \constantCTnew$ and $C_O' = \constantCOnew$.
$P(\EE_T) \geq 1 - \delta$ follows the same proof as Lemma~\ref{lem::emission-tab} and $P(\EE_O)\geq 1 - \delta$ follows the same proof as Lemma~\ref{lem::mle-concentration} but applied to the emission function (see also Theorem~21 of \citet{agarwal2020flambe}). This guarantees that, with probability at least $1 - \delta$,
\begin{align}
    n_k(x) \| O(\cdot | x^\ell_h) - \hat O_k(\cdot | x^\ell_h) \|^2_1 & \leq  \sum_{\ell \in [k - 1], h \in [H]} \| O(\cdot | x^\ell_h) - \hat O_k(\cdot | x^\ell_h) \|_1^2 \\
    & \leq  8 \log (K |\Theta| /\delta)
\end{align}
for all $k \in [K]$. Rearranging ensures the claim on $P(\EE_O)$. Assuming these events hold, we show that this ensures optimism of the $\alpha$-vectors as before. 

 Let $\hat \alpha$ denote the $\alpha$-vector for the estimated model $\hat T$ and $\hat O$ and let $\alpha$ be the one for the true model.

\begin{lemma}\label{lem::alpha-optimism-simple} Let the above events hold. Then,
    $ \hat \alpha^\pi_{k, h, \tau_h } \geq \alpha^\pi_{h, \tau_h} (x) + H \epsilon_k(x)$
\end{lemma}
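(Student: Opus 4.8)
The plan is to prove this by backward induction on $h \in [H]$, paralleling Lemma~\ref{lem::alpha-optimism-refined} but with one essential simplification: since the high-probability events in this setting are total-variation ($\ell_1$) bounds on $\hat T_k - T_\star$ and $\hat O_k - O_\star$ rather than the $\alpha^{\pistar}$-weighted transition event, the whole argument becomes policy-agnostic, which is why the statement is now asserted for an arbitrary history-dependent $\pi$ (the UCRL-versus-refined trade-off noted after Proposition~\ref{prop::fn-simple}). For the base case $h = H$, Proposition~\ref{prop::alpha} gives $\hat\alpha^\pi_{k,H,\tau_H}(x,a) = \hat r_k(x,a) = r(x,a) + 3H\epsilon_k(x,a) + H\epsilon_k(x) = \alpha^\pi_{H,\tau_H}(x,a) + 3H\epsilon_k(x,a) + H\epsilon_k(x) \ge \alpha^\pi_{H,\tau_H}(x,a) + H\epsilon_k(x)$, and averaging over $a \sim \pi(\cdot\,|\,\tau_H)$ yields the per-state version $\hat\alpha^\pi_{k,H,\tau_H}(x) \ge \alpha^\pi_{H,\tau_H}(x) + H\epsilon_k(x)$.

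For the inductive step, fix $h \in [H-1]$ and assume $\hat\alpha^\pi_{k,h+1,\tau'}(x') \ge \alpha^\pi_{h+1,\tau'}(x') + H\epsilon_k(x')$ for every next-step history $\tau'$ and latent state $x'$. Subtracting the $\alpha$-vector recursions of Proposition~\ref{prop::alpha} for the learned model $(\hat T_k,\hat O_k,\hat r_k)$ and for the true model, the reward term contributes $\hat r_k(x,a) - r(x,a) = 3H\epsilon_k(x,a) + H\epsilon_k(x)$, so it suffices to lower bound the Bellman-backup difference by $-3H\epsilon_k(x,a)$. I would decompose $\hat T_k\hat O_k - T_\star O_\star = \hat T_k(\hat O_k - O_\star) + (\hat T_k - T_\star)O_\star$ and insert $\pm\alpha^\pi_{h+1}$ against $\hat\alpha^\pi_{k,h+1}$, producing three pieces: (i) an emission-error piece, whose magnitude is at most $(H-h)\sum_{x'}\hat T_k(x'\,|\,x,a)\,\|\hat O_k(\cdot\,|\,x') - O_\star(\cdot\,|\,x')\|_1 \le (H-h)\sum_{x'}\hat T_k(x'\,|\,x,a)\epsilon_k(x')$ by $\EE_O$; (ii) a transition-error piece, whose magnitude is at most $(H-h)\,\|\hat T_k(\cdot\,|\,x,a) - T_\star(\cdot\,|\,x,a)\|_1 \le (H-h)\epsilon_k(x,a)$ by $\EE_T$; and (iii) the term $\sum_{x',y'}\hat T_k(x'\,|\,x,a)\hat O_k(y'\,|\,x')\bigl(\hat\alpha^\pi_{k,h+1,\tau'}(x') - \alpha^\pi_{h+1,\tau'}(x')\bigr) \ge H\sum_{x'}\hat T_k(x'\,|\,x,a)\epsilon_k(x')$ by the inductive hypothesis. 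Summing (i)--(iii) lower-bounds the backup difference by $-(H-h)\epsilon_k(x,a) + h\sum_{x'}\hat T_k(x'\,|\,x,a)\epsilon_k(x') \ge -(H-h)\epsilon_k(x,a) \ge -3H\epsilon_k(x,a)$, hence $\hat\alpha^\pi_{k,h,\tau_h}(x,a) \ge \alpha^\pi_{h,\tau_h}(x,a) + H\epsilon_k(x)$; averaging over $a \sim \pi(\cdot\,|\,\tau_h)$ closes the induction.

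The one delicate point to watch is that $\alpha^\pi_{h+1,\tau'}(x')$ depends on the observation $y'$ being summed over (since $\tau'$ is $\tau_h$ concatenated with $(a,y')$), so it cannot be pulled out of $\sum_{y'}$; this is handled entirely by the uniform-in-$\tau'$ bound $0 \le \alpha^\pi_{h+1,\tau'}(x') \le H-h$ from Proposition~\ref{prop::alpha}, which is exactly what forces the single extra factor of $H$ in front of the $\ell_1$ errors compared with the refined tabular analysis. A secondary bookkeeping annoyance is that piece (i) is weighted by $\hat T_k$ rather than $T_\star$; unlike the tabular proof, where Corollary~\ref{cor::transition-mean} was used to change measure, here that term is absorbed directly by the inductive-hypothesis term (iii), which carries the same $\hat T_k$ weighting with the favorable sign, so no further work is needed. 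I do not expect a genuine obstacle: the proof is a routine, slightly coarser version of Lemma~\ref{lem::alpha-optimism-refined}.
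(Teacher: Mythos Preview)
Your proposal is correct and follows the same backward-induction skeleton as the paper's proof, but you choose the other natural telescoping of the model-error term, and this makes a small but real difference. The paper writes $\hat T\hat O - T_\star O_\star = (\hat T - T_\star)\hat O + T_\star(\hat O - O_\star)$, so its emission-error piece is $T_\star$-weighted while the inductive-hypothesis term $H\sum_{x'}\hat T(x'\,|\,x,a)\epsilon(x')$ is $\hat T$-weighted; to make them cancel, the paper first converts $\hat T$ to $T_\star$ via the triangle inequality (using $\epsilon(x')\le 2$), which costs an additional $2H\|\hat T - T_\star\|_1$ and is precisely what forces the $3H\epsilon_k(x,a)$ coefficient in the bonus. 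Your decomposition $\hat T(\hat O - O_\star) + (\hat T - T_\star)O_\star$ puts both the emission error and the inductive term under $\hat T$, so they combine directly to $(H-(H-h))\sum_{x'}\hat T(x'\,|\,x,a)\epsilon(x')\ge 0$ with no change of measure; as you implicitly notice, your route would in fact close with only an $H\epsilon_k(x,a)$ transition bonus. Either way the argument is routine; your version is marginally cleaner here.
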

\begin{proof}
For now, we omit the subscript notation denoting the round $k$. By the above events, we have that
\begin{align}
		\hat \alpha^\pi_{H, \tau_H}(x) & = \sum_{a} \pi(a | \tau_h) \left(  r(x, a) + H \epsilon(x) + 3 H\epsilon(x, a) \right),
	\end{align}
	which means that
	\begin{align}
		\hat \alpha^\pi_{H, \tau_H}(x) -  \alpha^\pi_{H, \tau_H}(x) & \geq  H \epsilon(x) +  3H \epsilon(x, a) \\
		& \geq H \epsilon(x)
	\end{align}
	Inductively, assume that $\hat \alpha^\pi_{ h + 1, \tau_{ h+ 1}}(x) \geq \alpha^\pi_{ h + 1, \tau_{ h+ 1}}(x) +   H \epsilon(x) $. Then, using recursive definitions of the $\alpha$-vectors, 
	\begin{align}
		\hat \alpha^\pi_{h, \tau_h} (x,a) -  \alpha^\pi_{h, \tau_h} (x,a) & = \hat r(x, a) +  \sum_{x',  y'} \hat T(x' | x, a) \hat O(y' |  x') \hat \alpha^\pi_{h + 1, \tau_h'}(x')  \\
		& \quad - r(x, a) - \sum_{x',  y'} T_\star(x' | x, a) O_\star(y' |  x') \alpha^\pi_{h + 1, \tau'_h}(x') \\
		& \geq \hat r(x, a) - r(x, a) \\
		& \quad  + \sum_{x',  y'} \hat T(x' | x, a) \hat O(y' |  x')  \left( \alpha^\pi_{h + 1, \tau_h'}(x') +  H \epsilon(x') \right) \\
		& \quad - \sum_{x',  y'} T_\star(x' | x, a) O_\star(y' |  x') \alpha^\pi_{h + 1, \tau'_h}(x') \\
		& =  \hat r(x, a) - r(x, a) \\
		& \quad  + \sum_{x',  y'} \left(\hat T(x' | x, a) - T_\star(x'  | x, a)  \right)  \hat O(y' |  x') \alpha^\pi_{h  + 1, \tau_{h}'}  (x') \\
		& \quad + \sum_{x', y'} T_\star(x' | x, a) (x') \alpha^\pi_{h  + 1, \tau_{h}'}  \left(  \hat O(y' | x') - O_\star(y'  | x')  \right)  \\
		& \quad +  H \sum_{x', y'} \hat T(x' | x, a) \hat O(y' | x')  \epsilon(x'),
	\end{align}
	where $\tau_h'$ is again the concatenation of $\tau_h$ with $y'$ and $a$. Now, we can lower bound the above using the total variation distance:
	\begin{align}
		\hat \alpha^\pi_{h, \tau_h} (x,a) -  \alpha^\pi_{h, \tau_h} (x,a) & \geq \hat r(x, a) - r(x, a) \\
		& \quad - H  \| \hat T(\cdot | x, a) - T_\star(\cdot | x, a)  \|_1  \\
		&\quad - H \sum_{x'} T_\star(x' | x, a) \| \hat O(\cdot | x') - O_\star(\cdot | x') \|_1  \\
		& \quad +  H \sum_{x'} \hat T(x' | x, a)  \epsilon(x'). \\
	\end{align}
	Recognizing that $\epsilon(x') \leq 2$ by definition, the last term is lower bounded as
	\begin{align}
		H \sum_{x'} \hat T(x' | x, a)  \epsilon(x') & \geq  - 2 H \| \hat T(\cdot | x, a)  - T_\star(\cdot | x, a)  \|_1  + H \sum_{x'} T_\star(x' | x, a)  \epsilon(x')
	\end{align}
	Putting these all together, we have
	\begin{align}
		\hat \alpha^\pi_{h, \tau_h} (x,a) -  \alpha^\pi_{h, \tau_h} (x,a) & \geq \hat r(x, a) - r(x, a) \\
		& \quad  - 3 H \| \hat T(\cdot | x, a) - T_\star(\cdot | x, a) \|_1 \\
		& \quad -  H \sum_{x'} T_\star(x' | x, a) \left(  \| \hat O(\cdot | x') - O_\star(\cdot | x') \|_1 - \epsilon(x')  \right) \\
		& \geq \hat r(x, a) - r(x, a) \\
		& \quad  - 3 H \| \hat T(\cdot | x, a) - T_\star(\cdot | x, a) \|_1 \\
		& \geq  H \epsilon(x) + 3 H \epsilon(x, a) - 3 H\| \hat T(\cdot | x, a) - T_\star(\cdot | x, a) \|_1 \\
		& \geq H \epsilon(x) 
	\end{align}
	Applying this inductive argument backwards along $h = H, \ldots, 1$ gives the result.
	
\end{proof}

\begin{lemma}
	Let the above events hold. Then, for any history-dependent policy $\pi$, it holds that $\hat v_k(\pi) - v(\pi) \geq 0 $, where $\hat v_k$ is the policy value under the model $\hat T_k$, $\hat O_k$, and $\hat r_k$.
\end{lemma}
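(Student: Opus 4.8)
The plan is to derive the claim directly from the optimism bound on $\alpha$-vectors proved in Lemma~\ref{lem::alpha-optimism-simple}, imitating the corresponding step in the proof of the purely tabular theorem (the computation establishing $\hat v(\pistar) - v(\pistar) \geq 0$), but now carried out for an arbitrary history-dependent $\pi$ and with the simplified high-probability events $\EE_T$ and $\EE_O$ of this section.

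First I would expand $v(\pi)$ and $\hat v_k(\pi)$ using the $\alpha$-vector representation of Proposition~\ref{prop::alpha} at step $h = 1$. Writing $\tau_1 = (y_1)$ and unrolling the initial belief, $v(\pi) = \sum_{x, y} \rho(x)\, O_\star(y \mid x)\, \alpha^\pi_{1, \tau_1}(x)$ and $\hat v_k(\pi) = \sum_{x, y} \rho(x)\, \hat O_k(y \mid x)\, \hat\alpha^\pi_{k, 1, \tau_1}(x)$, where the $\alpha$-vectors on the right are those of the true model $(T_\star, O_\star, r)$ and of the learned model $(\hat T_k, \hat O_k, \hat r_k)$, respectively. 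Taking the difference and adding and subtracting $\sum_{x,y}\rho(x)\hat O_k(y\mid x)\alpha^\pi_{1,\tau_1}(x)$ splits $\hat v_k(\pi) - v(\pi)$ into an emission-mismatch term $\sum_{x,y}\rho(x)\bigl(\hat O_k(y\mid x) - O_\star(y\mid x)\bigr)\alpha^\pi_{1,\tau_1}(x)$ and an $\alpha$-vector gap term $\sum_{x,y}\rho(x)\hat O_k(y\mid x)\bigl(\hat\alpha^\pi_{k,1,\tau_1}(x) - \alpha^\pi_{1,\tau_1}(x)\bigr)$.

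For the first term, Proposition~\ref{prop::alpha} with $r \in [0,1]$ gives $0 \le \alpha^\pi_{1,\tau_1}(x) \le H$, while $\EE_O$ together with the trivial bound $\|\hat O_k(\cdot\mid x) - O_\star(\cdot\mid x)\|_1 \le 2$ gives $\|\hat O_k(\cdot\mid x) - O_\star(\cdot\mid x)\|_1 \le \epsilon_k(x)$; since $\alpha^\pi_{1,\tau_1}(x)$ for each fixed $x$ varies with $y$ only through $\tau_1 = (y)$, Hölder's inequality bounds this term below by $-H\sum_x \rho(x)\epsilon_k(x)$. For the second term, Lemma~\ref{lem::alpha-optimism-simple} gives $\hat\alpha^\pi_{k,1,\tau_1}(x) - \alpha^\pi_{1,\tau_1}(x) \ge H\epsilon_k(x)$, and summing against the probability vector $\hat O_k(\cdot\mid x)$ (which has unit mass) yields a lower bound of $H\sum_x \rho(x)\epsilon_k(x)$. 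Adding the two estimates, the $\pm H\sum_x\rho(x)\epsilon_k(x)$ contributions cancel, so $\hat v_k(\pi) - v(\pi) \ge 0$.

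There is essentially no obstacle: the statement is an immediate corollary of Lemma~\ref{lem::alpha-optimism-simple}. The only point needing (minor) care is that the learned model's initial posterior over $x_1$ is formed with $\hat O_k$ rather than with $O_\star$, producing the emission-mismatch error in the initial belief; this is precisely why the optimism lemma carries the extra $H\epsilon_k(x)$ slack — that slack is exactly what absorbs this error, which is also the reason for the $H\epsilon_k(x)$ term (as opposed to $\epsilon_k(x)$) in the bonus reward $\hat r_k$.
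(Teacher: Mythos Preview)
Your proposal is correct and essentially identical to the paper's own proof: both expand $\hat v_k(\pi)-v(\pi)$ via the $\alpha$-vector representation at $h=1$, add and subtract $\sum_{x,y}\rho(x)\hat O_k(y\mid x)\alpha^\pi_{1,\tau_1}(x)$ to obtain the same two terms, bound the emission-mismatch term below by $-H\sum_x\rho(x)\epsilon_k(x)$ using $\|\alpha^\pi_{1,\tau_1}\|_\infty\le H$ and $\EE_O$, and bound the $\alpha$-gap term below by $H\sum_x\rho(x)\epsilon_k(x)$ via Lemma~\ref{lem::alpha-optimism-simple}, with the two contributions cancelling. Your added commentary on why the $H\epsilon_k(x)$ slack is needed is accurate and matches the role that term plays in the paper's argument.
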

\begin{proof}
	The proof is immediate from the $\alpha$-vector representation of the value functions and Lemma~\ref{lem::alpha-optimism-simple}:
	\begin{align}
\hat v_k(\pi) - v(\pi) & = \sum_{x_1, y_1}  \hat O(y_1 | x_1) \rho(x_1) \hat \alpha^\pi_{k, 1, \tau_1} (x_1)  -  \sum_{x_1, y_1}  O_\star(y_1 | x_1) \rho(x_1)  \alpha^\pi_{1, \tau_1} (x_1) \\
& = \sum_{x_1, y_1} \left( \hat O_k(y_1 | x_1) - O_\star(y_1 | x_1) \right) \rho(x_1) \alpha^{\pi}_{1, \tau_1}(x_1) \\
&\quad  + \sum_{x_1, y_1}  \hat O(y_1 | x_1) \rho(x_1) \left( \hat \alpha^\pi_{k, 1, \tau_1} (x_1) - \alpha^\pi_{1, \tau_1}(x_1)  \right)    \\
& \geq   - \sum_{x_1} H \rho(x_1) \epsilon_k(x_1)   +   \sum_{x_1, y_1} H \hat O(y_1 | x_1) \rho(x_1) \epsilon_k(x_1) \\
& = 0.
	\end{align}
\end{proof}

We are now ready to prove the result. Let $\hat \E_{k}$ and $\hat P_k$ denote the expectation and measure under the learned model at round $k$. Then,
\begin{align}
	\regret(K) & = \sum_{k \in [K]} v(\pistar) - v(\hat \pi_k) \\
	& \leq \sum_{k \in [K]} \hat v_k(\hat \pi_k) - v(\hat \pi_k) \\
	& = \sum_{k \in [K]}   \hat \E_{k, \hat \pi_k}  \left[   \ \sum_{h \in [H]}  \hat r_k(x_h, a_h) \right] - \E_{\hat \pi_k} \left[  \sum_{h \in [H]}   r(x_h, a_h)  \right] \\
	& = \sum_{k \in [K]}   \hat \E_{k, \hat \pi_k} \left[   \ \sum_{h \in [H]}   r(x_h, a_h) + H\epsilon_k(x_h) +  3H \epsilon_k(x_h, a_h) \right] - \E_{\hat \pi_k} \left[  \sum_{h \in [H]}   r(x_h, a_h)  \right] \\
	& \leq \sum_{k \in [K]}  H \| \hat P_{k, \hat \pi_k} - P_{\hat \pi_k} \|_1  + 3 H \hat \E_{k, \hat \pi_k} \left[  \sum_{h \in [H]} \epsilon_k(x_h) + \epsilon_k(x_h, a_h) \right].
\end{align}
Since $\epsilon_k(x)$ and $\epsilon_k(x, a)$ are no greater than $2$, we can change distributions in the last term of the previous display to get
\begin{align}
	\regret(K) & \leq  \sum_{k \in [K]}  13 H^2 \| \hat P_{k, \hat \pi_k} - P_{\hat \pi_k} \|_1 + 3 H \E_{\hat \pi_k}  \left[  \sum_{h \in [H]} \epsilon_k(x_h) + \epsilon_k(x_h, a_h)  \right].
\end{align}
To bound the remaining terms, we rely on the Azuma-Hoeffding inequality (Lemma~\ref{lem::azuma}) with probability at least $1-  \delta$ and the simulation lemma (Lemma~\ref{lem::sim-lem}). 

The regret can then further be bounded as
\begin{align}
	\regret(K) & \leq \sum_{k \in [K]}  13 H^2  \E_{\hat \pi_k} \left[  \sum_{h \in [H]} \epsilon_k(x_h) + \epsilon_k(x_h, a_h) \right] + 3 H \E_{\hat \pi_k}  \left[  \sum_{h \in [H]} \epsilon_k(x_h) + \epsilon_k(x_h, a_h)  \right] \\
	& = 16H^2  \sum_{k \in [K], h \in [H] }  \E_{\hat \pi_k}  \left[ \epsilon_k(x_h) + \epsilon_k(x_h, a_h) \right] \\
	& \leq 64H^2 \sqrt{ KH \log(2/\delta)} +   16 H^2 \sum_{k, h} \epsilon_{k}(x_h^k) + \epsilon_k(x_h^k, a_h^k) \\
	& \leq 64H^2 \sqrt{ K  H \log(2/\delta)} +   16 H^2 \sum_{k, h} \sqrt{ C_O'  ( \log (|\Theta |) +  \iota )  \over \max\{ 1, n_k(x_h^k)\}} + \sqrt{ C_T' X \iota \over \max\{ 1, n_k(x_h^k, a_h^k) \} }  \label{eq::conversion},
\end{align}
To bound this final term, as before, we appeal to pigeonhole principle (Lemma~\ref{lem::pigeon}):
	\begin{align}
	\regret(K) & \lesssim  \sqrt{  H^5 K  \log(2/\delta)} +    \sqrt{  H^5  X  K  \left( \log (|\Theta|) + \iota \right)  }  + \sqrt{ H^5 X^2 A K \iota  } \\
	&\quad + H^3 X \sqrt{ \log (|\Theta|) +  \iota   } +  H^3 XA \sqrt{ X \iota }.
\end{align}
By a union bound on the events $\EE_T$ and $\EE_O$ and the Azuma-Hoeffding event, we conclude that this occurs with probability at least $1-  3\delta$.

\section{Proof of  Lower Bound Theorem~\ref{thm::lower-bound}}\label{app::lb}

\begin{figure}
	\centering
	\includegraphics[width=5in]{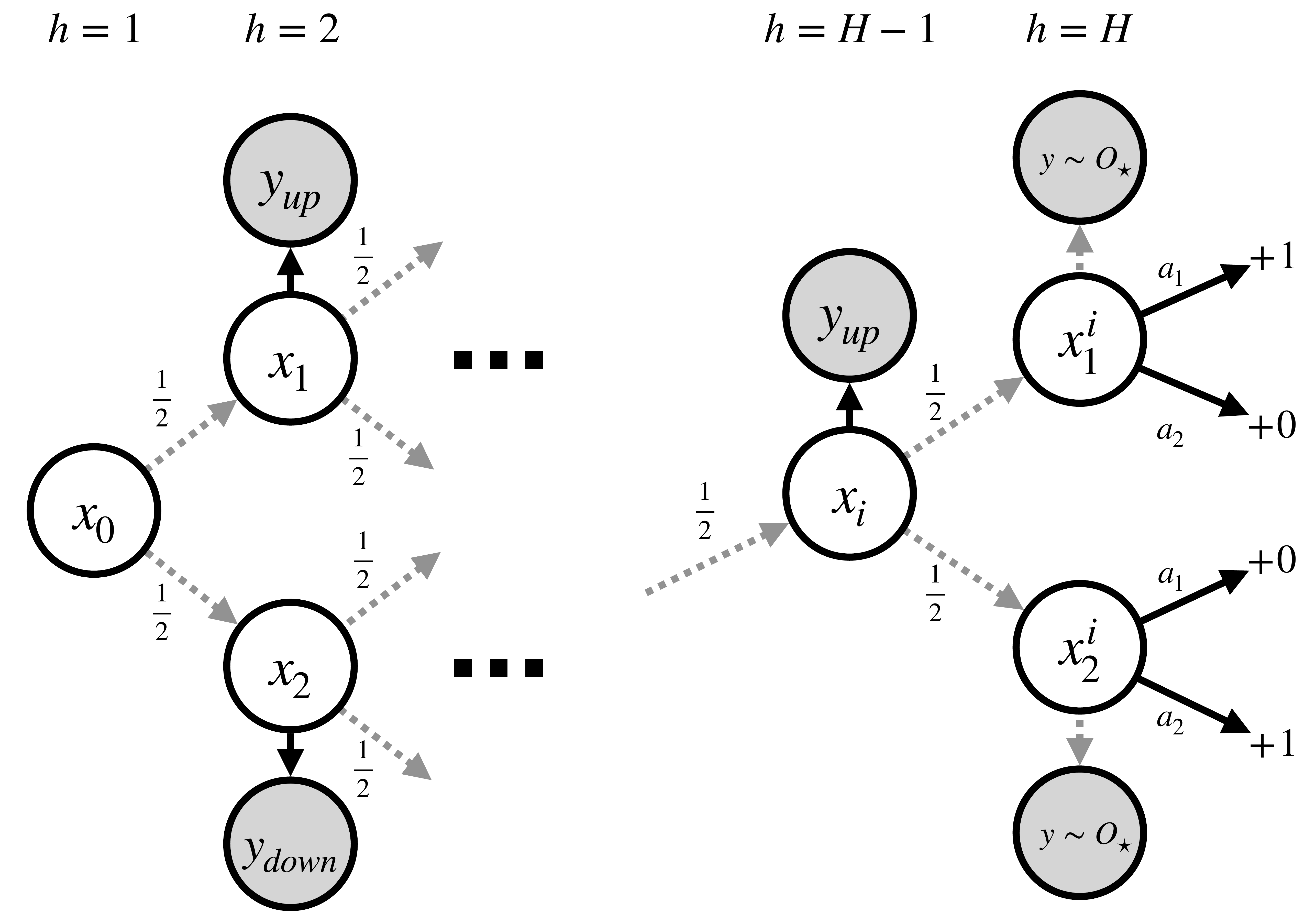}
	\caption{Hard instance of POMDP. The latent state space is a binary tree starting at $x_0$ and the learner traverses a layer each time step. The root and and second layer depicted on the left. Observations about which direction it traversed are revealed at each step. In the last and second to last layer, one of many triplets like Figure~\ref{fig::lbsmall} is encountered (only one depicted in this figure on the right). The policy has no control until the last layer. }\label{fig::instance}
\end{figure}

In this section, we formally prove the information-theoretic lower bound of Theorem~\ref{thm::lower-bound}. We take the standard minimax approach: design a class of problem instances $\UU$ such that, for any algorithm that generates a policy $\hat \pi$ over $K$ episodes of interaction, there is always a problem instance $u \in \UU$ with
\begin{align*}
    \E_{u} \left[ v(\pistar) - v(\hat \pi)\right] \geq \epsilon
\end{align*}
for some $\epsilon$ that we will try to control. The expectation is over the algorithm and data generated under the instance $u$.

While the intuition for the lower bound in the main paper with $X = 3$ is useful for understanding how the $Y$ dependence appears in the lower bound, it is not immediate to generalize. Note that it is not sufficient to take the naive approach of simply maintaining $H = 1$ and increasing the number of latent states. In such cases, the possible problem instances will either have very low loss separation or be very easy to test and differentiate. This would mean an algorithm could easily either find out how to act optimally or not acting optimally is ``close enough." The issue is that the posterior distribution over $\XX$ given an observation would end up being very close to uniform (to avoid being able to simply test to distinguish between instances). However, this allows for a potentially large margin of error for any policy since even the optimal policy will struggle greatly to achieve high rewards regardless.

This issue can be resolved by leveraging the history to reduce the problem to solving many (on the order $X$) $3$-latent state subproblems. Consider the sketch POMDP in Figure~\ref{fig::instance}, which is the full version of the on the one in Figure~\ref{fig::lbsmall}. Figure~\ref{fig::instance} depicts a binary tree in which the policy randomly traverses the nodes to the leaves of the last layer $H$. The last and second to last layers collectively make up a swath of $3$-latent state problems. The catch is that, as the policy traverses the nodes, the observations reveal its exact state in the tree up to layer $H - 1$ by revealing whether it traversed to the upper or lower child at each step with $\{\yup, \ydown\}$. Thus it can exactly decode its state at layer $H - 1$ and find out which of the $3$-latent state problems it is in. At layer $H - 1$, it is faced with one of the possible $3$-latent state problems and it must act optimally.

We will design the emission function such that it is difficult to decode whether the learner is in the upper or lower child in the same way as we described for the case when $X = 3$.
Thus, the learner will have to visit a given $h = H - 1$ layer parent at least $ \Omega(Y)$ times before learning the optimal policy for that parent. Furthermore, it must learn the optimal policy for at least a constant fraction of the parents (size $ \Omega(X)$) to compete with the full optimal policy. Together this yields total interactions on the order of $\Omega(XY)$.

\lowerbound*

\subsection{Construction of instance class}

Note that the preconditions ensure that $Y(X + 1) \geq 512 \log 2$.

The learner starts deterministically at $x_0$. Without loss of generality, we assume that $X + 1$ is a power of $2$ and $Y$ is even. Otherwise, we can reduce $X + 1$ to the nearest power of $2$ and reduce $Y$ to the nearest even number by at most losing a constant factor in the bound.  There are a total of $H = \log_2(X + 1)$ timesteps. Before the $H$th timestep, rewards are all zero and the learner transitions uniformly randomly from the current latent state to either the upper or lower child latent state in the next layer, regardless of the action (see Figure~\ref{fig::instance}).
An observation $\yup$ or $\ydown$ is revealed indicating whether the learner is currently in the upper or lower child of the parent latent state. Hence, for $h \leq H - 1$, the latent state can be exactly decoded given the history of observations. We assume that all states in the final layer $h = H$ then transition to a dummy state such as $x_{0}$ at $H + 1$.

We denote the states of the final layer $h  =H$ by $\XX'$. Note that $\XX'$ consists of $X' := { X + 1 \over 2}$ latent states. States in $\XX'$ can be grouped into ${X' \over 2}$ groups of $2$ where the states in a group share the same parent. Let $x^1_i$ and $x^2_i$ be the upper and lower children of the same parent state $x_i$, respectively. The reward function is defined as
\begin{align}
 r(x^1_i, a)  = \begin{cases}
 1 & a = a_1 \\
 0 & a = a_2
 \end{cases}
\end{align}
\begin{align}
r(x^2_i, a)  = \begin{cases}
0 & a = a_1 \\
1 & a = a_2
\end{cases}
\end{align}
This is duplicated for all the parent-child triplets $(x_i, x_i^1, x_i^2)$ in the second to last and last layers. The emission function in the final layer is different depending on the parent $x_i$ and the child. For a child latent state $x \in \XX'$, we design $O_\star(\cdot |x)$ to be supported on $\YY' := \YY \setminus \{ \yup, \ydown\}$ with cardinality $Y' = \abs{\YY'}$, which will be specified presently. The instances will vary based on the selection of $O_\star$.

\subsection{Selection of emission function}

 We construct the instances by perturbing the probabilities in $O_\star$ so that they deviate slightly from uniform. To ensure that probabilities properly sum to $1$, we will split $\YY'$ into equal partitions $\YY'_+$ and $\YY'_-$ each of size $|Y'|\over 2$. We can construct a bijection such that for any $y \in \YY'_-$ there is a ``mirror" $y_+$ in $\YY'_+$. An instance in the class will be specified by some vector $u \in \{-1, 1\}^{ {X'Y' \over 4}} $ which determines the observation matrix. We will denote the observation matrix for instance $u$ with $O_{\star, u}$. Fix $\epsilon > 0$. We index into the vector $u$ with an observation in $y \in \YY'_+$ and a parent latent state $x_i \in \XX$ in the second to last layer. Let $x^1_i$ and $x^2_i$ be leaf latent states that share the same parent $x_i$. Note that this is valid since the child states (as we construct them) are distinct for each parent.  Then for $x^1_i$, define 
\begin{align}
O_{\star, u} (y | x^1_i) = { 1 + u(y, x_i) \epsilon \over  Y'  } \quad \forall y \in \YY'_+,\\
O_{\star, u}(y | x^1_i) = { 1 - u(y_+, x_i) \epsilon \over  Y'  } \quad \forall y \in \YY'_-, 
\end{align}
and for $x_2$, 
\begin{align}
O_{\star, u}(y | x^2_i) = { 1 - u(y, x_i) \epsilon \over  Y'  } \quad \forall y \in \YY'_+,\\
O_{\star, u}(y | x^2_i) = { 1 + u(y_+, x_i) \epsilon \over  Y'  } \quad \forall y \in \YY'_-.
\end{align}
We will also use $P_u$ and $\E_u$ to denote the measure and expectation, respectively, under instance $u$.
We can verify that, conditioned on the the parent $x_i$, the distribution over $\YY'$ is uniform:
\begin{align}
P_u(y | x_i)  & = P(x_i^1 | x_i)O_{\star, u}(y | x^1_i) + P(x^2_i | x_i) O_{\star, u}(y | x^2_i)  \\
& =  { O_{\star, u}(y | x^1_i) + O_{\star, u}(y | x^2_i) \over 2 } = {1 \over Y'}.
\end{align}
It is also worth noting that, conditioned on the parent $x_i$ (equivalently, on the history $y_{1:H - 1}$), the posterior is:
\begin{align}
P_u(x^1_i  | y_H, x_i) & = { 1 + u(y_H, x_i) \epsilon  \over 2} \quad \forall y \in \YY'_+ \\
P_u(x^1_i  | y_H, x_i) & = { 1 - u(  (y_H)_+, x_i) \epsilon \over 2 } \quad \forall y \in \YY'_-.
\end{align}

Furthermore, by Lemma 4.7 of \citet{massart2007concentration}, 
there exists $\UU \in \{-1, 1\}^{X'Y'/4}$ such that $\abs{\UU} \geq \exp(Y'X'/32)$ and $\| u - u'\|_1 \geq { X' Y' \over 8}$ for all $u, u' \in \UU$ such  that $u \neq u'$.

\subsection{Separability condition} 
We now show in this construction that no history-dependent policy can perform well on all instances in $\UU$ simultaneously.
Let $\Pi$ be the class of all history-dependent, deterministic policies. Let $u \in \UU$ be fixed. It is clear that the optimal policy for instance $u$ chooses action $\hat a$ such that $r(\hat x, \hat a) = 1$ where $\hat x = \argmax_{x} P_u(x | y_{1:H}) \in \XX'$ maximizes the posterior. We denote this policy by $\pistar_{u}$. 

Consider an arbitrary $\pi$. Recall that, by construction, there is a bijection between $y_{1:H - 1}$ and the parent latent states at layer $H - 1$.
To avoid notational clutter, we will now denote this by $z$ (which we have previously written as $x_i$). Thus, we can equivalently write $\pi$ as a function of $z$ and the last observation $y_H$. Define
\begin{align}
v_{u}(\pi | z) = \E_u \left[ r(x, \pi(y_H, z)  ) \ | \ z \right]
\end{align}
as the conditional value of $\pi$ given it has reached the parent $z$ in instance $u$. A straightforward calculation shows that 
\begin{align}
v_u(\pistar_u | z)  - v_u(\pi | z) = \epsilon \cdot {N_z(\pi, \pistar_u) \over Y'} 
\end{align}
where $N_z (\pi, \pistar_u) := \sum_{y \in \YY'} \1\left\{  \pistar_u(y, z) \neq \pi(y, z)\right\}$ is the number of observations on which $\pi$ and $\pistar_u$ disagree given parent $z$. The sub-optimality gap on the full instance is simply the average over the ${X' \over 2}$ parents:
\begin{align}
v_u(\pistar_u) - v_u(\pi)  = \epsilon \sum_{z} { 2 N_z(\pi, \pistar_u) \over Y' X'} =  { 2\epsilon N(\pi, \pi_u^*)\over Y' X' },
\end{align}
where $N(\pi, \pi_u^*) = \sum_z \sum_{y \in \YY'} \1\left\{  \pistar_u(y, z) \neq \pi(y, z)\right\}$ is the total number of disagreements at layer $H$. Then, for any $u, u' \in \UU$ such that $u \neq u'$,
\begin{align}
\underbrace{v_u(\pistar_u) - v_u(\pi)}_{\text{error on instance } u } + \underbrace{v_{u'}(\pistar_{u'}) - v_{u'}(\pi)}_{\text{error on instance } u'} & = { 2 \epsilon \over Y' X' }  \left(  N(\pi, \pistar_u) + N(\pi, \pistar_{u'}) \right) \\
& \geq { 2 \epsilon N(\pistar_u, \pistar_{u'}) \over Y' X' }.
\end{align}
Finally, we recall that $\UU$ is such that $\|u - u'\|_1 \geq { X' Y' \over 8}$, which ensures that $u$ and $u'$ differ on at least ${ X' Y' \over 16}$ elements, which implies that $N(\pistar_u, \pistar_{u'}) \geq { X' Y' \over 16}$. Thus, we have
\begin{align}
v_u(\pistar_u) - v_u(\pi) + v_{u'}(\pistar_{u'}) - v_{u'}(\pi) & \geq { \epsilon \over 8}
\end{align}

\subsection{Fano's inequality application}

Thus far, we have detailed the instance class and shown that no policy can achieve error less than $\epsilon \over 16$ on more than one instance. To complete the proof, we apply Fano's inequality to show that these instances are essentially indistinguishable. 

Let $\hat \pi$ be the output of any algorithm $\AF$ that samples from a POMDP instance over $K$ episodes (which a random variable dependent on the instance in which it is run). We have 
\begin{align}
\max_{u \in \UU} \E_u\left[ v_u(\pistar_u) - v_u(\hat \pi) \right] & \geq { \epsilon \over 16 } \inf_{\Psi} {1 \over \abs{\UU}} \sum_{u \in \UU}  P_u\left( \Psi \neq u \right)
\end{align}
where $\Psi$ is a data-dependent test function, the $\inf$ is taken over all measurable tests, $P_u$ denotes the measure under instance $u$. By Fano's inequality,
\begin{align}
\max_{u \in \UU} \E_u\left[ v_u(\pistar_u) - v_u(\hat \pi) \right] & \geq  { \epsilon \over 16}  \left(  1 - {\max_{u \neq u'}  D_{KL}\left(P_u \| P_{u'}   \right)  + \log 2  \over \log |\UU|  }\right)
\end{align}

where $P_u$ and $P_{u'}$ are measures under instances $u$ and $u'$, respectively. Note that these are dependent on the algorithm $\AF$, which determines which actions to take over the $K$ episodes. That is, the probability of taking action $a_h^k$ in round $k$ at step $h$ is $\AF(a^k_h  | \bar \tau^{1:k - 1},  \bar \tau^k_h )$ where we recall that $\bar \tau_h^k = (x_{1:h}, y_{1:h}, a_{1: h - 1})$ is the partial trajectory and $\bar \tau^k$ is the full trajectory, both containing the latent states.  Crucially, note that we allow $\AF$ to be dependent on the latent states. Note that, for \setshort model, we usually assume that this is further reduced to only dependence on historical observations and actions within the current partial trajectory so that $\bar \tau_{h}^k$ becomes $\tau_{h}^k$ in the conditional part. However, this generality allows us to capture algorithms that also can access the underlying state even during training deployments.

The chain rule of the KL divergence gives us the following decomposition in terms of conditional KL divergences:
\begin{align*}
    D_{KL}(P_u \| P_{u'} )  & = \sum_{k \in [K]} \E_{\bar \tau^{1:k - 1}} \left[ D_{KL}\left( P_u(\bar \tau^k ) \| P_{u'} (\bar \tau^k ) \   | \ \bar \tau^{1:k - 1} \right)  \right].
\end{align*}
where we abuse notation slightly and let $P_u(\bar \tau^k )$ denote the distribution over trajectory $\bar \tau^k$. Then, the individual terms are also written as
\begin{align}
      \E_{\bar \tau^{1:k - 1}} \left[ D_{KL}\left( P_u(\bar \tau^k ) \| P_{u'} (\bar \tau^k )  \   | \ \bar \tau^{1:k - 1}  \right)  \right] & =  \E_{\bar \tau^{1:k - 1}} \left[ \sum_{\bar \tau^k} P_u(\bar \tau^k \ | \  \bar \tau^{1:k - 1} ) \log \frac{P_u(\bar \tau^k \  | \  \bar \tau^{1:k - 1} )}{P_{u'}(\bar \tau^k \ | \ \bar \tau^{1:k - 1} )}\right] 
      .
\end{align}
Observe that the conditional probability of a trajectory is
\begin{align}
        P_u(\bar \tau^k \  | \  \bar \tau^{1:k - 1} ) & = P_u(x_1^k)
        \prod_{h = 1}^{H} O_{\star, u} (y_{h}^k  | x_h^k) \AF(a_h | \bar \tau_h^k, \bar \tau^{1:k - 1})  T_{\star, u}(x_{h+1}^k | x_h^k, a_h^k)  
\end{align}
Between the instances $u$ and $u'$, everything is the same (including $\AF$) except for $O_{\star, u}(\cdot | x^k_H)$ and $O_{\star, u'}(\cdot | x^k_H)$ in the last layer by construction. Furthermore, we have that $P(x^k_H | \bar \tau^{1:k - 1}) = P(x^k_H) = {1 \over X'}$ since the policy has no control over the first $H - 1$ steps. Therefore, the conditional KL divergence for any $k$ becomes:
\begin{align}
    \E_{\bar \tau^{1:k - 1}} \left[ D_{KL}\left( P_u(\bar \tau^k ) \| P_{u'} (\bar \tau^k )  \   | \ \bar \tau^{1:k - 1}  \right)  \right] & = \sum_{y_H, x_H} P_u( x_H | \bar \tau^{1:k - 1}_H )  \log {O_{\star, u}(y_H | x_H) \over O_{\star, u'}(y_H | x_H)} \\
    & = {1 \over X'} \sum_{x_H \in \XX'} \sum_{y_H} \log {O_{\star, u}(y_H | x_H) \over O_{\star, u'}(y_H | x_H)} \\  
    & = {1 \over X'} \sum_{x \in \XX'}  \sum_{y \in \YY'_-}  O_{\star, u}(y | x) \log {O_{\star, u}(y | x) \over O_{\star, u'}(y | x)} +  O_{\star, u}(y_+ | x) \log {O_{\star, u}(y_+ | x) \over O_{\star, u'}(y_+ | x)} \\
    & \leq { 2 \over X' Y' } \sum_{x \in \XX', y \in \YY'_-} (1 + \epsilon) \log { 1+ \epsilon \over 1-  \epsilon} + (1 - \epsilon)\log { 1- \epsilon \over 1+  \epsilon} \\
    & \leq { 2 \over X' Y' } \sum_{x \in \XX', y \in \YY'_-} 8 \epsilon^2 \\
    & = 8 \epsilon^2
\end{align}

for $\epsilon < {1 /2}$. Therefore, for if $K \leq { \log |\UU | - 2\log 2 \over 16 \epsilon^2 } $, we have
\begin{align}
    \max_{u \in \UU} \E_u\left[ v_u(\pistar_u) - v_u(\hat \pi) \right] & \geq  { \epsilon \over 16}  \left(  1 - { 8K \epsilon^2    + \log 2  \over \log |\UU|  }\right)  \\
    & \geq { \epsilon \over 32 } 
\end{align}
From the lower bound on the size of $\UU$,  we have that
\begin{align}
    K \leq {X'  Y'   \over 1024 \epsilon^2 }
\end{align}
implies the above condition because
\begin{align}
    K  &\leq {X'  Y'   \over 1024 \epsilon^2 } \\
    & \leq { X' Y' /32 - 2\log 2 \over 16 \epsilon^2 } \\
     & \leq { \log | \UU | - 2\log 2 \over 16 \epsilon^2 } 
\end{align}
as long as $X' Y' \geq 64 \cdot 2 \log 2$. Since $Y' = Y - 2$ and $X' = {X + 1\over 2}$, we have
\begin{align*}
     X' Y' = {(X + 1) (Y - 2) \over 2} \geq  { (X + 1) Y \over 4 } \geq 128\log 2
\end{align*}
where the second inequality follows from the constraints on $X$ and $Y \geq 6$.

\section{Proof of Theorem~\ref{thm::fn-approx}}

A core component of the analysis is a simulation lemma that bounds the difference in values of a policy on two different POMDPs via the total variation distance of their models.
\begin{restatable}[Simulation Lemma]{lemma}{simulation}
	\label{lem::sim-lem}
	Consider a POMDP model with transition matrix $\hat T$, emission matrix $\hat O$, and reward function $r$. Denote the value function and measure under this POMDP by $\hat v$ and $\hat P$ respectively. Then, for any history-dependent policy $\pi$, 
	\begin{align*}
	    \| \hat P_\pi - P_\pi\|_1 & \leq \E_\pi  \sum_{h \in [H] } \| O_\star(\cdot | x_h) - \hat O(\cdot | x_h) \|_1  + \| T_\star(\cdot | x_h, a_h) - \hat T(\cdot | x_h, a_h) \|_1.
	\end{align*}
	Furthermore,
	\begin{align*}
		 |v(\pi) - \hat v(\pi) |  & \leq   H \E_\pi  \sum_{h \in [H] } \| O_\star(\cdot | x_h) - \hat O(\cdot | x_h) \|_1 + \| T_\star(\cdot | x_h, a_h) - \hat T(\cdot | x_h, a_h) \|_1,
	\end{align*}
	where $\E_\pi$ denotes the expectation following policy $\pi$ under the true model $\MM$.
\end{restatable}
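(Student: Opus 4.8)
The plan is to prove the total-variation bound by a standard ``hybrid'' (telescoping) decomposition of the trajectory measure, and then obtain the value bound as an immediate corollary. First I would write the law of the full latent trajectory $\bar\tau = (x_1, y_1, a_1, \dots, x_H, y_H, a_H, x_{H+1})$ under a policy $\pi$ and a POMDP model with transitions $T$ and emissions $O$ as a left-to-right product of conditionals,
\[
P_\pi(\bar\tau) = \rho(x_1)\prod_{h=1}^{H} O(y_h \mid x_h)\,\pi(a_h \mid \tau_h)\,T(x_{h+1} \mid x_h, a_h),
\]
and similarly for $\hat P_\pi$ with $O_\star, T_\star$ replaced by $\hat O, \hat T$. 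The key observation is that the initial-state factor $\rho$ and every policy factor $\pi(a_h \mid \tau_h)$ are identical across the two measures, so only the $2H$ emission and transition factors differ; note also that for a history-dependent policy the action at step $h$ depends only on the observed prefix $\tau_h$, which lies in the conditioning past, so this is a genuine left-to-right factorization and the chain-rule bound below applies.

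Then I would invoke the elementary chain-rule inequality for total variation: if $P(z_{1:n}) = \prod_i P_i(z_i \mid z_{<i})$ and $Q(z_{1:n}) = \prod_i Q_i(z_i \mid z_{<i})$, then $\|P - Q\|_1 \le \sum_i \E_{z_{<i}\sim P}\|P_i(\cdot \mid z_{<i}) - Q_i(\cdot \mid z_{<i})\|_1$, which follows by swapping one factor at a time from $Q$ to $P$ and inducting on the number of coordinates (the $n=2$ case being a two-line add-and-subtract computation). Applying this to the trajectory factorization, the $\rho$ and $\pi$ contributions vanish; the emission factor at step $h$ contributes $\E_{x_h \sim P_\pi}\|O_\star(\cdot \mid x_h) - \hat O(\cdot \mid x_h)\|_1$ since the relevant TV distance depends only on $x_h$; and the transition factor at step $h$ contributes $\E_{(x_h,a_h)\sim P_\pi}\|T_\star(\cdot \mid x_h, a_h) - \hat T(\cdot \mid x_h, a_h)\|_1$. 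Summing over $h \in [H]$ yields exactly the first claim, with all expectations taken under the true model $\MM$ following $\pi$.

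For the value bound, I would note that $v(\pi) - \hat v(\pi) = \E_{P_\pi}\!\left[\sum_{h\in[H]} r(x_h,a_h)\right] - \E_{\hat P_\pi}\!\left[\sum_{h\in[H]} r(x_h,a_h)\right]$ because both POMDPs use the same reward function $r$, and the cumulative reward $\sum_{h\in[H]} r(x_h,a_h)$ lies in $[0,H]$. Hence $|v(\pi) - \hat v(\pi)| \le H\,\|P_\pi - \hat P_\pi\|_1$ via the standard inequality $|\E_P f - \E_Q f| \le \|f\|_\infty \|P - Q\|_1$, and substituting the first part completes the proof.

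I do not expect a real obstacle here: the argument is essentially bookkeeping. The only points requiring care are (i) confirming that the trajectory density is a valid left-to-right conditional product for a history-dependent policy, so that the chain-rule bound is legitimately applicable, and (ii) tracking that the prefix expectations land under the true measure $P_\pi$ rather than $\hat P_\pi$ — this direction is exactly what the downstream MLE/version-space argument in the proof of Theorem~\ref{thm::fn-approx} needs.
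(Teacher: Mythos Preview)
Your proposal is correct and takes essentially the same telescoping/hybrid approach as the paper. The only cosmetic difference is that the paper does not cite the chain-rule total-variation inequality as a known fact but instead proves it in place via an explicit recursive auxiliary lemma (Lemma~\ref{lem::recursion}), swapping out the emission and transition factors at step $h$ one at a time and carrying the remaining $|B_{h-1}-\hat B_{h-1}|$ term to the next step; unrolling that recursion yields exactly your sum, and the paper's trajectory density omits $x_{H+1}$ so its transition sum runs over $h\in[H-1]$, which is simply a slightly tighter version of the stated bound.
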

This observation, although crude sometimes,\footnote{Indeed, jumping straight to total variation bounds can lead to worse sample complexity bounds in the tabular case, which is why we opt for a more refined $\alpha$-vector analysis for Theorem~\ref{thm::tabular}} is useful in this setting because it is possible to estimate the models in \setshort, in contrast to the POMDP where faithful recovery of $O_\star$ and $T_\star$ is not always possible. 

\subsection{High-probability events}

Similar to the tabular setting, we define the following events and later show that they each occur with high probability.

\begin{align*}
\EE_\TT & = \left\{
\forall k \in [K'], T \in \TT , \ \sum_{\ell \in [k], h \in [H]}  \| T(\cdot  |x_h^\ell, a_h^\ell) -   T_\star(\cdot |x_h^\ell, a_h^\ell ) \|^2_1  \leq  8 \log(K' |\TT|  /\delta)  + 4 \sum_{\ell, h} \log  { T_\star( \tilde x^\ell_h | x^\ell_h, a^\ell_h) \over   T(\tilde x^\ell_h | x^\ell_h, a^\ell_h) }  
\right\} \\
\EE_\Theta  &  = \left\{
\forall k \in [K'], O \in \Theta , \ \sum_{\ell \in [k], h \in [H]}  \| O(\cdot  |x_h^\ell) -   O_\star(\cdot |x_h^\ell ) \|^2_1  \leq  8 \log(K' |\Theta|  /\delta)  + 4 \sum_{\ell, h} \log  { O_\star( y^\ell_h | x^\ell_h) \over   O(y^\ell_h | x^\ell_h) }  
\right\} 
\end{align*}
The intersection of the above two is defined as $\EE_{\TT, \Theta} = \EE_\TT\cap \EE_\Theta$. 
Finally,
let $\EE_{\text{Fre}}$ denote the event that,for all $k \in [K']$ and $h \in [H]$, with $\tilde \pi_\ell  =\hat \pi_\ell \circ_h \unif(\AA)$,
\begin{align}
     & \sum_{\ell \in [k - 1]} \E_{\tilde \pi_\ell} \left[ \| T_\star(\cdot  |x_{ h}) - \hat T_{k}(\cdot  | x_{ h}, a_h) \|_1^2  +  \| O_\star(\cdot  |x_{ h}, a_h) - \hat O_{k} (\cdot  |x_{ h}) \|_1^2 \right] \\ 
    & \leq  32 \log \left( K'  H | \TT \times \Theta | /\delta  \right) +  2\sum_{\ell \in [k - 1]}   \| T_\star(\cdot  |x_{ h}^\ell, a_h^\ell) - \hat T_k(\cdot  |x_{ h}^\ell, a_h^\ell) \|_1^2  +  \| O_\star(\cdot  |x_{ h}^\ell) - \hat O_k(\cdot  |x_{ h}^\ell) \|_1^2.
\end{align}

\begin{lemma}\label{lem::mle-concentration}
    $P(\EE_\TT) \geq 1 - \delta$.
\end{lemma}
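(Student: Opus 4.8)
The plan is the standard martingale argument for maximum-likelihood deviation bounds, specialized to the adaptively collected data of \fnalg. Fix one candidate $T \in \TT$ and one epoch index $k \in [K']$; we bound the failure probability of the corresponding inequality and union bound over the $K'|\TT|$ choices at the end. Order the transition samples by the pair $(\ell,h)$ with $\ell \in [k]$, $h \in [H]$, and let $\GG_{\ell,h}$ be the $\sigma$-algebra generated by all data collected before this sample together with the partial trajectory of the $h$-th rollout of epoch $\ell$ up to and including $(x_h^\ell, a_h^\ell)$, but excluding its successor $\tilde x_h^\ell = x_{h+1}$. Since $\hat\pi_\ell$ (and hence $\tilde\pi_\ell$) is measurable with respect to data from epochs $1,\dots,\ell-1$ only, we have $\tilde x_h^\ell \mid \GG_{\ell,h} \sim T_\star(\cdot \mid x_h^\ell, a_h^\ell)$, which is what makes the argument below go through.

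Set $\xi_{\ell,h} = \log\!\big(T_\star(\tilde x_h^\ell\mid x_h^\ell,a_h^\ell)/T(\tilde x_h^\ell\mid x_h^\ell,a_h^\ell)\big)$. The key pointwise computation is
\begin{align*}
\E\!\left[\exp\!\left(-\tfrac12\xi_{\ell,h}\right)\,\middle|\,\GG_{\ell,h}\right] = \sum_{x'}\sqrt{T_\star(x'\mid x_h^\ell,a_h^\ell)\,T(x'\mid x_h^\ell,a_h^\ell)} = 1 - D^2_{\mathrm H}\!\big(T_\star(\cdot\mid x_h^\ell,a_h^\ell),\,T(\cdot\mid x_h^\ell,a_h^\ell)\big),
\end{align*}
where $D^2_{\mathrm H}(p,q)=\tfrac12\sum_x(\sqrt{p(x)}-\sqrt{q(x)})^2$ is the squared Hellinger distance. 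Combining $\log(1-z)\le-z$ with the textbook bound $D^2_{\mathrm H}(p,q)\ge\tfrac18\norm{p-q}_1^2$ gives $-\log\E[\exp(-\tfrac12\xi_{\ell,h})\mid\GG_{\ell,h}]\ge\tfrac18\norm{T(\cdot\mid x_h^\ell,a_h^\ell)-T_\star(\cdot\mid x_h^\ell,a_h^\ell)}_1^2$.

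Now the process $M_{\ell,h}=\exp\!\big(\sum_{(\ell',h')\preceq(\ell,h)}(-\tfrac12\xi_{\ell',h'}-\log\E[\exp(-\tfrac12\xi_{\ell',h'})\mid\GG_{\ell',h'}])\big)$ is a martingale adapted to $(\GG_{\ell,h})$ with $\E[M_{k,H}]=1$, so Markov's inequality yields $\Pr\!\big(\log M_{k,H}>\log(K'|\TT|/\delta)\big)\le\delta/(K'|\TT|)$. On the complementary event, rearranging the definition of $M_{k,H}$ and substituting the pointwise lower bound on $-\log\E[\exp(-\tfrac12\xi_{\ell,h})\mid\GG_{\ell,h}]$ gives
\begin{align*}
\sum_{\ell\in[k],h\in[H]}\norm{T(\cdot\mid x_h^\ell,a_h^\ell)-T_\star(\cdot\mid x_h^\ell,a_h^\ell)}_1^2\le 8\log(K'|\TT|/\delta)+4\sum_{\ell\in[k],h\in[H]}\log\frac{T_\star(\tilde x_h^\ell\mid x_h^\ell,a_h^\ell)}{T(\tilde x_h^\ell\mid x_h^\ell,a_h^\ell)},
\end{align*}
which is exactly the inequality defining $\EE_\TT$ for this $(k,T)$; a union bound over $T\in\TT$ and $k\in[K']$ completes the proof.

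Essentially everything here is routine — it is the same MLE concentration argument used for low-rank MDPs in the fully observable setting. The only point that needs care is the first step: verifying that the one-step-per-$h$ exploration schedule of \fnalg still makes $\tilde x_h^\ell$ a clean sample from $T_\star(\cdot\mid x_h^\ell,a_h^\ell)$ given the natural filtration $\GG_{\ell,h}$, so that $M_{\ell,h}$ is genuinely a martingale and $\E[M_{k,H}]=1$. (The analogous claim for $\EE_\Theta$ is identical, applied to $O_\star$ and the observation samples $y_h^\ell$.)
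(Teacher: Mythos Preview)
Your proof is correct and follows essentially the same route as the paper's: both arguments apply the Chernoff method to the statistic $\sum_{\ell,h}\log\sqrt{T(\tilde x_h^\ell\mid x_h^\ell,a_h^\ell)/T_\star(\tilde x_h^\ell\mid x_h^\ell,a_h^\ell)}$, convert the resulting conditional log-moment-generating term to total variation via the Hellinger bound, and finish with a union bound over $T\in\TT$ and $k\in[K']$. The only cosmetic difference is that the paper phrases the conditional-expectation step using a tangent (decoupled) sequence $\hat x_{h+1}^\ell$ in the style of \citet{agarwal2020flambe}, whereas you write down the exponential martingale $M_{\ell,h}$ directly; these are two standard presentations of the same computation and yield identical constants.
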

\begin{proof}
    See Appendix~\ref{app::mle-concentration-proof}.
\end{proof}
\begin{lemma}
    $P(\EE_\Theta) \geq 1 - \delta$.
\end{lemma}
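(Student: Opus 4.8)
The statement to prove is $P(\EE_\Theta) \geq 1 - \delta$, the MLE concentration bound for the emission function class.

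\medskip

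The plan is to mirror exactly the proof of Lemma~\ref{lem::mle-concentration} (the analogous statement for the transition class $\TT$), since $\EE_\Theta$ has an identical structure with $O$ playing the role of $T$, the pair $(x_h^\ell)$ playing the role of the covariate $(x_h^\ell, a_h^\ell)$, and the observed label $y_h^\ell$ playing the role of the next state $\tilde x_h^\ell$. The key observation is that the data used to fit the emission version space is, at each timestep $h$ of each epoch $\ell$, a pair $(x_h^\ell, y_h^\ell)$ where $x_h^\ell$ is revealed in hindsight and $y_h^\ell \sim O_\star(\cdot \mid x_h^\ell)$; conditioned on $x_h^\ell$ (and all prior randomness), $y_h^\ell$ is a fresh draw from the true emission. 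So the usual martingale-based MLE analysis applies verbatim.

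\medskip

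First I would set up the filtration: let $\FF_{\ell, h}$ be the $\sigma$-algebra generated by all data up through step $h$ of epoch $\ell$, including the hindsight-revealed latent states. The covariate $x_h^\ell$ is $\FF_{\ell, h-1}$-measurable once conditioned appropriately (or at least measurable with respect to the information available before $y_h^\ell$ is drawn), and $y_h^\ell \mid x_h^\ell \sim O_\star(\cdot \mid x_h^\ell)$. Then for any fixed $O \in \Theta$, I would apply the standard exponential-moment / Chernoff argument to the sum $\sum_{\ell, h} \log \frac{O(y_h^\ell \mid x_h^\ell)}{O_\star(y_h^\ell \mid x_h^\ell)}$: using $\E[\sqrt{O/O_\star} \mid x_h^\ell] = 1 - \frac{1}{2}\mathrm{H}^2(O(\cdot\mid x_h^\ell), O_\star(\cdot \mid x_h^\ell))$ together with $\mathrm{H}^2 \geq \frac{1}{4}\|\cdot\|_1^2$ up to constants (more precisely, $1 - \frac{1}{2}\mathrm{H}^2 \le \exp(-\frac{1}{2}\mathrm{H}^2)$ and the relation between squared Hellinger and squared TV), the supermartingale property gives that with probability at least $1 - \delta/|\Theta|$, $\sum_{\ell \le k, h}\|O(\cdot\mid x_h^\ell) - O_\star(\cdot\mid x_h^\ell)\|_1^2 \le 8\log(K'|\Theta|/\delta) + 4\sum_{\ell \le k, h}\log\frac{O_\star(y_h^\ell\mid x_h^\ell)}{O(y_h^\ell\mid x_h^\ell)}$ simultaneously for all $k \in [K']$ (the all-$k$ statement follows from Ville's maximal inequality / the fact that the exponential process is a supermartingale). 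A union bound over $O \in \Theta$ yields $P(\EE_\Theta) \ge 1 - \delta$. This is exactly Theorem 21 of \citet{agarwal2020flambe} applied to the emission model, and I would simply cite the proof of Lemma~\ref{lem::mle-concentration} for the details.

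\medskip

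I do not anticipate a genuine obstacle here: the only point requiring a moment's care is confirming that the hindsight observability does not break the martingale structure — that is, that revealing $x_{1:H+1}^\ell$ at the \emph{end} of epoch $\ell$ does not retroactively correlate $y_h^\ell$ with anything it shouldn't. But since $y_h^\ell$ is drawn from $O_\star(\cdot\mid x_h^\ell)$ at the time step $h$ and $x_h^\ell$ is part of the trajectory being generated, conditioning on $x_h^\ell$ (which is legitimate for the analysis even if it's only observed later) makes $y_h^\ell$ a clean sample, and the adaptivity of the exploration policy across epochs is handled by the supermartingale argument just as in the transition case. Hence the proof is a one-line reduction: ``The proof is identical to that of Lemma~\ref{lem::mle-concentration}, replacing the transition model $T$ and next-state labels $\tilde x_h^\ell$ with the emission model $O$ and observation labels $y_h^\ell$; see also Theorem~21 of \citet{agarwal2020flambe}.''
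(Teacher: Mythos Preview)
Your proposal is correct and takes exactly the same approach as the paper: the paper's proof is literally the one-line reduction you arrive at, stating that the argument is identical to Lemma~\ref{lem::mle-concentration} with $\TT$ replaced by $\Theta$, $T_\star$ by $O_\star$, and the sample triples $(\tilde x^\ell_h, x^\ell_h, a^\ell_h)$ by $(y^\ell_h, x^\ell_h)$. Your additional discussion of the martingale structure and the hindsight-observability point is sound but more detailed than what the paper provides.
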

\begin{proof}
The proof is identical to that of Lemma~\ref{lem::mle-concentration}, except that one replaces $\TT$ with $\Theta$, $T_\star$ with $O_\star$, and the sample pairs $\tilde x^\ell_h, x^\ell_h, a^\ell_h$ with $y^\ell_h, x^\ell_h$.
\end{proof}

\begin{lemma}
    $P(\EE_{\text{Fre}} ) \geq 1-  \delta$.
\end{lemma}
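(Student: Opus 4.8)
The plan is to establish $\EE_{\text{Fre}}$ by a uniform-over-$(\TT\times\Theta)$ version of the standard one-sided multiplicative martingale (Freedman-type) inequality, following the usual ``data distribution versus population distribution'' argument for one-step exploration. First I would fix a step index $h\in[H]$ and a \emph{fixed} (non-random) pair $(T,O)\in\TT\times\Theta$, and for each epoch $\ell\in[K']$ set
\[
 \hat Z_\ell \;=\; \| T_\star(\cdot \mid x_h^\ell, a_h^\ell) - T(\cdot \mid x_h^\ell, a_h^\ell) \|_1^2 \;+\; \| O_\star(\cdot \mid x_h^\ell) - O(\cdot \mid x_h^\ell) \|_1^2 ,
\]
with population analogue $Z_\ell = \E_{\tilde\pi_\ell}\!\big[\| T_\star(\cdot \mid x_h) - T(\cdot \mid x_h, a_h) \|_1^2 + \| O_\star(\cdot \mid x_h) - O(\cdot \mid x_h) \|_1^2\big]$ under the deployed exploration policy $\tilde\pi_\ell=\hat\pi_\ell\circ_h\unif(\AA)$. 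Each term being a squared $\ell_1$ distance between distributions, $\hat Z_\ell\in[0,8]$. Let $\FF_{\ell-1}$ be the $\sigma$-field of all data collected before epoch $\ell$; then $\hat\pi_\ell$ (hence $\tilde\pi_\ell$) is $\FF_{\ell-1}$-measurable, $\hat Z_\ell$ is a deterministic function of the step-$h$ sample $(x_h^\ell,a_h^\ell)$ and the fixed pair $(T,O)$, and since that sub-trajectory is generated by deploying $\tilde\pi_\ell$ in the true model, $(x_h^\ell,a_h^\ell)$ has conditional law equal to the step-$h$ state–action occupancy of $\tilde\pi_\ell$, so $\E[\hat Z_\ell\mid\FF_{\ell-1}]=Z_\ell$.

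Then I would invoke the standard concentration inequality for a nonnegative, bounded, adapted sequence: with probability at least $1-\delta'$, for a fixed $k$, $\sum_{\ell\in[k-1]}Z_\ell \le 2\sum_{\ell\in[k-1]}\hat Z_\ell + 4\cdot 8\cdot\log(1/\delta')$, the factor $8$ coming from the bound on $\hat Z_\ell$. Choosing $\delta'=\delta/(K'H|\TT||\Theta|)$ and taking a union bound over all $h\in[H]$, all $k\in[K']$, and all $(T,O)\in\TT\times\Theta$ yields, with probability at least $1-\delta$, that $\sum_{\ell\in[k-1]}Z_\ell \le 2\sum_{\ell\in[k-1]}\hat Z_\ell + 32\log(K'H|\TT\times\Theta|/\delta)$ holds simultaneously for every $h$, every $k$, and every fixed $(T,O)$. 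On this event I would finally instantiate $(T,O)=(\hat T_k,\hat O_k)$, which always lies in $\TT_k\times\Theta_k\subseteq\TT\times\Theta$; this is exactly the inequality defining $\EE_{\text{Fre}}$, giving $P(\EE_{\text{Fre}})\ge 1-\delta$.

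The main point to get right is the conditional-expectation identity $\E[\hat Z_\ell\mid\FF_{\ell-1}]=Z_\ell$: one must track that $\hat\pi_\ell$ depends only on data from epochs $1,\dots,\ell-1$, that within epoch $\ell$ the step-$h$ sample comes from a fresh rollout of $\tilde\pi_\ell=\hat\pi_\ell\circ_h\unif(\AA)$ so its conditional distribution is precisely the step-$h$ occupancy of $\tilde\pi_\ell$, and — crucially — that the martingale argument must be carried out for a \emph{fixed} $(T,O)$ (so that $\hat Z_\ell$ is $\FF_\ell$-measurable), with the union bound over $\TT\times\Theta$ being what licenses the substitution of the data-dependent estimators $\hat T_k,\hat O_k$ at the end. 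The remaining steps — boundedness of $\hat Z_\ell$, the union-bound bookkeeping to produce the $\log(K'H|\TT\times\Theta|/\delta)$ term, and citing the Freedman-type bound — are routine.
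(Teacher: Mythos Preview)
Your proposal is correct and follows essentially the same approach as the paper: fix $(T,O)$, apply a one-sided Freedman-type inequality to the nonnegative bounded sequence of squared $\ell_1$ errors (the paper cites Theorem~1 of \citet{beygelzimer2011contextual} and derives the same $\sum \E[Z_\ell]\le 2\sum Z_\ell + 4R\log(1/\delta)$ with $R=8$), then union bound over $h\in[H]$, $k\in[K']$, and $(T,O)\in\TT\times\Theta$ before instantiating at $(\hat T_k,\hat O_k)$. Your exposition is in fact slightly cleaner, since you work directly with the squared norms (matching the event $\EE_{\text{Fre}}$), whereas the paper's written proof defines $\epsilon(x,a)$ as a sum of unsquared $\ell_1$ norms yet still uses $R=8$, which only matches the squared version.
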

\begin{proof}
    Fix $T \in \TT$ and $O \in \Theta$. For convenience, define \begin{align}\epsilon(x, a) =  \| T_\star(\cdot |x, a) - T(\cdot  | x, a)  \|_1 + \| O_\star (\cdot | x ) - O(\cdot | x) \|_1.\end{align}

Then, consider the stochastic process given by 
\begin{align*}
    Z_\ell =  \epsilon(x_h^\ell, a_h^\ell) 
\end{align*}
It is easy to see that $Z_\ell -  \E_{\tilde \pi_\ell} \left[Z_\ell\right]$ is a martingale difference sequence with $|Z_\ell| \leq 8 =: R$ since $x_{h}^\ell, a_h^\ell$ are drawn from the exploration policy $\tilde \pi_\ell$.  By Theorem~1 of \cite{beygelzimer2011contextual}, with probability at least $1 - \delta$, for all $k \in [K]$,
\begin{align}
    \sum_{\ell \in [k]} \E_{\tilde \pi_\ell} \left[Z_\ell \right] - Z_\ell & \leq {1 \over 2R} \sum_{\ell \in [k]} \var_{\tilde \pi_\ell} \left(Z_\ell \right)  +  2 R \log(1/\delta) 
\end{align}
where $\var_{\tilde \pi_\ell} (Z_\ell) = \E_{\tilde \pi_\ell}  \left( Z_\ell - \E_{\tilde \pi_\ell} \left[Z_\ell\right]\right)^2$. Then, note that
\begin{align*}
     \var_{\tilde \pi_\ell} (Z_\ell) & \leq \E_{\tilde \pi_\ell} \left[ Z_\ell^2 \right] \leq R \E_{\tilde \pi_\ell} Z_\ell
\end{align*}
since $0 \leq Z_\ell \leq R$. Applying this inequality and then rearranging, we have
\begin{align*}
    \sum_{\ell \in [k]} \E_{\tilde \pi_\ell} \left[Z_\ell \right] & \leq 2 \sum_{\ell \in [k]} Z_\ell + 4R\log(1/\delta)
\end{align*}
Applying the definition of $Z_\ell$ and taking the union bound for all $h \in [H]$, $k \in [K']$, $T \in \TT$ and $O \in \Theta$ gives the result with $R = 8$.
\end{proof}

\subsection{Consequences of concentration}

\begin{lemma}Assume that the events $\EE_{\TT, \Theta}$ and $\EE_\text{Fre}$ hold. 
    For all $k \in [K']$, $T_\star \in \TT_k$ and $O_\star \in \Theta_k$.
\end{lemma}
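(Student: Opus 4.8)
The plan is to prove the statement by induction on the epoch index $k$, establishing simultaneously that $T_\star \in \TT_k$ and $O_\star \in \Theta_k$. The base case $k=1$ is immediate: $\TT_1 = \TT$ and $\Theta_1 = \Theta$, and realizability (Assumption~\ref{asmp::realizability}) gives $T_\star \in \TT$ and $O_\star \in \Theta$. For the inductive step, suppose $T_\star \in \TT_k$ and $O_\star \in \Theta_k$; by the nesting $\TT_{k+1}\subseteq\TT_k$ and $\Theta_{k+1}\subseteq\Theta_k$, it suffices to show that $T_\star$ and $O_\star$ satisfy the likelihood thresholds defining $\TT_{k+1}$ and $\Theta_{k+1}$, i.e.\ that $\hat\LL^1_k(T_\star)\geq\max_{T'\in\TT_k}\hat\LL^1_k(T')-\beta_\TT$ and $\hat\LL^2_k(O_\star)\geq\max_{O'\in\Theta_k}\hat\LL^2_k(O')-\beta_\Theta$.

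The key observation is that these threshold conditions are exactly what the concentration event $\EE_{\TT,\Theta}$ delivers, once one uses the nonnegativity of the squared total-variation term. Fix any $T\in\TT$. Since the left-hand side $\sum_{\ell\in[k],h\in[H]}\|T(\cdot|x_h^\ell,a_h^\ell)-T_\star(\cdot|x_h^\ell,a_h^\ell)\|_1^2$ of the inequality defining $\EE_\TT$ is nonnegative, that inequality rearranges to
\begin{align*}
  \hat\LL^1_k(T)-\hat\LL^1_k(T_\star) &= \sum_{\ell\in[k],h\in[H]}\log\frac{T(\tilde x_h^\ell|x_h^\ell,a_h^\ell)}{T_\star(\tilde x_h^\ell|x_h^\ell,a_h^\ell)} \\ &\leq 2\log(K'|\TT|/\delta) = \beta_\TT.
\end{align*}
This holds for every $T\in\TT$, hence in particular for every $T\in\TT_k$, so $\hat\LL^1_k(T_\star)\geq\hat\LL^1_k(T)-\beta_\TT$ for all $T\in\TT_k$, and therefore $T_\star\in\TT_{k+1}$. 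The argument for the emission class is verbatim the same, using $\EE_\Theta$ in place of $\EE_\TT$, the objective $\hat\LL^2_k$ in place of $\hat\LL^1_k$, and the samples $(y_h^\ell,x_h^\ell)$ in place of $(\tilde x_h^\ell,x_h^\ell,a_h^\ell)$; it yields $\hat\LL^2_k(O)-\hat\LL^2_k(O_\star)\leq\beta_\Theta$ for all $O\in\Theta$, hence $O_\star\in\Theta_{k+1}$. This closes the induction.

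There is essentially no hard step here: the entire content has already been absorbed into the high-probability events of the previous subsection. The only point requiring a moment's care is recognizing that $\EE_\TT$ is stated as a one-sided bound tuned for controlling the on-policy total-variation error of an arbitrary candidate $T$, and that dropping its nonnegative total-variation term converts it into precisely the bound on the log-likelihood ratio $\hat\LL^1_k(T)-\hat\LL^1_k(T_\star)$ needed to certify that $T_\star$ clears the elimination threshold. Note that the event $\EE_{\text{Fre}}$ plays no role in this particular lemma — it is carried along in the hypotheses only because it is used by the subsequent results that build on this one.
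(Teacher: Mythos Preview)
Your proof is correct and takes essentially the same approach as the paper: both drop the nonnegative total-variation term in $\EE_\TT$ (resp.\ $\EE_\Theta$) to obtain the log-likelihood bound $\hat\LL^1_k(T)-\hat\LL^1_k(T_\star)\leq 2\log(K'|\TT|/\delta)=\beta_\TT$, which is exactly the version-space retention condition. You are slightly more explicit than the paper in spelling out the induction needed to carry $T_\star\in\TT_k$ through the nested version spaces, and your observation that $\EE_{\text{Fre}}$ is unused here is correct.
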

\begin{proof}
    Note that from $\EE_{\TT, \Theta}$, it holds that for all $k \in [K']$ and $T \in \TT$ and $O \in \Theta$,
    \begin{align}
        \sum_{\ell \in [k - 1], h} \log T(\tilde x^\ell_h  | x^\ell_h, a^\ell_h)  - 2 \log (K' |\TT| /\delta) \leq   \sum_{\ell \in [k - 1], h} \log T_\star(\tilde x^\ell_h  | x^\ell_h, a^\ell_h)
    \end{align}
    and 
    \begin{align}
        \sum_{\ell \in [k - 1], h} \log O(y^\ell_h  | x^\ell_h)  - 2 \log (K' |\Theta| /\delta) \leq  \sum_{\ell \in [k - 1], h} \log O_\star(y^\ell_h  | x^\ell_h)
    \end{align}

    Given the definitions of $\beta_\TT$ and $\beta_\Theta$, the result is immediate.
\end{proof}

\begin{lemma}\label{lem::gen-approx-tv}
    Assume that the events $\EE_{\TT, \Theta}$ and $\EE_\text{Fre}$ hold. Then, for all $k \in [K']$ and $h \in [H]$, with $\tilde \pi_\ell = \hat \pi_\ell \circ_h \unif(\AA)$,
    \begin{align*}
    &  \sum_{\ell \in [k - 1]} \E_{\tilde \pi_\ell} \left[ \| T_\star(\cdot | x_h, a_h) - \hat T_k(\cdot | x_h, a_h) \|_1^2 + \| O_\star(\cdot | x_h)  - \hat O_k(\cdot  | x_h) \|_1^2 \right]   \\
    & \leq 32  \log ( K' H  | \TT \times \Theta | /\delta)  +  16 \left( \beta_\Theta + \beta_\TT \right)
    \end{align*}
\end{lemma}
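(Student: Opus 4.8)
The idea is to compose the three high-probability events: use the version-space construction to control empirical log-likelihood ratios, then $\EE_\TT$ and $\EE_\Theta$ to convert these into empirical squared $\ell_1$ errors, and finally $\EE_\text{Fre}$ to pass from the empirical errors to the population quantity on the left-hand side. By the preceding lemma, $T_\star \in \TT_j$ and $O_\star \in \Theta_j$ for every $j \in [K']$; the case $k = 1$ of the claim is vacuous since the sum over $\ell \in [k-1]$ is empty and the right-hand side is positive, so fix $k \geq 2$ and $h \in [H]$. Since $\hat T_k$ is selected from $\TT_k$, and $\TT_k$ is built from the data of epochs $1, \dots, k-1$ via the version-space update, we have $\hat\LL^1_{k-1}(\hat T_k) \geq \max_{T' \in \TT_{k-1}} \hat\LL^1_{k-1}(T') - \beta_\TT \geq \hat\LL^1_{k-1}(T_\star) - \beta_\TT$, which after rearranging reads
\begin{align*}
\sum_{\ell \in [k-1],\, h' \in [H]} \log \frac{T_\star(\tilde x^\ell_{h'} | x^\ell_{h'}, a^\ell_{h'})}{\hat T_k(\tilde x^\ell_{h'} | x^\ell_{h'}, a^\ell_{h'})} \;\le\; \beta_\TT,
\end{align*}
and analogously $\sum_{\ell \in [k-1],\, h' \in [H]} \log \frac{O_\star(y^\ell_{h'} | x^\ell_{h'})}{\hat O_k(y^\ell_{h'} | x^\ell_{h'})} \le \beta_\Theta$.

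Next I would invoke $\EE_\TT$ at index $k-1$ with the function $T = \hat T_k \in \TT$, and combine it with the log-likelihood-ratio bound just obtained, to get
\begin{align*}
\sum_{\ell \in [k-1],\, h' \in [H]} \big\| \hat T_k(\cdot | x^\ell_{h'}, a^\ell_{h'}) - T_\star(\cdot | x^\ell_{h'}, a^\ell_{h'}) \big\|_1^2 \;\le\; 8 \log(K'|\TT|/\delta) + 4\beta_\TT \;=\; 8\beta_\TT,
\end{align*}
where the equality uses $\beta_\TT = 2 \log(K'|\TT|/\delta)$; likewise $\EE_\Theta$ gives the bound $8\beta_\Theta$ for the corresponding sum of $\|\hat O_k(\cdot | x^\ell_{h'}) - O_\star(\cdot | x^\ell_{h'})\|_1^2$. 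All summands are nonnegative, so discarding the terms with $h' \neq h$ only shrinks these sums; in particular $\sum_{\ell \in [k-1]} \|\hat T_k(\cdot | x^\ell_h, a^\ell_h) - T_\star(\cdot | x^\ell_h, a^\ell_h)\|_1^2 \le 8\beta_\TT$ and $\sum_{\ell \in [k-1]} \|\hat O_k(\cdot | x^\ell_h) - O_\star(\cdot | x^\ell_h)\|_1^2 \le 8\beta_\Theta$.

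Finally I would apply $\EE_\text{Fre}$ at this pair $(k, h)$: it bounds the target population sum by $32 \log(K' H |\TT \times \Theta|/\delta)$ plus twice the sum of the two single-step empirical errors just controlled, hence by $32 \log(K' H |\TT \times \Theta|/\delta) + 2(8\beta_\TT + 8\beta_\Theta) = 32 \log(K' H |\TT \times \Theta|/\delta) + 16(\beta_\TT + \beta_\Theta)$, which is exactly the claimed bound. I do not expect a genuine obstacle here: the proof is a bookkeeping composition of results already in hand. The only points requiring care are the off-by-one in the version-space indexing --- epochs $1, \dots, k-1$ determine $\TT_k$ and $\Theta_k$, and $T_\star, O_\star$ must lie in those version spaces (the preceding lemma) --- and the fact that $\EE_\TT$ and $\EE_\Theta$ carry a full sum over $h' \in [H]$ while $\EE_\text{Fre}$ only needs the slice at the current $h$, reconciled by nonnegativity of the summands.
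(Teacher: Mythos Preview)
Your proof is correct and follows essentially the same approach as the paper: both combine the version-space membership of $\hat T_k, \hat O_k$ (giving the log-likelihood ratio bound $\le \beta_\TT, \beta_\Theta$), the events $\EE_\TT, \EE_\Theta$ (converting to empirical squared $\ell_1$ error $\le 8\beta_\TT, 8\beta_\Theta$), and $\EE_{\text{Fre}}$ (passing to the population sum). The paper applies $\EE_{\text{Fre}}$ first and then bounds the empirical term, while you reverse the order, but the arithmetic and conclusion are identical; your treatment of the indexing (applying $\EE_\TT$ at $k-1$, handling $k=1$ as vacuous, and dropping $h' \neq h$ by nonnegativity) is in fact more carefully spelled out than the paper's.
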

\begin{proof}
    From $\EE_\text{Fre}$. Fix $h \in [H]$ and $k \in [K']$. Then,
    \begin{align}
        & \sum_{\ell} \E_{\tilde \pi_\ell} \left[ \| T_\star(\cdot | x_h, a_h) - \hat T_k(\cdot | x_h, a_h) \|_1^2 + \| O_\star(\cdot | x_h)  - \hat O_k(\cdot  | x_h) \|_1^2 \right]  \\
        & \leq 32 \log ( K'  H | \TT \times \Theta | /\delta)  +  2  \sum_{\ell}  \|  T_\star(\cdot | x_h^\ell, a_h^\ell) - \hat T_k (\cdot | x_h^\ell, a_h^\ell) \|_1^2 + \| O_\star(\cdot | x_h^\ell)  - \hat O_k(\cdot  | x_h^\ell) \|_1^2 
    \end{align}
    Then, using $\EE_{\TT,\Theta}$,
    \begin{align}
    \sum_{\ell} \| O_\star(\cdot | x_h^\ell)  - \hat O_k(\cdot  | x_h^\ell) \|_1^2    & \leq  \sum_{\ell, h}  \| O_\star(\cdot | x_h^\ell)  - \hat O_k(\cdot  | x_h^\ell) \|_1^2 \\
    & \leq  8 \log (K' | \Theta | /\delta) + 4 \sum_{\ell, h} \log {O_\star(y^\ell_h | x^\ell_h) \over \hat O_k(y^\ell_h | x^\ell_h) } \\
    & \leq 16 \log (K' | \Theta | /\delta) 
    \end{align}
    where the last inequality uses the fact that $\hat O_k \in \Theta_k$ and $\max_{O \in \Theta_k} \sum_{\ell, h}  \log O(y^\ell_h | x^\ell_h ) \geq\sum_{\ell, h}  \log O_\star(y^\ell_h | x^\ell_h )$ since $O_\star \in \Theta_k$. The same can be done for $\hat T_k$ and $\TT_k$. Then the prior display can be bounded as
    \begin{align}
        & \sum_{\ell} \E_{\tilde \pi_\ell} \left[ \| T_\star(\cdot | x_h, a_h) - T(\cdot | x_h, a_h) \|_1^2 + \| O_\star(\cdot | x_h)  - O(\cdot  | x_h) \|_1^2 \right]  \\
        & \leq 32  \log ( K' H | \TT \times \Theta | /\delta)  +  32 \left( \log (K' | \Theta  | /\delta) +  \log ( K' | \TT | /\delta) \right)  \\
        & = 32  \log ( K' H | \TT \times \Theta | /\delta)  +  16 \left( \beta_\Theta + \beta_\TT \right).
    \end{align}
\end{proof}

\subsection{Final steps}

The final steps of the proof follow a classic optimism analysis.  We let $\hat v_k$ denote the value function of the POMDP under the model transition function $\hat T_k$ and emission function $\hat O_k$ (selected optimistically in the algorithm).

The instantaneous regret for $k \in [K']$ is bounded as 
\begin{align}
    v(\pistar) - v(\hat \pi_k)  & \leq \hat v_k (\hat \pi_k) - v(\hat \pi_k) \\
    & \leq   H \sum_{h}\E_{\hat \pi_k }  \| T_\star (\cdot | x_h, a_h) - \hat T_k(\cdot | x_h, a_h) \|_1 + \| O_\star(\cdot | x_{h} ) - \hat O_k (\cdot | x_{h} ) \|_1
\end{align}
where the second line follows from the POMDP Simulation Lemma (Lemma~\ref{lem::sim-lem}). Next, leveraging the low rank MDP assumption, define the following quantities: 
\begin{align}
    \epsilon(T, O, x, a) & := \| T(\cdot | x, a) - T_\star(\cdot | x, a) \|_1 + \| O(\cdot | x) -  O_\star(\cdot | x) \|_1  \\
    U_{h}(\hat \pi_k) & := \E_{x_{h - 1}, a_{h - 1} \hat \pi_k}  \left[ \phi(x_{ h - 1}, a_{h - 1})  \right] \\
    W_h(T, O) & = \int_{x_h} \psi_\star(x_h) \cdot \sup_{a_h} \epsilon(T, O, x_h, a_h) \cdot \diff x_h \\
    \Sigma_{k, h} & := \lambda I + \sum_{\ell \in [k - 1]}  U_h(\hat \pi_\ell) U_h(\hat \pi_\ell)^\top
\end{align} 
for some $\lambda > 0$ to be determined later. For $h -1 = 0$, we can take $U_h$ to be a fixed indicator and $W_h$ to also be an indicator with a non-zero value of $\E_{x_1 \sim \rho} \sup_{a} \epsilon(T, O, x_1, a)$.
The algorithm does not need to use the vector functions $U_h$ or $W_h$ or the covariance matrix $\Sigma_{k, h}$. Only the analysis uses them.
Then, for a fixed $h \in [H]$, let $\tilde \pi_\ell = \hat \pi_\ell \circ_h \unif(\AA)$ be the exploration policy used in round $\ell$ for timestep $h$. Then, letting $\tau_h$ denote the concatenation of  $(\tau_{h - 1}, a_{h - 1}, y_h)$ as usual,
\begin{align}
    & \E_{\hat \pi_k }  \| T_\star (\cdot | x_h, a_h) - \hat T_k(\cdot | x_h, a_h) \|_1 + \| O_\star(\cdot | x_{h} ) - \hat O_k (\cdot | x_{h} ) \|_1  \\
    & = \E_{x_{h - 1}, a_{h - 1}, \tau_{h - 1} \sim \hat \pi_k } \< \phi_\star ( x_{h - 1}, a_{h - 1} ), \int_{ y_h, x_h, a_h} \psi_\star (x_h) O_\star(y_h | x_h ) \hat \pi_k(a_h | \tau_h) \epsilon(\hat T_k, \hat O_k, x_h, a_h) \cdot \diff (y_h, x_h, a_h) \>  \\
    & \leq \E_{x_{h - 1}, a_{h - 1} \sim \hat \pi_k }  \< \phi_\star ( x_{h - 1}, a_{h - 1} ), \int_{ y_h, x_h} \psi_\star (x_h) O_\star(y_h | x_h )   \sup_{a_h} \epsilon(\hat T_k, \hat O_k, x_h, a_h) \cdot \diff (y_h, x_h) \>  \\
    & \leq 
    \| U_h(\hat \pi_k)  \|_{\Sigma_{k, h}^{-1}  } \cdot  \left\lVert \int_{ x_h} \psi_\star (x_h)   \sup_{a_h} \epsilon(\hat T_k, \hat O_k, x_h, a_h) \cdot \diff  x_h \right\rVert_{\Sigma_{k, h}} \\
    & = \| U_h(\hat \pi_k)  \|_{\Sigma_{k, h}^{-1}  } \cdot  \| W_h(\hat T_k, \hat O_k) \|_{\Sigma_{k, h}}.
    \end{align}

The first line uses the definition of the low-rank latent transition to decompose the expectation over elements at step $h$. The first inequality replaces the distribution over $a_h$ with a $\sup$, which is valid because $\phi_{\star}(x_{h - 1}, a_{h - 1})^\top \psi_{\star}(x_h) \geq 0$ is a probability. The second inequality applies the Cauchy-Schwarz inequality with the definition of $U_h$ and the last line uses the definition of $W_h$.
For the right-hand factor,
\begin{align}
     \lVert W_h(\hat T_k, \hat O_k ) \rVert_{\Sigma_{k, h}}^2  & = \lambda \| W_h(\hat T_k, \hat O_k) \|_2^2 +  \sum_{\ell \in [k - 1] }  \< U_h(\hat \pi_{\ell}), 
     W_h (\hat T_k, \hat O_k) \>^2 
    \end{align}
    Using the normalization condition on $\psi_\star$, we have
    \begin{align}
         \lambda \| W_h(\hat T_k, \hat O_k) \|_2^2 & \leq 8 \lambda d  
    \end{align}
    Also,
    \begin{align}
        &  \sum_{\ell \in [k - 1] }  \< U_h(\hat \pi_{\ell}), 
     W_h (\hat T_k, \hat O_k) \>^2  \\
     & = \sum_{\ell \in [k - 1]} \left( \E_{x_h \sim \hat \pi_\ell } \sup_{a_h} \epsilon(\hat T_k, \hat O_k, x_h, a_h) \right)^2  \\
     & \leq 2 \sum_{\ell \in [k - 1]} \E_{x_h \sim \hat \pi_\ell} \sup_{a_h} \left[ \| T_\star(\cdot | x_h, a_h) - \hat T_k(\cdot | x_h, a_h)   \|_1^2 + \| O_\star(\cdot | x_h) - \hat O_k(\cdot | x_h) \|_1^2 \right]  \\
     & \leq 2 A \sum_{\ell \in [k - 1 ]} \E_{x_h, a_h \sim \tilde \pi_\ell} \left[ \| T_\star(\cdot | x_h, a_h) - \hat T_k(\cdot | x_h, a_h)   \|_1^2 + \| O_\star(\cdot | x_h) - \hat O_k(\cdot | x_h) \|_1^2 \right].
    \end{align}
    The first inequality above applies Jensen's inequality and the fact that $(a  +b)^2 \leq 2(a^2 + b^2)$. The second inequality upper bounds the $\sup_{a_h}$ with a sum to convert the expression to a uniform distribution over $a_h$, which is exactly the distribution under the exploration policy $\tilde \pi_\ell$.
Therefore,  leveraging Lemma~\ref{lem::gen-approx-tv} under $\EE_\Theta$, $\EE_\TT$, and $\EE_{\text{Fre}}$, 
\begin{align}
    \lVert W_h(\hat T_k, \hat O_k ) \rVert_{\Sigma_{k, h}}^2 
     & \leq  8\lambda d + 64 A  \left( \log \left( K'H |\TT \times \Theta | /\delta \right) + \beta_\TT + \beta_\Theta  \right)  
\end{align}

Then,
\begin{align}
    & \E_{\hat \pi_k }  \| T_\star (\cdot | x_h, a_h) - \hat T_k(\cdot | x_h, a_h) \|_1 + \| O_\star(\cdot | x_{h} ) - \hat O_k (\cdot | x_{h} ) \|_1  \\
    & \leq  \min \left\{ 4,    {\sqrt{64 A} }  \lVert U_h(\hat \pi_k) \rVert_{\Sigma^{-1}_{k, h}} \sqrt{  \lambda d + \beta_\TT + \beta_\Theta +  \log( K' H | \TT \times \Theta | /\delta) }  \right\}
\end{align}
Let $\beta_\lambda := {  \lambda d + \beta_\TT + \beta_\Theta + \log( K'  H | \TT \times \Theta | /\delta) }$ for shorthand.
Then, the total sub-optimality of the proposed policies $\hat \pi_1, \ldots, \hat \pi_K$ is bounded as
\begin{align}
     \sum_{k \in [K'] } v(\pistar) - v(\hat \pi_k)
    & \leq H \sum_{k, h}  \left( 4 \wedge  \lVert U_h(\hat \pi_k) \rVert_{\Sigma^{-1}_{k, h}} \cdot \sqrt{ 64 \beta_\lambda A } \right) \\
     & \leq  {  H} \sum_{h} \sqrt{ 64 \beta_\lambda A   K  \sum_k  \left( 1 \wedge  \lVert U_h(\hat \pi_k)  \rVert^2_{\Sigma^{-1}_{k, h}} \right)    }  \\
     & \leq  {H^2   } \sqrt{  64  \beta_{1/d}  A K d \log (1 + K')  }
\end{align}
where the last inequality applies the elliptical potential lemma (Lemma~\ref{lem::potential}) with the setting $\lambda = \nicefrac{1}{d}$.

\subsection{Proof of Lemma~\ref{lem::mle-concentration}} \label{app::mle-concentration-proof}

\begin{proof} The proof follows a similar approach as \citet{agarwal2020flambe}. 
    Fix $k \in [K']$. 
    In contrast to the the rest of the paper, in this proof only we use $\bar\tau^\ell$ to denote the data collected, since these differ from the histories in the usual sense as a result of the exploration happening over $H$ rounds per epoch. In particular, we define $\bar \tau^\ell =  \left\{ x_h^\ell, a_h^\ell, y_h^\ell, \tilde x_h^\ell  \right\}_{h = \in [H]}$ where we recall in the algorithm that $\tilde x_{h}^\ell$ is the next-state sampled by the exploration policy for step $h$.
    
    Given the full trajectories $\bar \tau^1, \ldots, \bar\tau^k$, define the tangent dataset sampled independently as  $\hat x^\ell_{h + 1} \sim T_\star (\cdot  | x^\ell_h, a^\ell_h)$ for $\ell \in [k]$, which has a distribution completely determined by the history. Let $l: \XX^2 \times \AA  \to \R$ be an arbitrary measurable loss function. Define $L = \sum_{\ell, h} l(x^\ell_h, a^\ell_{h}, \tilde x^\ell_{h} )$ and $\hat L = \sum_{\ell, h} l( x^\ell_h, a^\ell_{h}, \hat x^\ell_{h})$.
    For convenience, let us define $\bar{ \taub}^\ell_h$ to be the concatenated sequence $(\bar \tau^{1:\ell- 1}, \bar \tau^{\ell}_h, a_h^\ell)$.

	Consider the function:
	\begin{align*}
		\exp \left(  L -\log \E\left[  \exp (\hat L)  \ | \ \bar{ \taub}^k_H \right]  \right) & = {\exp (L) \over  \E \left[  \exp (\hat L) \ | \  \bar{ \taub}^k_H \right] } \\
		& = {\exp (L) \over  \prod_{\ell, h}  \E \left[  \exp l( x^\ell_h,a^\ell_h, \hat x^\ell_{h} )\  | \ \bar{\taub}^\ell_{h} 
		\right] }
	\end{align*}
	where the second equality follows from the fact that the tangent observations are independent given the history of latent states. Then,
	\begin{align}
		\E \left[  \exp \left(  L -\log \E\left[  \exp (\hat L)  \ | \ \bar{ \taub}^k_h  \right]  \right) \right] & = \E \left[  {\exp (L) \over  \prod_{\ell, h}  \E \left[  \exp l(x^\ell_h,a^\ell_h, \hat x^\ell_{h}) \ | \ \bar{ \taub}^\ell_h \right] }  \right] \\
		& = \E \left[  { \prod_{\ell, h} \exp  l(x^\ell_h,a^\ell_h,  \tilde x^\ell_{h})   \over  \prod_{\ell, h}  \E \left[  \exp l( x^\ell_h,a^\ell_h, \hat x^\ell_{h}) \ | \  \bar{\taub}^\ell_h \right] }   \right] \\
		& = \E \left[  { \prod_{\ell \in [k], h \in [H - 1]} \exp \left(  l(x^\ell_h,a^\ell_h, \tilde x^\ell_{h}) \right)  \over  \prod_{\ell \in [k], h \in [H - 1]}  \E \left[  \exp l(x^\ell_h,a^\ell_h, \hat x^\ell_{h + 1}) \ | \  \bar{\taub}^\ell_h \right] }  \cdot 
		{  \E \left[  \exp\left( l(x^k_H,a^k_H,  \tilde x^k_{H} )  \right)  \ | \ \bar \taub^k_{H} \right]  \over  \E \left[  \exp\left( l(x^k_H,a^k_H, \hat x^k_{H} )  \right)  \ | \ \bar{\taub}^k_{H} \right]  }  \right]  \\
		& = \E \left[  { \prod_{\ell \in [k], h \in [H - 1]} \exp \left(  l(x^\ell_h,a^\ell_h,  \tilde x^\ell_{h + 1}) \right)  \over  \prod_{\ell \in [k], h \in [H - 1]}  \E \left[  \exp l( x^\ell_h,a^\ell_h, \hat x^\ell_{h + 1}) \ | \  \bar \taub^\ell_{h} \right] }  \right] \\
		& = \ldots \\
		& = 1
	\end{align}
	where the cancellation is repeated for all $\ell \in [k]$ and $h \in [H]$. Applying Markov's inequality ensures that
	\begin{align}
		P \left( L -\log \E\left[  \exp (\hat L)  \ | \ \bar \taub^k_H \right] \geq z   \right) & \leq {\E \left[  \exp \left( L -\log \E\left[  \exp (\hat L)  \ | \ \bar \taub^k_H \right]  \right)  \right] \over e^z }.
	\end{align}
	Taking $z = \log(1/\delta)$ guarantees that
	\begin{align}
		L -\log \E\left[  \exp (\hat L)  \ | \ \bar\taub^k_H \right] \leq \log(1/\delta)
	\end{align}
	with probability at least $1 - \delta$.
	
	We will define the loss function as $\ell(x, a, x') = \log \sqrt{ T(x' | x, a) \over T_\star(x' | x, a) }$ for some $T \in \TT$.
	One can then relate the above loss function to the total variation distance:
	\begin{align}
		\sum_{\ell \in [k], h \in [H]}  \| T_\star (\cdot | x^\ell_h, a^\ell_h) - T(\cdot |x^\ell_h, a^\ell_h ) \|^2_1 & = \sum_{\ell, h}  \left( \int_{x' \in \XX} | T_\star (x' | x^\ell_h, a_{h}^\ell) - T(x'  |x^\ell_h, a_{h}^\ell) | \diff x' \right)^2 \\
		& \leq 4 \sum_{\ell, h}  \int_{x'} \left( \sqrt{T_\star (x' | x^\ell_h, a^\ell_h)} - \sqrt{T( x'  |x^\ell_h,a^\ell_h)} \right)^2 \diff x'    \\
		& = 8 \sum_{\ell, h}  \E_{x' \sim T_\star(\cdot | x^\ell_h, a^\ell_h) }  \left [ 1 -   \sqrt{ T(x' | x^\ell_h,a^\ell_h)/T_\star (x'  | x^\ell_h, a^\ell_h) } \right]     \\
		& \leq  - 8 \sum_{\ell, h} \log \E_{x' \sim T_\star (\cdot | x^\ell_h, a^\ell_h)} \sqrt{ T(x'  | x^\ell_h, a^\ell_h) \over  T_{\star} (x' | x^\ell_h, a^\ell_h) }
	\end{align}
	where the first inequality is from Cauchy-Schwarz and the second uses the fact that $\log (1 + a) \leq a$ for $a > -1$. Observe that we have
	\begin{align}
		- \log \E \left[  \exp(\hat L) \ | \ \bar \taub^k_H \right] 
		& = - \log \E  \left[  \prod_{\ell, h} \exp \left( \log \left( \sqrt{ T(\hat x^\ell_{h + 1}  | x^\ell_h, a^\ell_h) \over T_\star ( \hat  x^\ell_{h + 1}  | x^\ell_h, a^\ell_h) }  \right)    \right)  \ | \ \bar \taub^k_H \right] \\
		& = - \sum_{\ell, h} \log  \left(  \E_{x' \sim T_\star (\cdot | x^\ell_h, a^\ell_h) }  \sqrt{ T(x' | x^\ell_h, a^\ell_h) \over T_\star ( x' | x^\ell_h, a^\ell_h)  }  \right)
	\end{align}
	Combining this with the concentration inequality from earlier, we conclude that
	\begin{align}
		\sum_{\ell, h}  \| T_\star(\cdot | x^\ell_h, a^\ell_h ) - T(\cdot |x^\ell_h, a^\ell_h) \|^2_1 & \leq - 8 \log \E \left[  \exp(\hat L) \ | \ \x^k_H \right] \\
		& \leq - 8 \left( L  - \log(1/\delta) \right) \\
		& = -8  \left( {1 \over 2} \sum_{\ell, h }  \log \left({ T(x^\ell_{h + 1} | x^\ell_h,a^\ell_h) \over T_\star (x^\ell_{h + 1} | x^\ell_h,a^\ell_h)} \right) - \log(1/\delta)  \right)
	\end{align}
	with probability at least $1 - \delta$ for a fixed $T$. Taking the union bound over all $T \in \TT$ and all $k \in [K']$, 
	
	\begin{align}
	    \sum_{\ell \in [k], h \in [H]} \| T_\star(\cdot |x^\ell_h, a^\ell_h) - T(\cdot | x^\ell_h, a^\ell_h) \|_1^2 & \leq 8 \log (K' |\TT| /\delta) + 4 \sum_{\ell, h} \log \frac{ T_\star(x^\ell_{h+ 1} | x^\ell_h, a^\ell_h ) }{ T(x^\ell_{h + 1}, x^\ell_h, a^\ell_h)  }
	\end{align}
	
	 with probability at least $1 - \delta$. 
\end{proof}

\section{Auxiliary Lemmas}

Here state and prove a number of helpful auxiliary results for the main theorems.

\subsection{Simulation lemma}

\simulation*

\begin{proof}
Note that
\begin{align*}
    |v(\pi) - \hat v(\pi) | & \leq  H \| P_\pi - \hat P_\pi \|_1
\end{align*}
where $P_\pi$ denotes the measure over trajectories $\bar\tau = (x_1, y_1, a_1, \ldots, x_H, y_H, a_H)$ including the latent states under the true model with $T$ and $O$ and policy $\pi$. Similarly $\hat P_\pi$ denotes the same measure but under the model with $\hat T$ and $\hat O$. With notation slightly abused, we also use $P_\pi(\bar \tau)$ to denote the density. This density decomposes as 
	\begin{align*}
		P_\pi(\bar \tau) = \rho (x_1) \left(  \prod_{h \in [H - 1] } T_\star(x_{h + 1} | x_h, a_h)  O_\star(y_h | x_h) \pi(a_h |\tau_h) \right)  O_\star(y_H | x_H) \pi(a_H | \tau_H),
	\end{align*}
	where we recall that $\tau_h = (y_{1:h}, a_{1:h - 1})$ is the partial history. $\hat P_\pi$ is analogously defined.
	Consider for now a fixed $\bar \tau$. To bound the total variation distance, we are interested in bounding the differences between $P_\pi(\bar\tau)$ and $\hat P_\pi(\bar \tau)$. For shorthand, we will define the following quantities:
	\begin{align*}
		B_{h} & =  \rho(x_1) O_\star(y_1 | x_1)  \prod_{t \in [h - 1] } T_\star(x_{t + 1} | x_t, a_t) O_\star (y_{t + 1}  | x_{t + 1}),  \\
		\bar \pi_h &  = \prod_{t \in [h]} \pi(a_t | \tau_{t}).
	\end{align*}
	Note that we have the following recursion:
	\begin{align}
	B_{h} = O_\star(y_h  | x_h) T_\star(x_{h} | x_{ h- 1} , a_{h - 1} ) B_{h - 1}
	\end{align}
	where $B_1 = \rho(x_1) O_\star(y_1 | x_1)$.
	We also define $\hat B_h$ analogously with $\hat O$ and $\hat T$.
	To prove Lemma~\ref{lem::sim-lem}, we will recursively apply the following bound.
	
	\begin{restatable}{lemma}{lemrecursion}
		\label{lem::recursion}
		The following inequality holds:
		\begin{align}
		\int_{x_h, y_h, a_h} \bar \pi_h |B_h - \hat B_h | \cdot \diff(x_h, y_h, a_h)  & \leq \bar \pi_{h - 1} \int_{x_h} T_\star(x_h | x_{h -1 }, a_{ h- 1} ) B_{ h- 1} \cdot \| O_\star(\cdot | x_h ) - \hat O(\cdot |x_h) \|_1  \cdot \diff x_h\\
		& \quad + \bar \pi_{ h- 1} B_{ h -1} \cdot  \| T_\star(\cdot | x_{ h- 1}, a_{ h- 1}) - \hat T(\cdot | x_{ h- 1}, a_{ h- 1} ) \|_1  \\
		&\quad + \bar \pi_{ h- 1} | B_{ h- 1} - \hat B_{ h- 1}|
		\end{align}
	\end{restatable}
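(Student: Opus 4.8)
The plan is to unfold the recursion for $B_h$ and $\hat B_h$ by a single step, split the difference with a standard three-term telescope, and then integrate out the newly introduced variables $(x_h,y_h,a_h)$ using the fact that the policy, the emission kernels, and the transition kernels are all normalized. Concretely, using $B_h = O_\star(y_h\,|\,x_h)\,T_\star(x_h\,|\,x_{h-1},a_{h-1})\,B_{h-1}$ and the analogous identity for $\hat B_h$, I would write
\begin{align*}
B_h - \hat B_h &= \big(O_\star(y_h|x_h) - \hat O(y_h|x_h)\big)\,T_\star(x_h|x_{h-1},a_{h-1})\,B_{h-1} \\
&\quad + \hat O(y_h|x_h)\,\big(T_\star(x_h|x_{h-1},a_{h-1}) - \hat T(x_h|x_{h-1},a_{h-1})\big)\,B_{h-1} \\
&\quad + \hat O(y_h|x_h)\,\hat T(x_h|x_{h-1},a_{h-1})\,\big(B_{h-1} - \hat B_{h-1}\big).
\end{align*}

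Next I multiply through by $\bar\pi_h = \bar\pi_{h-1}\,\pi(a_h\,|\,\tau_h)$, apply the triangle inequality, and integrate over $(x_h,y_h,a_h)$. Since $\bar\pi_{h-1}$, $B_{h-1}$, $\hat B_{h-1}$, and $\norm{T_\star(\cdot|x_{h-1},a_{h-1}) - \hat T(\cdot|x_{h-1},a_{h-1})}_1$ do not depend on the fresh variables, each of the three integrals factorizes, and the $a_h$-integral always contributes $\int \pi(a_h|\tau_h)\diff a_h = 1$. For the first term I keep $T_\star(x_h|x_{h-1},a_{h-1})$ under the $x_h$-integral and collapse the $y_h$-integral via $\int |O_\star(y_h|x_h) - \hat O(y_h|x_h)|\diff y_h = \norm{O_\star(\cdot|x_h) - \hat O(\cdot|x_h)}_1$, yielding the first right-hand term. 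For the second term I use $\int \hat O(y_h|x_h)\diff y_h = 1$ (properness of $\hat O$) to leave $\int |T_\star(x_h|x_{h-1},a_{h-1}) - \hat T(x_h|x_{h-1},a_{h-1})|\diff x_h = \norm{T_\star(\cdot|x_{h-1},a_{h-1}) - \hat T(\cdot|x_{h-1},a_{h-1})}_1$. For the third term, $\int \hat O(y_h|x_h)\diff y_h = 1$ together with $\int \hat T(x_h|x_{h-1},a_{h-1})\diff x_h = 1$ collapses everything to $\bar\pi_{h-1}\,|B_{h-1} - \hat B_{h-1}|$, matching the last right-hand term exactly.

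The argument is essentially mechanical; the only thing requiring care is the bookkeeping of which kernel is being integrated (hence normalized to $1$) in which term, and which factor — the true $T_\star,O_\star$ versus the estimates $\hat T,\hat O$ — is retained versus integrated away. There is no genuine obstacle. Once Lemma~\ref{lem::recursion} is in hand, the simulation lemma follows by unrolling this bound from $h = H$ down to $h = 1$ (an induction on $h$), with the base case $B_1 - \hat B_1 = \rho(x_1)\big(O_\star(y_1|x_1) - \hat O(y_1|x_1)\big)$, and then bounding $|v(\pi) - \hat v(\pi)| \le H\,\norm{P_\pi - \hat P_\pi}_1$ by the accumulated total-variation terms.
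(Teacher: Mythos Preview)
Your proposal is correct and essentially identical to the paper's proof: both expand $B_h - \hat B_h$ via the one-step recursion, split by the triangle inequality into the $O_\star - \hat O$, $T_\star - \hat T$, and $B_{h-1} - \hat B_{h-1}$ contributions, and then integrate out $(a_h, y_h, x_h)$ using the normalization of $\pi$, $\hat O$, and $\hat T$. The only cosmetic difference is that the paper applies the triangle inequality in two stages interleaved with the $y_h$- and $x_h$-integrations, whereas you write the full three-term telescope first and then integrate; the resulting bounds are the same.
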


Summing over all possible latent-augmented trajectories, this implies that we have
\begin{align}
\int_{\bar \tau} | P_\pi(\bar \tau) - P_\pi(\bar \tau)| \cdot \diff \bar \tau & = \int_{\bar \tau}  \bar \pi_H| B_H - \hat B_H | \cdot \diff \bar \tau \\
& \leq \sum_{h}  \int_{x_h} P_\pi(x_h) \cdot \| O_\star(\cdot | x_h) - \hat O(\cdot | x_h) \|_1  \cdot \diff x_h \\
&  \quad + \sum_{h \in [ H - 1] }  \int_{x_h, a_h} P_\pi(x_h, a_h)\cdot \| T_\star(\cdot |x_{h}, a_h) - \hat T(\cdot | x_{h}, a_{h}) \|_1 \\
& = \E_\pi \sum_{h}  \| O_\star(\cdot | x_h) - \hat O(\cdot | x_h) \|_1  + \E_{\pi}\sum_{h \in [H - 1] } \| T_\star(\cdot |x_{h}, a_h) - \hat T(\cdot | x_{h}, a_{h}) \|_1.
\end{align}

\end{proof}

\subsubsection{Proof of Lemma~\ref{lem::recursion}}

\lemrecursion*

	\begin{proof}
	We first average out $a_h$ and then apply the triangle inequality to bound the quantity in terms of the difference in emission matrices $\| O(\cdot | x_h) - \hat O(\cdot | x_h) \|_1$:
	\begin{align}
	& \int_{x_h, y_h, a_h} \bar \pi_h |B_h - \hat B_h | \cdot \diff (x_h, y_h, a_h) \\
	& = \bar \pi_{h - 1} \int_{x_h, y_h}  |B_h - \hat B_h | \cdot \diff (x_h, y_h) \\
	& = \bar \pi_{h - 1} \int_{x_h, y_h}  | T_\star(x_h | x_{h - 1}, a_{ h- 1}) O_\star(y_{h} | x_h )B_{h - 1} - \hat T(x_h | x_{h - 1}, a_{ h- 1}) \hat O(y_{h} | x_h )\hat B_{h - 1} |\cdot \diff (x_h, y_h) \\
	& \leq \bar \pi_{h - 1} \int_{x_h, y_h} T_\star(x_h | x_{h - 1}, a_{h - 1}) B_{ h- 1}  \cdot | O_\star(y_h | x_h) - \hat O(y_h | x_h) | \cdot \diff (x_h, y_h) \\
	& \quad + \bar \pi_{h-1 } \int_{x_h, y_h } \hat O(y_h | x_h) | T_\star(x_{h} | x_{h - 1}, a_{ h -1 } ) B_{ h- 1} - \hat T(x_{h} | x_{h - 1}, a_{ h -1 } ) \hat B_{ h- 1} | \cdot \diff (x_h, y_h) \\
	& =   \underbrace{\bar \pi_{h - 1} \int_{x_h} T_\star(x_h | x_{h - 1}, a_{h - 1}) B_{ h- 1} \cdot \| O_\star(\cdot  | x_h) - \hat O(\cdot | x_h) \|_1\cdot \diff (x_h)}_{\textbf{(I)}} \\
	& \quad +  \bar \pi_{ h- 1} \int_{x_h} | T_\star(x_{h} | x_{h - 1}, a_{ h -1 } ) B_{ h- 1} - \hat T(x_{h} | x_{h - 1}, a_{ h -1 } ) \hat B_{ h- 1} | \cdot \diff (x_h).
	\end{align}
	Now, we can also apply the triangle inequality to the last term on the right side to bound the quantity in terms of the difference in transition matrices:
	\begin{align}
\int_{x_h, y_h, a_h} \bar \pi_h |B_h - \hat B_h | \cdot \diff (x_h, y_h, a_h)
& \leq \textbf{(I)} + \bar \pi_{h - 1} B_{ h- 1} \int_{x_h} | T_\star(x_h | x_{ h- 1}, a_{ h- 1}) - \hat T(x_{ h} | x_{ h- 1}, a_{ h - 1}) | \cdot \diff (x_h)  \\
& \quad + \bar \pi_{h- 1}  \int_{ x_h} \hat T(x_h | x_{ h - 1}, a_{ h - 1} ) | B_{ h- 1} - \hat B_{ h - 1} | \cdot \diff (x_h) \\
& \leq \textbf{(I)} + \bar \pi_{h - 1} B_{ h- 1} \cdot \| T(\cdot | x_{ h- 1}, a_{ h- 1}) - \hat T(\cdot | x_{ h- 1}, a_{ h- 1} ) \|_1 \\
& \quad + \bar \pi_{h- 1} | B_{ h- 1} - \hat B_{ h- 1} | .
	\end{align}
	This concludes the proof.
	\end{proof}

\subsection{Concentration inequalities}

\begin{lemma}[Hoeffding's inequality]\label{lem::hoeffding}
Let $Z_1, \ldots, Z_n$ be a sequence of independent random variables with $Z_i \in [a, b]$ for all $i$ for $-\infty <a \leq b < \infty $. Then
\begin{align*}
    P\left( {1 \over n} \sum_i Z_i - \E[Z_i] \geq (b - a) \sqrt{ \log (1/\delta) \over n }  \right) \leq \delta  .
\end{align*}
\end{lemma}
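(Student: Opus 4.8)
The plan is to use the standard Chernoff (exponential-moment) method together with Hoeffding's lemma bounding the moment generating function of a bounded random variable.

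First I would center the variables: set $W_i = Z_i - \E[Z_i]$, so that $\E[W_i] = 0$ and each $W_i$ takes values in an interval of length $L := b-a$ (namely $[a - \E Z_i,\, b - \E Z_i]$). The event in question is $\{\sum_i W_i \geq n\epsilon\}$ with $\epsilon := (b-a)\sqrt{\log(1/\delta)/n}$, i.e. $t := n\epsilon = (b-a)\sqrt{n\log(1/\delta)}$. For any $\lambda > 0$, Markov's inequality applied to $e^{\lambda \sum_i W_i}$ together with independence gives
\begin{align*}
P\left( \sum_i W_i \geq t \right) \leq e^{-\lambda t} \prod_{i=1}^{n} \E\left[ e^{\lambda W_i} \right].
\end{align*}

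The key step, and the only non-routine one, is Hoeffding's lemma: if $W$ is mean-zero and supported on an interval of length $L$, then $\E[e^{\lambda W}] \leq e^{\lambda^2 L^2/8}$. I would prove this by first using convexity of $x \mapsto e^{\lambda x}$ to dominate it on the support interval $[c, c+L]$ by the affine interpolant through the endpoints; taking expectations and using $\E[W]=0$ rewrites $\log \E[e^{\lambda W}]$ as $\psi(\lambda) := -\lambda p L + \log\!\left(1 - p + p e^{\lambda L}\right)$ for an appropriate $p \in [0,1]$ determined by $c$. A direct computation shows $\psi(0) = \psi'(0) = 0$ and $\psi''(u) \leq L^2/4$ for all $u$ (the second derivative is the variance of a two-point random variable taking values in $\{0,L\}$, hence at most $L^2/4$), so Taylor's theorem yields $\psi(\lambda) \leq \lambda^2 L^2/8$.

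Plugging this back in gives $P(\sum_i W_i \geq t) \leq \exp\!\left(-\lambda t + n\lambda^2 L^2/8\right)$, and optimizing the exponent over $\lambda > 0$ at $\lambda^\star = 4t/(nL^2)$ produces the sub-Gaussian tail $\exp\!\left(-2t^2/(nL^2)\right)$. With $t = (b-a)\sqrt{n\log(1/\delta)}$ and $L = b-a$ this equals $\exp(-2\log(1/\delta)) = \delta^2 \leq \delta$, which establishes the claim (and in fact a little more: the tight threshold would be $(b-a)\sqrt{\log(1/\delta)/(2n)}$, so the stated deviation level leaves room to spare). The main obstacle is the proof of Hoeffding's lemma; everything else is a mechanical application of the Chernoff bound.
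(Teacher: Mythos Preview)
Your proof is correct and is precisely the standard Chernoff-method derivation of Hoeffding's inequality, including the usual proof of Hoeffding's lemma via convexity and a second-derivative bound. The paper itself does not supply a proof of this lemma at all; it is simply stated as a known concentration inequality in the auxiliary lemmas appendix. So there is nothing to compare against: you have filled in a proof that the paper omits, and your observation that the stated deviation level $(b-a)\sqrt{\log(1/\delta)/n}$ is loose by a factor of $\sqrt{2}$ relative to the sharp Hoeffding bound is also accurate.
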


\begin{lemma}[Bernstein's inequality]\label{lem::bernstein}

Let $Z_1, \ldots, Z_n$ be a sequence of independent random variables with $Z_i \in [0, 1]$ and variance $\var(Z_i) = \sigma^2$ and mean $\E[Z_i] = \mu$ for all $i$. Then, with probability at least $1 - \delta$,
\begin{align*}
    {1 \over n} \sum_i Z_i - \E[Z_i] \leq {\log(1/\delta)  \over 3n}  +    \sqrt{ 2\sigma^2 \log (1/\delta) \over n }   .
\end{align*}
Furthermore, this implies that, for all $c \geq 1$,
\begin{align*}
    {1 \over n} \sum_i Z_i - \mu & \leq {\log(1/\delta)  \over 3n}  +    \sqrt{ 2\mu \log (1/\delta) \over n } \\
    & \leq {\mu \over 2c } + {2 c\log(1/\delta) \over n } .
\end{align*}
\end{lemma}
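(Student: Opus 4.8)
The plan is to derive both displayed inequalities from the classical one-sided Bernstein inequality for sums of independent bounded random variables, followed by two short elementary steps. First I would center the variables: set $X_i := Z_i - \mu$, so that $\E[X_i] = 0$, and since $Z_i \in [0,1]$ and $\mu \in [0,1]$ we have $|X_i| \le \max\{\mu, 1-\mu\} \le 1$, while $\sum_i \var(X_i) = n\sigma^2$. The classical Bernstein tail bound then gives, for every $s > 0$,
$$ \Pr\!\left( \sum_i X_i \ge s \right) \le \exp\!\left( - \frac{s^2/2}{n\sigma^2 + s/3} \right). $$
Setting the exponent equal to $\log(1/\delta)$ and solving the resulting quadratic in $s$, then bounding its positive root using $\sqrt{a+b} \le \sqrt{a} + \sqrt{b}$, yields a threshold of the form $n\big( \frac{\log(1/\delta)}{3n} + \sqrt{\frac{2\sigma^2 \log(1/\delta)}{n}} \big)$, up to the usual bookkeeping of constants; dividing through by $n$ gives the first displayed inequality.

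For the first ``furthermore'', the only observation needed is that $Z_i \in [0,1]$ forces $Z_i^2 \le Z_i$ pointwise, so $\sigma^2 = \var(Z_i) \le \E[Z_i^2] \le \E[Z_i] = \mu$; substituting $\sigma^2 \le \mu$ into the first bound produces the second display.

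For the second ``furthermore'', I would apply the AM--GM inequality with a $c$-dependent split: for any $c \ge 1$,
$$ \sqrt{\frac{2\mu \log(1/\delta)}{n}} = 2\sqrt{ \frac{\mu}{2c} \cdot \frac{c\log(1/\delta)}{n} } \le \frac{\mu}{2c} + \frac{c\log(1/\delta)}{n}, $$
while $\frac{\log(1/\delta)}{3n} \le \frac{c\log(1/\delta)}{n}$ since $c \ge 1$; adding these two estimates gives $\frac{\mu}{2c} + \frac{2c\log(1/\delta)}{n}$, as claimed.

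The argument is essentially routine, so there is no genuine obstacle here. The one mildly delicate point is matching the precise constants (the $\tfrac13$ in the linear term, the $2c$ in the final line) when inverting the Bernstein tail; I would handle this by invoking the exact textbook form of Bernstein's inequality and being slightly generous in the square-root subadditivity and AM--GM steps rather than optimizing constants.
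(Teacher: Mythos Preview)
Your proposal is correct and follows essentially the same approach as the paper's proof, which merely sketches that the first display is the standard Bernstein inequality, the second follows from $\sigma^2 \le \mu$ for $[0,1]$-valued variables, and the third from AM--GM. Your version simply fills in the details of each of these three steps.
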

\begin{proof}
The first statement is simply the original statement of Bernstein's inequality. The second uses the fact that $\sigma^2 \leq \mu$ for variables in $[0, 1]$. The last one uses the AM-GM inequality.
\end{proof}

\begin{lemma}[Azuma-Hoeffding]\label{lem::azuma}
	 Let $Z_1, \ldots, Z_n$ be a martingale difference sequence with $|Z_i| \leq G$ for all $i$. Then, with probability at least $1 - \delta$,
	\begin{align}
	 \sum_{i}  Z_i  \leq  4G \sqrt{ n \log(1/\delta) }.
	\end{align}
\end{lemma}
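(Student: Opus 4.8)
The plan is to carry out the standard Chernoff (exponential moment) argument for martingale difference sequences and then absorb the exact constant into the loose factor of $4$. Let $\mathcal{F}_i$ denote the $\sigma$-algebra generated by $Z_1,\dots,Z_i$, so that by assumption $\E[Z_i \mid \mathcal{F}_{i-1}] = 0$ and $Z_i \in [-G,G]$ almost surely. First I would invoke Hoeffding's lemma conditionally on $\mathcal{F}_{i-1}$: a mean-zero random variable supported on an interval of width $2G$ has moment generating function bounded by $\exp(\lambda^2 (2G)^2 / 8) = \exp(\lambda^2 G^2 / 2)$, so $\E[e^{\lambda Z_i} \mid \mathcal{F}_{i-1}] \le e^{\lambda^2 G^2/2}$ for every $\lambda > 0$.

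Next, writing $S_n := \sum_{i=1}^n Z_i$, I would peel off the terms one at a time with the tower rule: $\E[e^{\lambda S_n}] = \E\bigl[ e^{\lambda S_{n-1}} \,\E[e^{\lambda Z_n} \mid \mathcal{F}_{n-1}] \bigr] \le e^{\lambda^2 G^2/2}\, \E[e^{\lambda S_{n-1}}]$, and iterating down to $S_0 = 0$ gives $\E[e^{\lambda S_n}] \le e^{n\lambda^2 G^2/2}$. Markov's inequality then yields $\Pr(S_n \ge t) \le e^{-\lambda t + n\lambda^2 G^2/2}$ for all $\lambda, t > 0$, and taking $\lambda = t/(nG^2)$ to minimize the exponent produces the sub-Gaussian tail $\Pr(S_n \ge t) \le \exp\bigl(-t^2/(2nG^2)\bigr)$.

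Finally, setting this bound equal to $\delta$ and solving gives $t = G\sqrt{2n\log(1/\delta)}$; since $\sqrt{2} \le 4$, the event $S_n \le 4G\sqrt{n\log(1/\delta)}$ holds with probability at least $1-\delta$, which is the claim. I expect no real obstacle: the only points requiring care are that Hoeffding's lemma is applied to each increment conditionally on the past (so the single-step bounds chain multiplicatively), and that the statement is one-sided, so no extra factor of $2$ from a union bound over $\{S_n \ge t\}$ and $\{-S_n \ge t\}$ is needed. The deliberately crude constant $4$ in place of $\sqrt{2}$ means no constants need to be tracked tightly anywhere. Alternatively, one could simply cite the Azuma-Hoeffding inequality from a standard reference and relax its constant.
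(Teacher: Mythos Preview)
Your proof is correct and follows the standard Chernoff/exponential-moment argument for Azuma--Hoeffding. The paper itself does not prove this lemma; it simply states it as a standard auxiliary concentration result, so there is no paper proof to compare against. Your last sentence (``one could simply cite the Azuma--Hoeffding inequality from a standard reference and relax its constant'') is exactly what the paper does implicitly.
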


\subsection{Pigeonhole lemmas}

\begin{lemma}[Pigeonhole Principle] \label{lem::pigeon}
	The following inequalities hold:
	\begin{align}
\sum_{k \in [K], h \in [H] }  \sqrt{1 \over \max\{ 1, n_k(x_h^k) \} } \leq  HX +  3 \sqrt{ HX K   }  
	\end{align}
	and
	\begin{align}
\sum_{k \in [K], h \in [H] }  \sqrt{1 \over \max\{ 1, n_k(x_h^k, a_h^k) \} } \leq  HXA +  3 \sqrt{ HXA K   }.
	\end{align}

\end{lemma}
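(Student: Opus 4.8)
The plan is to prove both inequalities by the same argument, so I will describe the first in detail; the second is obtained verbatim by replacing the latent state $x$ with the pair $(x,a)$ and the count $X$ with $XA$. First I would reorganize the double sum over $(k,h)$ by grouping all time steps that visit a fixed latent state $x$. Write $N_K(x) := \sum_{k\in[K],h\in[H]}\mathbf{1}\{x^k_h = x\}$ for the total number of visits to $x$ over the run, and list these visits in chronological order (lexicographic in $(k,h)$) as $i = 1,\dots,N_K(x)$; the $i$-th visit contributes $1/\sqrt{\max\{1,n^{(i)}\}}$ to the sum, where $n^{(i)}$ denotes the value of $n_k(x)$ in effect at that visit. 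Since every step visits exactly one state, $\sum_x N_K(x) = KH$, so the whole estimate reduces to controlling $\sum_{i=1}^{N_K(x)} 1/\sqrt{\max\{1,n^{(i)}\}}$ for a single state and then summing over states.

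The key observation is the lower bound $n^{(i)} \ge i - H$. Indeed, by the definition in Algorithm~\ref{alg:tabular}, $n_k(x)$ counts only visits to $x$ in episodes strictly before $k$; if visit $i$ is the $p$-th visit to $x$ within its own episode, then the $i-1$ visits preceding it split into $n^{(i)}$ visits from earlier episodes and $p-1$ from the current episode, so $n^{(i)} = i - p$, and $p \le H$ because an episode has only $H$ steps. Hence $n^{(i)} \ge i - H$ (and trivially $n^{(i)} \ge 0$). Since $t\mapsto\max\{1,t\}$ is nondecreasing, this gives $\sum_{i=1}^{N_K(x)} 1/\sqrt{\max\{1,n^{(i)}\}} \le \sum_{i=1}^{N_K(x)} 1/\sqrt{\max\{1,\,i-H\}}$. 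I would then split the right-hand side at $i = H$: the first $\min\{H,N_K(x)\}$ terms are each at most $1$, contributing at most $H$; the remaining terms (if any) are $\sum_{m=1}^{N_K(x)-H} m^{-1/2} \le 2\sqrt{N_K(x)-H} \le 2\sqrt{N_K(x)}$ by the standard integral comparison. This yields the per-state bound $\sum_{i=1}^{N_K(x)} 1/\sqrt{\max\{1,n^{(i)}\}} \le H + 2\sqrt{N_K(x)}$.

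Finally I would sum over states. Only the at most $X$ states with $N_K(x) > 0$ contribute any terms, so $\sum_{k,h} 1/\sqrt{\max\{1,n_k(x^k_h)\}} \le HX + 2\sum_x \sqrt{N_K(x)}$, and Cauchy--Schwarz over the (at most $X$) visited states gives $\sum_x \sqrt{N_K(x)} \le \sqrt{X}\,\sqrt{\sum_x N_K(x)} = \sqrt{XKH}$. Combining, $\sum_{k,h} 1/\sqrt{\max\{1,n_k(x^k_h)\}} \le HX + 2\sqrt{HXK} \le HX + 3\sqrt{HXK}$, which is the claimed bound (the constant $3$ comfortably absorbs the $2$). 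The state-action inequality follows identically using $\sum_{x,a} N_K(x,a) = KH$.

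The only genuine subtlety — and the reason the bound carries the additive $HX$ term while still having only a single factor of $\sqrt{H}$ rather than $H^{3/2}$ — is the bookkeeping around repeated visits to the same state within a single episode: because $n_k(x)$ is frozen at its start-of-episode value, naively bounding the within-episode multiplicity by $H$ and pulling it out of the sum would cost an extra $\sqrt{H}$. Treating the $KH$ visits individually and using the shift $i \mapsto i-H$ (rather than the cleaner $i-1$ one would get with per-step counts) is precisely what avoids this; everything else is a routine integral comparison followed by one Cauchy--Schwarz step.
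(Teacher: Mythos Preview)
Your proof is correct and follows essentially the same strategy as the paper's: group the sum by state, absorb the within-episode staleness of $n_k(x)$ into an additive $HX$, bound the remainder by $O(\sqrt{n_K(x)})$ per state, and finish with Cauchy--Schwarz. The only difference is in how the middle step is executed: the paper first passes from $\max\{1,n_k(x)\}$ to $\max\{H,n_k(x)\}$ at the cost of $HX$ and then invokes a separate Auer-style induction lemma (Lemma~\ref{lem::pigeon-supporting}) to get the factor $3\sqrt{n_K(x)}$, whereas you list visits individually, use the shift $n^{(i)}\ge i-H$, and appeal directly to the integral comparison $\sum_{m=1}^M m^{-1/2}\le 2\sqrt{M}$ --- this is a bit more elementary and even yields the slightly better constant $2$ in place of $3$.
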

\begin{proof}
	We prove only the first as the second is equivalent up to summations over the actions. Note that 
	\begin{align}
	\sum_{k \in [K], h \in [H] }  \sqrt{1 \over \max\{ 1, n_k(x_h^k) \}} & = \sum_{x}  \sum_{k = 1}^K  { m_k(x) \over  \sqrt{ \max\{ 1, n_k(x) \} } }  \\
	& \leq X H + \sum_{x}  \sum_{k = 1}^K  { m_k(x) \over  \sqrt{ \max\{ H, n_k(x) \} } },
	\end{align}where $m_k(x) = \sum_{h \in [H]} \1 \{  x^k_h = x \}$ counts the number of occurrences of $x$ in a single round $k$. The inequality uses the fact that, for any $x$, the summand can contribute at most $H$ to the sum (because $m_k(x)$ is bounded by $H$) before $n_k(x)$ has value at least $H$. Now we use Lemma~\ref{lem::pigeon-supporting} (which is adapted from Lemma~19 of \cite{auer2008near}) to bound the second term:
	\begin{align}
	\sum_{k \in [K], h \in [H] }  \sqrt{1 \over \max\{ H, n_k(x_h^k) \}} & \leq X H +  3\sum_x \sqrt{ n_K(x) } \\
	& \leq X H + 3 \sqrt{ HX K }, 
	\end{align}
	where the last line follows from the Cauchy-Schwarz inequality along with the fact that $\sum_{x} n_K(x) = KH$.

\end{proof}	
	
	\begin{lemma}[Adapted from \citet{auer2008near}]\label{lem::pigeon-supporting}
		Let $z_1, \ldots, z_n \in [0, H ]$ be an arbitrary sequence and let $Z_k = \max\{ H, \sum_{k= 1}^{k} z_k \}$. Then,\begin{align}
		\sum_{k \in [n] } { z_k \over \sqrt{ Z_{k - 1} } } \leq 3 \sqrt{Z_n}.
		\end{align}
	\end{lemma}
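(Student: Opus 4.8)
The plan is to adapt the standard potential-function argument (Lemma~19 of \citet{auer2008near}) to the floor value $H$: first record a few monotonicity facts, then split the sum at the index where the running sum first reaches $H$, and telescope a difference-of-square-roots bound on the remaining tail. The resulting constant will come out as $1+\sqrt2 < 3$.

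First I would observe that, since each $z_k \ge 0$, the sequence $(Z_k)$ is nondecreasing with $Z_k \ge H$ for every $k$ (with $Z_0 = \max\{H,0\} = H$ for the empty sum), and that $z_k \le H \le Z_{k-1}$. From $Z_k = \max\{H,\sum_{j\le k} z_j\} \le Z_{k-1} + z_k$ and $z_k \le Z_{k-1}$ it follows that $Z_k \le 2Z_{k-1}$, hence $\sqrt{Z_k} \le \sqrt2\,\sqrt{Z_{k-1}}$ and $\sqrt{Z_k}+\sqrt{Z_{k-1}} \le (1+\sqrt2)\sqrt{Z_{k-1}}$.

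Next I would let $k_0$ be the first index with $\sum_{j=1}^{k_0} z_j \ge H$; if no such index exists then $Z_k \equiv H$ and the sum equals $\frac1{\sqrt H}\sum_k z_k < \sqrt H \le \sqrt{Z_n}$, so assume $k_0$ exists. For $k \le k_0$ we have $Z_{k-1} = H$, so $\sum_{k\le k_0} z_k/\sqrt{Z_{k-1}} = \frac1{\sqrt H}\sum_{j\le k_0} z_j = Z_{k_0}/\sqrt H \le \sqrt2\,\sqrt{Z_{k_0}}$, where I use that $\sum_{j<k_0}z_j < H$ and $z_{k_0}\le H$ give $H \le Z_{k_0} < 2H$. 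For $k > k_0$ the floor is inactive at both $k$ and $k-1$, so $Z_k - Z_{k-1} = z_k$; factoring the difference of squares and using the bound above, $z_k = (\sqrt{Z_k}-\sqrt{Z_{k-1}})(\sqrt{Z_k}+\sqrt{Z_{k-1}}) \le (1+\sqrt2)\sqrt{Z_{k-1}}(\sqrt{Z_k}-\sqrt{Z_{k-1}})$, i.e.\ $z_k/\sqrt{Z_{k-1}} \le (1+\sqrt2)(\sqrt{Z_k}-\sqrt{Z_{k-1}})$, which telescopes to $(1+\sqrt2)(\sqrt{Z_n}-\sqrt{Z_{k_0}})$. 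Adding the two contributions gives $\sum_{k\in[n]} z_k/\sqrt{Z_{k-1}} \le \sqrt2\,\sqrt{Z_{k_0}} + (1+\sqrt2)(\sqrt{Z_n}-\sqrt{Z_{k_0}}) = (1+\sqrt2)\sqrt{Z_n} - \sqrt{Z_{k_0}} \le 3\sqrt{Z_n}$.

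The main point requiring care is the crossing index $k_0$: the clean telescoping identity $Z_k - Z_{k-1} = z_k$ holds only once the running sum has strictly overtaken the floor $H$, so the prefix through $k_0$ must be handled separately via the crude estimate $Z_{k_0} < 2H$ (this is where the assumption $z_k \le H$ is used twice). Apart from isolating this boundary term, the argument is routine algebra.
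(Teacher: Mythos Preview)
Your proof is correct and takes a genuinely different route from the paper. The paper argues by induction on $n$: assuming $\sum_{k<n} z_k/\sqrt{Z_{k-1}} \le 3\sqrt{Z_{n-1}}$, it squares $3\sqrt{Z_{n-1}} + z_n/\sqrt{Z_{n-1}}$, uses $z_n^2/Z_{n-1} \le z_n$ to bound the result by $9Z_{n-1} + 7z_n$, and then claims this is at most $9Z_n$. Your approach instead splits the sum at the first index $k_0$ where the running sum reaches the floor $H$, bounds the prefix directly via $Z_{k_0} < 2H$, and telescopes the tail through the identity $z_k = Z_k - Z_{k-1}$, which is valid precisely once the floor is inactive. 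Your argument yields the sharper constant $1+\sqrt{2}$ and, by isolating the crossing index, makes explicit the one delicate point---that $Z_k - Z_{k-1}$ can be strictly smaller than $z_k$ while the floor is still binding---whereas the paper's inductive step $9Z_{n-1}+7z_n \le 9Z_n$ tacitly assumes $Z_n \ge Z_{n-1} + \tfrac{7}{9}z_n$, which need not hold in that regime. Both routes are short; yours is more careful at the boundary and extracts a better constant, while the paper's induction is marginally more compact when one is willing to be loose about the floor.
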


\begin{proof}
	Consider the case where $n = 1$. Then, $Z_0 = H$. Furthermore,
	\begin{align}
	\sum_{k \in [n] } {z_k \over \sqrt{ Z_k - 1} } & =  { z_1 \over \sqrt{H}   }  \leq \sqrt{H} \leq 3 \sqrt{Z_1}
	\end{align}
	
	By induction on the base case, we have
	\begin{align}
	\sum_{k \in [n] } { z_k \over \sqrt{ Z_{k - 1} } } & = 3 \sqrt{Z_{n - 1}} +  { z_n \over \sqrt{Z_{n - 1}}  }
	\end{align}
	\begin{align}
	\left(3 \sqrt{Z_{n - 1}} +  { z_n \over \sqrt{Z_{n - 1}}  } \right)^2 & = 9 Z_{n - 1} + 6z_n + { z_n^2 \over Z_{n - 1}}  \\
	& \leq 9 Z_{n - 1} + 7 z_n \\
	& \leq 9 Z_{n}   
	\end{align}
	Therefore, by taking the square root,
	\begin{align}
	\sum_{k \in [n] } { z_k \over \sqrt{ Z_{k - 1} } } & \leq 3 \sqrt{ Z_{n} } 
	\end{align}
\end{proof}

\begin{lemma}\label{lem::pigeon2}
	The following inequality holds:
	\begin{align}
	 \sum_{k \in [K], h\in [H]} {1 \over \max \left\{   1, n_k(x_h^k, a_h^k) \right\}} \leq HXA (1 + \log K ).
	\end{align}
	
	\begin{proof}
		First note that since $n_k(x, a)$ is updated each episode, we immediately have
		\begin{align}
		 \sum_{k \in [K], h\in [H]} {1 \over \max \left\{   1, n_k(x_h^k) \right\}} & \leq   HX A \sum_{i =1}^{ \ceil{K/ XA} } {1 \over i} \\
		 &  \leq HX A \sum_{i =1}^{ K } {1 \over i}
		\end{align}
		since we assume that $X, A \geq 1$.  Then,
		\begin{align}
		 \sum_{i \in [K]} \frac{1}{i} \leq  1+ \int_1^K \frac{dx}{x}   \\
		 = 1 + \log K .
		\end{align}
	\end{proof}
\end{lemma}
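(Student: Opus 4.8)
The plan is to reorganize the double sum by the latent state--action pair visited at each step, and then bound the contribution of each pair by a harmonic series. For $(x,a)\in\XX\times\AA$ and a round $\ell$, write $m_\ell(x,a)=\sum_{h\in[H]}\1\{x^\ell_h=x,\ a^\ell_h=a\}$ for the number of visits to $(x,a)$ during round $\ell$, so that, by the definition of the counters in Algorithm~\ref{alg:tabular}, $n_k(x,a)=\sum_{\ell<k}m_\ell(x,a)$. Grouping the terms according to which pair is visited gives
$$\sum_{k\in[K],\,h\in[H]}\frac{1}{\max\{1,n_k(x^k_h,a^k_h)\}}=\sum_{(x,a)\in\XX\times\AA}\ \sum_{k\in[K]}\frac{m_k(x,a)}{\max\{1,n_k(x,a)\}},$$
and since $|\XX\times\AA|=XA$ it suffices to bound each inner sum by $H(1+\log K)$.

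First I would fix $(x,a)$ and let $k_1<k_2<\dots<k_J$ with $J\le K$ be the rounds in which $(x,a)$ is visited at least once; rounds with $m_k(x,a)=0$ contribute nothing and can be dropped. For the $j$-th such round two crude bounds suffice: the numerator obeys $m_{k_j}(x,a)\le H$ since each round has only $H$ steps, and the denominator obeys $n_{k_j}(x,a)=\sum_{i<j}m_{k_i}(x,a)\ge j-1$ since each earlier visiting round contributes at least one visit. Hence the inner sum is at most $\sum_{j=1}^{J}\frac{H}{\max\{1,j-1\}}=H\big(1+\sum_{i=1}^{J-1}\tfrac1i\big)\le H\big(1+\sum_{i=1}^{K}\tfrac1i\big)$, and bounding the harmonic sum by $\sum_{i=1}^K\frac1i\le 1+\int_1^K\frac{dx}{x}=1+\log K$ and summing over the $XA$ pairs gives the stated $HXA(1+\log K)$ (after absorbing an unimportant additive constant into the logarithmic factor).

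The only point requiring a little care---and the reason one cannot simply telescope $\frac{m_{k_j}(x,a)}{n_{k_j}(x,a)}$ into $\log\big(n_{k_{j+1}}(x,a)/n_{k_j}(x,a)\big)$---is that $n_k(x,a)$ can jump by as much as $H$ between two consecutive visiting rounds (when an entire trajectory stays in the same cell), so $\frac{b-a}{a}$ is not controlled by $\log(b/a)$ in general. The remedy is exactly the one used above: decouple the $H$ from the harmonic behaviour by bounding the numerator once and for all by $H$, and lower-bounding $n_{k_j}$ by the number $j-1$ of \emph{prior visiting rounds} rather than by the prior visit count. This is the same bookkeeping device already isolated in Lemma~\ref{lem::pigeon-supporting}, the ``$H$ per cell until the counter warms up, harmonic tail afterwards'' split, so no new ingredient beyond that lemma is needed.
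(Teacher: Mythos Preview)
Your argument is correct and follows the same route as the paper's proof: group the sum by state--action pair and bound each pair's contribution by $H$ times a harmonic series, then use $\sum_{i=1}^K\tfrac1i\le 1+\log K$. Your version is in fact more carefully argued than the paper's (which asserts the intermediate bound $HXA\sum_{i=1}^{\lceil K/XA\rceil}\tfrac1i$ without justification), and the extra additive constant you obtain---yielding $HXA(2+\log K)$ rather than $HXA(1+\log K)$---is, as you note, immaterial since this lemma feeds only a lower-order term.
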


\begin{lemma}[\citet{lattimore2020bandit}]
    \label{lem::potential}
    Let $\Sigma_{k} = \lambda I  + \sum_{\ell \in [k - 1] }  \phi_{\ell} \phi_{\ell}^\top$ and $\| \phi_\ell \| \leq 1$ uniformly. Then, 
    \begin{align}
    \sum_{k \in [K]}  \left( 1 \wedge \| \phi_k\|^2_{\Sigma_k^{-1}} \right) \leq 2 d \log \left( \frac{d \lambda  + K }{d \lambda } \right) .
    \end{align}
    
\end{lemma}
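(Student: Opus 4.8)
The plan is to prove this via the classical determinant-telescoping argument. First I would use the rank-one update $\Sigma_{k+1} = \Sigma_k + \phi_k \phi_k^\top$ together with the matrix determinant lemma to write $\det(\Sigma_{k+1}) = \det(\Sigma_k)\bigl(1 + \phi_k^\top \Sigma_k^{-1} \phi_k\bigr) = \det(\Sigma_k)\bigl(1 + \|\phi_k\|^2_{\Sigma_k^{-1}}\bigr)$. Telescoping over $k \in [K]$ and using $\det(\Sigma_1) = \det(\lambda I) = \lambda^d$ gives the product identity $\prod_{k \in [K]} \bigl(1 + \|\phi_k\|^2_{\Sigma_k^{-1}}\bigr) = \det(\Sigma_{K+1})/\lambda^d$.

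Next I would pass from the sum of clipped quadratic forms to a sum of logarithms using the elementary inequality $1 \wedge x \le 2\log(1 + x)$, which holds for every $x \ge 0$: on $[0,1]$ the function $x \mapsto 2\log(1+x) - x$ vanishes at $0$ and has nonnegative derivative $(1-x)/(1+x)$, while for $x \ge 1$ one has $1 \le 2\log 2 \le 2\log(1+x)$. Applying this with $x = \|\phi_k\|^2_{\Sigma_k^{-1}}$ and summing, $\sum_{k \in [K]} \bigl(1 \wedge \|\phi_k\|^2_{\Sigma_k^{-1}}\bigr) \le 2\sum_{k \in [K]} \log\bigl(1 + \|\phi_k\|^2_{\Sigma_k^{-1}}\bigr) = 2\log\bigl(\det(\Sigma_{K+1})/\lambda^d\bigr)$ by the product identity from the previous step.

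Finally I would control $\det(\Sigma_{K+1})$ by AM--GM on its (nonnegative) eigenvalues: since $\mathrm{tr}(\Sigma_{K+1}) = d\lambda + \sum_{k \in [K]} \|\phi_k\|^2 \le d\lambda + K$ by the uniform bound $\|\phi_\ell\| \le 1$, the product of the eigenvalues satisfies $\det(\Sigma_{K+1}) \le (\mathrm{tr}(\Sigma_{K+1})/d)^d \le ((d\lambda + K)/d)^d$. Substituting yields $\sum_{k \in [K]} \bigl(1 \wedge \|\phi_k\|^2_{\Sigma_k^{-1}}\bigr) \le 2\log\bigl( ((d\lambda + K)/d)^d / \lambda^d \bigr) = 2d\log\frac{d\lambda + K}{d\lambda}$, as claimed. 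The proof is essentially routine; the only mildly delicate point, and the one responsible for the constant $2$, is the clipping inequality $1 \wedge x \le 2\log(1+x)$. Alternatively one can simply invoke \citet{lattimore2020bandit} verbatim, since the statement is reproduced there.
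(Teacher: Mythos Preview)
Your proof is correct and is precisely the standard determinant-telescoping argument for the elliptical potential lemma. The paper itself does not prove this lemma at all; it simply cites \citet{lattimore2020bandit}, so your proposal in fact supplies the proof that the cited reference contains.
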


\end{document}